\documentclass[11pt]{article}
\usepackage{authblk}

\usepackage{natbib}
\usepackage{fullpage,amsthm,amsmath,amssymb}
\usepackage{thmtools}
\usepackage{thm-restate}
\usepackage{algorithm,algorithmic}
\usepackage{mathtools}
\usepackage{bbm,bm}
\usepackage{enumerate}
\usepackage{cancel}
\usepackage{xspace}
\usepackage[textsize=tiny]{todonotes}

\usepackage{titletoc}

\usepackage[colorlinks,citecolor=blue,urlcolor=blue,linkcolor=black]{hyperref}

\usepackage[capitalize]{cleveref}
\crefname{ALC@unique}{line}{lines}
\Crefname{ALC@unique}{Line}{Lines}

\usepackage{booktabs}
\usepackage{tabularx}

\theoremstyle{definition}

\newtheorem{definition}{Definition}

\newtheorem*{assumption*}{Assumption}

\theoremstyle{remark}
\newtheorem{remark}{Remark}
\newtheorem*{remark*}{Remark}

\theoremstyle{plain}
\newtheorem{theorem}{Theorem}[section]
\newtheorem{lemma}{Lemma}[section]
\newtheorem{corollary}{Corollary}[section]
\newtheorem{proposition}{Proposition}[section]

\newcommand{\ip}[1]{\langle #1 \rangle}
\DeclareMathOperator{\argmax}{argmax}
\DeclareMathOperator{\Tr}{Tr}

\newcommand{\set}[1]{\left\{#1\right\}}
\newcommand{\EE}[1]{\mathbb{E}\left[#1\right]}

\newcommand{\linucb}{\text{LinUCB}\xspace}

\newcommand{\oope}{\text{OOPE}\xspace}

\usepackage{theorems}          
\usepackage{mathmacros}        
\usepackage{projectmacros}     

\usepackage{array}
\usepackage{makecell}

\usepackage{nomencl}
\makenomenclature

\title{Gaussian-Width Bounds for Offline-to-Online Linear Bandits}

\author[]{Kushagra Chandak\protect\footnotemark[1]}
\author[]{Volodymyr Tkachuk\protect\footnotemark[1]}
\author[]{Xiaoqi Tan}
\author[]{Csaba Szepesv\'{a}ri}

\affil[]{Department of Computing Science, University of Alberta, Canada}

\affil[]{\small \texttt{\{kchandak, vtkachuk, xiaoqi.tan, szepesva\}@ualberta.ca}}

\date{\vspace{-5ex}} 

\begin{document}

\renewcommand{\thefootnote}{\fnsymbol{footnote}}
\maketitle
\footnotetext[1]{equal contribution}
\renewcommand{\thefootnote}{\arabic{footnote}}
\setcounter{footnote}{0}

\begin{abstract}
We study how offline data can help online learning in stochastic linear bandits. We focus on partially covered offline data, where only $k$ of the $d$ feature directions are poorly covered. Existing analyses of algorithms based on phased elimination and optimism give regret bounds of order $\widetilde O(\sqrt{dkT})$ in this regime. We show that this dependence arises from loose control of the reward prediction error, and obtain tighter bounds using concentration of Lipschitz functions of Gaussian vectors. Based on this analysis, we develop a phased elimination algorithm with regret $\widetilde O(k\sqrt T)$. For the unit Euclidean ball, we prove a matching regret lower bound under partial coverage, as well as matching upper and lower bounds under uniform offline coverage. We also study pure exploration with offline data, providing algorithms and matching upper and lower bounds on simple regret for fixed-design algorithms and also for adaptive algorithms under partial and uniform coverage of offline data.
\end{abstract}

\section{Introduction}

We consider online learning in the stochastic linear bandit framework, where a learner interacts sequentially with an environment. In every round $t$, the learner chooses an action $A_t \in \cA \subset \R^d$, and receives a reward $R_t = A_t^\top \theta_* + \eta_t$, where $\theta_* \in \R^d$ is an unknown parameter and $\eta_t$ is zero-mean noise. The goal of the learner is to choose actions to maximize the total reward it collects over all the interactions. Equivalently, it aims to minimize its \emph{regret}, that is, how much it suffers for not choosing an optimal action.

In this paper, we study a variant of regret minimization in linear bandits, where the learner also has access to an offline dataset before the start of online interactions. 
Such a dataset can provide valuable hints to the learner so we expect it to be useful, but how? 
In this \emph{offline-to-online} learning problem, we focus on how we can utilize the offline dataset and how much improvement it can provide over pure online learning.

The benefit of offline data depends critically on its coverage of the feature space. Sufficient coverage can largely resolve uncertainty before online interaction begins, bringing the problem closer to offline learning. At the opposite extreme, uniformly poor coverage provides little useful information, bringing the problem closer to pure online learning. These extremes offer less insight into the distinctive challenges of offline-to-online learning. Partial coverage is therefore particularly interesting as the offline data may accurately identify some components of the unknown parameter while leaving others uncertain, requiring online exploration to complement the available information. 

For partially covered offline data, our key observation is that the existing regret guarantees for algorithms like LinUCB \citep[Chapter 19]{abbasi2011improved, lattimore2020bandit} and phased elimination \citep[Chapter 22]{lattimore2020bandit}, which are optimal for several action sets in the online case, are no longer optimal when augmented with offline data \citep{vijayan2026regret,chandak2026offlinetoonline}.
By comparison, in the unstructured multi-armed bandit (MAB) setting, a UCB variant that computes the means and confidence radii by simply combining the offline and online data is optimal \citep{shivaswamy2012multi, cheung2024leveraging}. This contrast motivates a closer examination of how linear structure affects the use of partially covered offline data.

To illustrate the gap, consider the case of partially covered offline data and the unit $d$-dimensional Euclidean ball as the action set.
Suppose $k$ out of $d$ directions have very little offline data and the remaining $d-k$ directions have a large amount (see \Cref{def:spectral-coverage-regimes}). Previous offline-to-online algorithms based on phased elimination \citep{vijayan2026regret} and LinUCB \citep{chandak2026offlinetoonline} achieve a regret bound of $\tilde O(\sqrt{dkT})$, where $T$ is the number of online rounds. More details are in \Cref{sec:oope-linucb}. However, since there are only $k$ uncovered directions in the offline data, the problem is effectively $k$-dimensional, and the lower bound \citep{vijayan2026regret} suggests that the regret should be $\Omega(k\sqrt{T})$. Therefore, there is a multiplicative $\sqrt{d/k}$ gap in the upper and lower bounds, which could be significant in high-dimensional problems. \Cref{tab:results} provides a summary.

Our main insight is that looseness in LinUCB and phased elimination with offline data is due to conservative bounds on the reward prediction error. These algorithms typically estimate $\theta_*$ using a least-squares estimator $\hat \theta$, and control the prediction error $a^\top(\theta_* - \hat \theta)$ where $a \in \cA$. With offline data that has good coverage in some direction $i$, we expect $\hat \theta$ to be a good estimate of $\theta_*$ in that direction, that is, $(\theta_* - \hat \theta)_i \approx 0$. Therefore, components of actions in direction $i$ contribute $\approx 0$ to the prediction error. That is, the prediction error is insensitive to action components in directions that are well covered by the offline data.
However, the previous analyses \citep{vijayan2026regret, chandak2026offlinetoonline} do not fully exploit this low sensitivity. \Cref{sec:oope-linucb} explains how this leads to loose regret guarantees.

Based on the works of \citet{pmlr-v336-maiti26a} and \citet{wagenmaker2021experimental}, we provide finer control of the reward prediction error uniformly over the action set using concentration of Lipschitz functions, and present a phased elimination algorithm with an upper bound on its regret.
To complement the upper bound, we provide a matching lower bound for various cases of offline coverage, when the action set is the unit $d$-dimensional Euclidean ball. 
We also analyze the pure exploration setting, where the learner's performance is evaluated only on the final action it outputs, also known as simple regret. We provide an algorithm with a bound on its simple regret, and also provide lower bounds for adaptive and non-adaptive algorithms. To summarize, our key contributions are the following. 
\begin{enumerate}
    \item We identify limitations of existing regret guarantees for standard bandit algorithms like phased elimination and LinUCB when augmented with offline data due to conservative bounds on the reward prediction error.
    \item Using a tighter bound on the prediction error, we design a phased elimination algorithm incorporating offline data and show an upper bound on its regret. We show a regret lower bound when the action set is the unit Euclidean ball in $\R^d$, and show that it matches the upper bound of the phased elimination algorithm up to log factors for various cases of offline dataset coverage.
    \item For offline-to-online pure exploration, we provide a non-adaptive algorithm with matching upper and lower bounds on simple regret. We also prove a lower bound for adaptive algorithms when the action set is the unit $d$-dimensional Euclidean ball, and show that it matches the upper bound of the non-adaptive algorithm for many offline data coverage regimes as in the cumulative regret case.
\end{enumerate}

The paper is organized as follows. In \Cref{sec:setting}, we describe our problem setting. Next, in \Cref{sec:oope-linucb}, we discuss the issues faced by phased elimination and LinUCB algorithms with partial coverage of offline data, and suggest a fix for the phased elimination algorithm. As a warmup to the full regret setting, we analyze the simpler pure exploration problem in \Cref{sec:pureexp}. Finally, we present our phased elimination algorithm and its analysis with regret lower bounds in \Cref{sec:gw-oope}. The related work and proofs for all the results are in the appendix. A summary of the notation used is provided in \Cref{tab:notation}.

\begin{table}[t]
    \centering
    \small
    \renewcommand{\arraystretch}{1.25}
    \begin{tabularx}{\columnwidth}{>{\raggedright\arraybackslash}X>{\centering\arraybackslash}p{0.30\columnwidth}}
        \toprule
        \textbf{Algorithm} & \textbf{$k$-uncovered} \\
        \midrule
        LinUCB [\Cref{app:linucb}]
        & \textcolor{red!80!black}{$\sqrt{k d T}$} \\
        \addlinespace
        \oope \citep{vijayan2026regret}
        & \textcolor{red!80!black}{$\sqrt{k d T}$} \\
        \addlinespace
        \gwoope \textbf{[Our Work]}
        & \textcolor{green!50!black}{$k\sqrt{T}$} \\
        \midrule
        Lower Bound \textbf{[Our Work]}
        & \textcolor{green!50!black}{$k\sqrt{T}$} \\
        \bottomrule
    \end{tabularx}
    \caption{
        Leading expected regret terms in the $k$-uncovered regime of \Cref{def:spectral-coverage-regimes}, where $\cA=B_2^d$.
        Logarithmic factors and online budget requirements are suppressed.
        The OOPE guarantee additionally requires an offline count condition (see \Cref{eq:oope-offline-count-condition}).
    }
    \label{tab:results}
\end{table}

\section{Problem Setting: Linear Bandits with Offline Data} \label{sec:setting}
Let $\cA \subset \R^d$ be a nonempty compact%
\footnote{
    Compactness is assumed to ensure that the relevant quantities remain bounded and that the maximizers used throughout the paper exist. 
}
set of actions available to the learner. 
Before the start of the online interactions, the learner is also given an offline dataset $\Doff =  \paren*{\left(\aoff_s, \roff_s\right)}_{s=1}^{\Toff}$, where $\aoff_s \in \cA$, $\roff_s \in \R$ is the corresponding reward, and $\Toff \in \bN$ is the number of offline samples. 
Similar to previous works \citep{shivaswamy2012multi,banerjee2022artificial,cheung2024leveraging,sentenac2025balancing,vijayan2026regret,chandak2026offlinetoonline}, we consider the case when $\Doff$ is collected by a fixed design, i.e., the number of offline samples $\Toff(a)$ for every $a \in \cA$ is fixed. 
We show in Appendix \ref{app:adapt-offline-data} why an adaptive design can be challenging to deal with. 
The rewards in the offline data are generated as $\roff_s = \ip{\aoff_s, \truetheta} + \noiseoff_s$ where $\truetheta\in\bR^d$ is the unknown parameter, and $\paren*{\noiseoff_s}_{s=1}^{\Toff}$ are iid standard Gaussian noise variables.%
\footnote{
    Gaussianity permits the use of the confidence bounds in \cref{sec:oope-linucb} and the exact independent equal-information offline data splitting in \cref{sec:gw-oope}.
    Ordinary offline data splitting can also be done without Gaussian noise but requires a different method.
}

Given $\Doff$, the learner interacts sequentially with the environment over $T$ rounds. In each round $t \in [T]$, the learner chooses an action $A_t \in \cA$ and observes reward $R_t = \ip{A_t, \truetheta} + \eta_t$ where $\eta_t$ is standard Gaussian conditional on the $\sigma$-algebra generated from the history before $R_t$ is observed, i.e., $\sigma(\Doff, A_1, R_1, \dots, A_{t-1}, R_{t-1}, A_t)$. 
Denote by $\opta \in\argmax_{a\in\cA}\ip{a,\truetheta}$ an optimal action for $\theta_* \in \R^d$.

The learner's goal is to maximize its online total reward, or equivalently, minimize its \emph{pseudo-regret} (henceforth referred to as regret):
\begin{equation*}
    \regret_T(\cA) \defeq \sum_{t=1}^T \ip{\opta-A_t, \truetheta} \,.
\end{equation*}
For regret minimization, we assume that every action has gap at most $2$, which is satisfied under the usual normalization $\cA\subseteq B_2^d$ and $\norm{\truetheta}_2\le1$.
This normalization is not imposed for pure exploration.

For a set $\cX \subseteq \bR^d$ and budget $n\in\bN_0$, let $\alloc_n(\cX)$ denote the set of finitely supported \emph{exact designs} $N \colon \cX\to\bN_0$ satisfying $\sum_{a\in\cX}N(a)=n$.
Write $\alloc(\cX) \defeq \cup_{n\ge0}\alloc_n(\cX)$ and $|N| \defeq \sum_aN(a)$.
Define the offline and the online design matrices as
\begin{align*}
    \Voff 
    &\defeq 
    \textstyle
    \sum_{s=1}^{\Toff}\aoff_s(\aoff_s)^\top, \\[2pt] 
    V(N) 
    &\defeq 
    \textstyle
    \sum_{a\in\supp(N)}N(a)aa^\top.
\end{align*}
The combined design matrix is then given by 
\begin{align*}
    \textstyle
    \, \tilde V(N) \defeq \Voff+V(N).
\end{align*}
When $n=0$, $\alloc_0(\cX)$ consists only of the zero design, and $V(N)=0$.
To keep the presentation simple, we assume throughout that $\Voff$ is invertible.

Although our upper bounds hold for general compact action sets, throughout the paper we use the following coverage regimes to give interpretable bounds when $\cA=B_2^d$.
The rotational symmetry of the Euclidean ball allows us to work in an eigenbasis of $\Voff$ without changing the action set.

\begin{definition}[Spectral coverage regimes for $B_2^d$]
    \label{def:spectral-coverage-regimes}
    Let $\cA=B_2^d$, and let $0<\nu_1\le\cdots\le\nu_d$ be the eigenvalues of $\Voff$.
    We say that the offline data are \emph{equally-covered} if, for a fixed constant $c\in(0,1]$,
    \begin{align*}
        \nu_i\ge c\Toff/d,\qquad i\in[d].
    \end{align*}
    For $1\le k\le d$ and $T\ge1$, we say that the offline data are \emph{$k$-uncovered} if %
    \begin{align*}
        \textstyle
        \nu_i\le1 \text{ for } i\le k,
        \quad \text{and} \quad
        \nu_i\ge(d-k)T/k \text{ for } i>k.
    \end{align*}
    When $k=d$, the second condition is vacuous, so $d$-uncovered means only the first condition is satisfied.
    The constants may be replaced by any other constants without affecting our resulting rates.
\end{definition}

For $\cA=B_2^d$, every offline action has norm at most one, so $\tr(\Voff)\le\Toff$.
Consequently, $\Toff/d$ is the largest amount of coverage that can be available uniformly across all $d$ directions.
The equally-covered condition therefore says that every direction receives at least a constant fraction of the largest coverage that could be provided uniformly across all directions.

In the $k$-uncovered regime, the first condition says that $k$ directions have only constant offline coverage.
The second condition ensures that the remaining $d-k$ directions are sufficiently well covered, since our analysis will show that their contribution to the reward prediction error is no larger than that of the $k$ uncovered directions with $T$ online samples.
Thus, the remaining online problem behaves as though it were $k$-dimensional.
When $k=d$, every direction has only constant offline coverage, effectively recovering the pure online setting.

\begin{remark}[Relation to \citet{vijayan2026regret}] \label{rem:vijayan2026regret}
    The equally-covered condition agrees with their well-explored condition on $B_2^d$, up to constants.
    Our $k$-uncovered condition is analogous to their moderately-explored regime when $k=o(d)$ and their under-explored regime when $k=\Omega(d)$ (see their Table~1).
    The difference is that we additionally specify how much coverage is needed in the remaining $d-k$ directions so that the leading rate depends only on $k$.
\end{remark}

\section{LinUCB and Phased Elimination} \label{sec:oope-linucb} \label{SEC:OOPE-LINUCB}

\Citet{vijayan2026regret} introduced the Offline-Online Phased Elimination (\oope) algorithm which combines the phased elimination (PE) algorithm \citep[Chapter 22]{lattimore2020bandit} with offline data.
They pointed out a multiplicative $\sqrt{d/k}$ gap between their regret upper bound and their hypercube-based lower bound under a coverage condition similar to our $k$-uncovered condition (see \Cref{rem:vijayan2026regret}).
In this section we discuss why the regret bound for \oope is loose when the offline data are $k$-uncovered.
This observation will motivate an improvement to the \oope algorithm, which we present in \Cref{sec:gw-oope}.
We also discuss how combining LinUCB with offline data \citep{chandak2026offlinetoonline} suffers an analogous $\sqrt{d/k}$ gap. In \Cref{app:vijayan-lower-bound}, we show that the lower bound in \citet{vijayan2026regret} can be loose, which motivates our regret lower bound in \Cref{sec:gw-oope}.

\subsection{LinUCB Prediction Error}
\label{sec:linucb-prediction-error}
The standard LinUCB analysis \citep[Chapter 19]{abbasi2011improved,lattimore2020bandit} proceeds by bounding the instantaneous regret $\ip{\opta - A_t, \theta_*}$ in every round $t$. On the event that $\theta_*$ lies in the confidence set, using the optimistic estimate $\theta_t^{\textrm{op}}$ and Cauchy-Schwarz, we have
\begin{equation*}
    \ip{\opta - A_t, \theta_*} \le \ip{A_t, \theta_t^{\textrm{op}} - \theta_*} \le \norm{A_t}_{\tilde V_{t-1}^{-1}} \norm{\theta_t^{\textrm{op}} - \theta_*}_{\tilde V_{t-1}} \,,
\end{equation*}
where $\tilde V_t = \Voff + \sum_{s=1}^t A_s A_s^\top$ so that $\tilde V_0 = \Voff$.
The term $\norm{\theta_t^{\textrm{op}} - \theta_*}_{\tilde V_{t-1}}$ can be upper bounded by $2\sqrt{\beta_t}$ where $\sqrt{\beta_t}$ is the radius of the elliptical confidence set $\cC_t \coloneqq \{ \theta \in \R^d \,:\, \norm{\theta - \hat \theta_{t-1}}_{\tilde V_{t-1}} \le \sqrt{\beta_t} \}$ and $\hat \theta_{t-1}$ is the least squares estimate used in the construction of $\cC_t$.
The radius $\sqrt{\beta_t}$ is a high-probability bound on $\norm{\hat \theta_{t-1} - \theta_*}_{\tilde V_{t-1}}$ uniformly over all rounds $t$, and $(\beta_t)_t$ is non-decreasing. 
In our offline-to-online case, note that $\hat \theta_0$ is computed purely based on offline data and using the definition of reward generation, one can show that $\norm{\hat \theta_0 - \theta_*}_{\Voff} \overset{d}{=} \norm{g}_2$ and with high probability $\norm{g}_2 \asymp \sqrt{d}$ \citep[Theorem 3.1.1]{vershynin2026high} where $g \sim \cN(0, I_d)$.
Thus, at any fixed sufficiently high confidence level, a valid initial radius must satisfy \(\sqrt{\beta_1}\gtrsim\sqrt d\). Since $(\beta_t)_t$ is non-decreasing, the standard regret analysis of LinUCB retains a $\sqrt{d}$ factor. 
Hence, we cannot expect this analysis to give a regret bound that scales with only $k$ for the $k$-uncovered case. 
More precisely, in \Cref{app:linucb} we show that LinUCB suffers a $\sqrt{d/k}$ gap in its regret bound.

\subsection{Phased Elimination Prediction Error}
The phase-based \oope algorithm \citep{vijayan2026regret} assumes that the action set $\cA$ is finite.
For infinite action sets, \citet[Remark 4.6]{vijayan2026regret} propose discretizing $\cA$ and running \oope on an $\eps$-net $\cA_\eps$.
For $\cA=B_2^d$ and $\norm{\truetheta}_2\le1$, restricting the actions to $\cA_\eps$ introduces a discretization error of at most $T\eps$ to the regret.
Thus, their guarantee 
(see \Cref{thm:oope}) 
becomes
\begin{align*}
    \regret_T(\cA)
    \le\regret_T(\cA_\eps)+T\eps = \bigOt \paren{\sqrt{d_{\mathrm{eff}}T\log|\cA_\eps|} +d^2+T\eps} = \bigOt \paren{\sqrt{d_{\mathrm{eff}}dT}+d^2},
\end{align*}
where $d_{\mathrm{eff}} \le d$ is an effective dimension, defined in \Cref{app:oope-regret-bound}.
The last line uses a sufficiently fine net with $T\eps\le1$ and $\log|\cA_\eps| = \bigOt(d)$. 
Under their additional offline count condition (see \Cref{eq:oope-offline-count-condition}),
specializing this to $k$-uncovered offline data we get that $d_{\mathrm{eff}} \asymp k$ when $T \ge k$, which gives a regret bound of $\bigOt(\sqrt{kdT}+d^2)$.

Before presenting our algorithm in \Cref{sec:gw-oope}, which improves the $\sqrt d$ factor to $\sqrt k$, we focus on understanding the source of the $\sqrt{\log|\cA_\eps|} = \bigOt(\sqrt d)$ factor in the \oope regret bound. 
We start by looking at the phased elimination algorithm \citep[Chapter 22]{lattimore2020bandit}.

At a high level, PE eliminates suboptimal actions in phases and outputs any of the remaining actions once the online budget is exhausted.
More precisely, in phase $\ell$ the algorithm (1) selects a design $\hat N_\ell$ based on the set of uneliminated actions $\cA_\ell$, (2) chooses actions based on $\hat N_\ell$ and observes the corresponding rewards, (3) computes a least-squares estimate $\hat\theta_\ell$, and 
(4) eliminates actions whose estimated gap exceeds a constant multiple of $\eps_\ell$.
To ensure that optimal actions are not eliminated, the design is selected so that the maximum reward prediction error
\begin{equation}\label{eq:estimation-error}
    \textstyle
    \max_{a\in\cA_\ell}|\ip{a,\hat\theta_\ell-\truetheta}|
\end{equation}
is at most $\eps_\ell$ with high probability. Since $\truetheta$ is unknown, the design is chosen using an upper bound on \Cref{eq:estimation-error}.
If this bound is loose, a phase can require more online samples than are needed for the desired accuracy.

To bound \Cref{eq:estimation-error}, \oope uses a union bound over the actions, giving a factor of $\sqrt{\log|\cA|}$.
When $\log|\cA|$ is of order $d$, this introduces a $\sqrt d$ term.
Our proposal is to replace the union bound by a tighter bound using concentration of Lipschitz functions of Gaussian random vectors, following \citet{pmlr-v336-maiti26a,wagenmaker2021experimental}.
To motivate this choice, we step away from PE and study the maximum prediction error under a fixed design, as in a single phase of PE.

\subsection{Improvements via Tighter Concentration}

We use the terminology \emph{fixed} design for any deterministic design $N\in\alloc_T(\cA)$ chosen without inspecting any rewards.
Fix $T\in\bN_0$ and a design $N\in\alloc_T(\cA)$.
Let $a_1,\ldots,a_T$ be the actions chosen according to $N$, and let $R_1,\ldots,R_T$ be their rewards.
The least-squares estimate using the offline and online data is
\begin{align}
    \textstyle
    \hat\theta_N=\tilde V(N)^{-1}\left(
        \sum_{s=1}^{\Toff}\roff_s\aoff_s+
        \sum_{t=1}^TR_ta_t\right).
    \label{eq:hat_theta_t}
\end{align}
Substituting the reward definitions into the estimator shows that the error is the Gaussian vector 
\begin{equation}
    \hat\theta_N-\truetheta
    \stackrel{d}{=}\tilde V(N)^{-1/2}g,
    \qquad g\sim\cN(0,I_d).
    \label{eq:fixed-design-gaussian-error}
\end{equation}
Taking the inner product with any action $a \in \cA$ gives
\begin{align}
    \textstyle \ip{a, \hat \theta_N - \theta_*} 
    \overset{d}{=} 
    \ip{a, \tilde V(N)^{-1/2}g}.
    \label{eq:prediction-error}
\end{align}
If the action set is finite,
then one way to bound the maximum of the above prediction error is to use Gaussian concentration \citep[Proposition~2.1.2]{vershynin2026high} for each action $a \in \cA$ and a union bound. 
\begin{restatable}[Union Bound Concentration]{lemma}{lemUnionBoundConcentration}
    \label{lem:union-bound-concentration}
    For any finite $\cA$, and any fixed design $N\in\alloc_T(\cA)$, with probability at least $1-\delta$,
    \begin{align*}
        \max_{a\in\cA}|\ip{a,\hat\theta_N-\truetheta}|
        \le 
        \max_{a\in\cA}\norm{a}_{\tilde V(N)^{-1}}
        \sqrt{2\log(2|\cA|/\delta)}
        \, .
    \end{align*}
\end{restatable}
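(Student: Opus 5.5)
The plan is to bound each action's prediction error with a one-dimensional Gaussian tail bound and then take a union bound over the finite set $\cA$.

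\emph{Step 1 (distribution of a single error).} Since $N$ is a fixed design, \Cref{eq:fixed-design-gaussian-error} gives $\hat\theta_N-\truetheta \stackrel{d}{=} \tilde V(N)^{-1/2}g$ with $g\sim\cN(0,I_d)$. For a fixed $a\in\cA$, \Cref{eq:prediction-error} then shows that $\ip{a,\hat\theta_N-\truetheta}$ is a centered Gaussian. Its variance is
\[
a^\top\tilde V(N)^{-1/2}\tilde V(N)^{-1/2}a=\norm{a}_{\tilde V(N)^{-1}}^2 .
\]
The equality in distribution in \Cref{eq:fixed-design-gaussian-error} holds jointly for the whole vector. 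This lets us work with the single realization $\tilde V(N)^{-1/2}g$ for all actions at once, so the event over the maximum has the same probability under either representation.

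\emph{Step 2 (per-action tail).} For a scalar $Z\sim\cN(0,\sigma^2)$, the Gaussian tail bound \citep[Proposition~2.1.2]{vershynin2026high} gives $\Pr(|Z|\ge \sigma u)\le 2e^{-u^2/2}$. Take $u=\sqrt{2\log(2|\cA|/\delta)}$. This yields
\[
\Pr\bigl(|\ip{a,\hat\theta_N-\truetheta}|>\norm{a}_{\tilde V(N)^{-1}}u\bigr)\le \delta/|\cA| .
\]
If $\norm{a}_{\tilde V(N)^{-1}}=0$, the error is identically zero and the bound holds trivially.

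\emph{Step 3 (union bound and conclusion).} A union bound over the $|\cA|$ actions shows that, with probability at least $1-\delta$, for every $a\in\cA$,
\[
|\ip{a,\hat\theta_N-\truetheta}|\le \norm{a}_{\tilde V(N)^{-1}}u .
\]
Bounding $\norm{a}_{\tilde V(N)^{-1}}$ by $\max_{a'\in\cA}\norm{a'}_{\tilde V(N)^{-1}}$ and then maximizing the left-hand side over $a$ gives the statement.

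I expect no real obstacle. The only point needing care is checking that the constant in the cited tail bound matches the factor $2$ inside the logarithm, i.e.\ that the two-sided bound is $2e^{-u^2/2}$. If the cited form instead uses $\Pr(Z\ge t)\le e^{-t^2/2}$ for a standard normal $Z$, applying it to both tails recovers the factor $2$.
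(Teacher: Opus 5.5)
Your proposal is correct and follows the paper's proof essentially step for step. The paper also observes that each $\ip{a,\hat\theta_N-\truetheta}$ is $\cN(0,\norm{a}_{\tilde V(N)^{-1}}^2)$, applies the Gaussian tail bound to both signs, handles $a=0$ separately, and takes a union bound over $\cA$. Your remark about the joint distributional equality is harmless but not needed, since the union bound only uses the per-action marginals.
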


An algorithm can then choose its design $N$ to minimize the right-hand side.
In the pure online setting ($\Toff=0$), although this bound can be loose for some combinatorial action sets \citep{katz2020empirical}, it does get the correct dimension dependence for other commonly studied finite action sets such as the corners of the hypercube $B_\infty^d$ or a $1/2$-net of $B_2^d$ with $\log|\cA| \asymp d$, for sufficiently large $T$ (see \Cref{app:tightness-of-union-bounds}).
To see what happens when the action set is a $1/2$-net of $B_2^d$ containing the standard basis vectors with $\log|\cA| \asymp d$, and the offline data satisfy the $k$-uncovered eigenvalue conditions, consider the following example.

Suppose for $1\le k<d$,
\begin{equation*}
    \Voff=\sum_{i=1}^ke_ie_i^\top
    +\frac{\Toff-k}{d-k}\sum_{j=k+1}^de_je_j^\top.
\end{equation*}
In the extreme case where $\Toff\to\infty$, with $d,k,T$ fixed, and $N(e_i)=T/k$ for $i\le k$, 
then%
\footnote{
    Assume that $k$ divides $T$ for simplicity. 
}
\begin{align*}
    \tilde V(N)^{-\frac12}g
    =\frac{\sum_{i=1}^kg_ie_i}{\sqrt{1+T/k}}
    +\sqrt{\frac{d-k}{\Toff-k}}\sum_{j=k+1}^dg_je_j \longrightarrow
    \frac{1}{\sqrt{1+T/k}} (g_1,\ldots,g_k,0,\ldots,0)^\top 
\end{align*}
The contribution of the well-covered $e_{k+1}, \dots, e_d$ directions vanishes and the error is only $k$-dimensional.
Combining this observation with \Cref{eq:prediction-error} suggests that the maximum prediction error should only scale with $k$.
However, the union bound still pays for all $|\cA|$ actions and therefore suffers a $\sqrt{\log|\cA|} \asymp \sqrt d$ term.

\begin{restatable}{lemma}{lemUBKuncovered}
    \label{lem:ub-k-uncovered}
    Let $\cA\subset B_2^d$ be a $1/2$-net with $\log|\cA| \asymp d$, containing an orthonormal eigenbasis of $\Voff$.
    Under the $k$-uncovered eigenvalue conditions\footnote{We use the eigenvalue condition from the $k$-uncovered definition, but change the action set to a $1/2$-net so that it is finite.} in \Cref{def:spectral-coverage-regimes} and $T\ge k$, 
    \begin{align*}
        &\min_{N\in\alloc_T(\cA)}
        \max_{a\in\cA}\norm{a}_{\tilde V(N)^{-1}}
        \sqrt{2\log(2|\cA|/\delta)} \asymp 
        \sqrt{\frac{kd}{T}}
        +\sqrt{\frac{k\log(2/\delta)}{T}}.
    \end{align*}
\end{restatable}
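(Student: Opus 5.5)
Since $\sqrt{2\log(2|\cA|/\delta)}\asymp\sqrt{d}+\sqrt{\log(2/\delta)}$ (using $\log|\cA|\asymp d$ and $\sqrt{x+y}\asymp\sqrt x+\sqrt y$), it suffices to show
\begin{align*}
    \min_{N\in\alloc_T(\cA)}\max_{a\in\cA}\norm{a}_{\tilde V(N)^{-1}}\asymp\sqrt{k/T}.
\end{align*}
Let $u_1,\dots,u_d\in\cA$ be the orthonormal eigenbasis of $\Voff$ with eigenvalues $\nu_1\le\dots\le\nu_d$.

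\textbf{Upper bound.} Use the design $N(u_i)=\lfloor T/k\rfloor$ for $i\le k$, and put any leftover pulls on $u_1$. Since $T\ge k$, each of these counts is at least $T/(2k)$. Then $\tilde V(N)\succeq\sum_{i\le k}(T/2k)u_iu_i^\top+\sum_{i>k}\nu_iu_iu_i^\top$ is diagonal in this basis. For any $a\in\cA\subset B_2^d$,
\begin{align*}
    \norm{a}^2_{\tilde V(N)^{-1}}\le\frac{2k}{T}\sum_{i\le k}\ip{a,u_i}^2+\sum_{i>k}\frac{\ip{a,u_i}^2}{\nu_i}\le\frac{2k}{T}+\frac{k}{(d-k)T}\le\frac{3k}{T},
\end{align*}
where the second term uses $\nu_i\ge(d-k)T/k$. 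When $k=d$ this second sum is empty.

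\textbf{Lower bound.} For an arbitrary $N\in\alloc_T(\cA)$, lower bound the maximum by an average over the uncovered basis vectors:
\begin{align*}
    \max_{a\in\cA}\norm{a}^2_{\tilde V(N)^{-1}}\ge\frac1k\sum_{i\le k}u_i^\top\tilde V(N)^{-1}u_i\ge\frac1k\sum_{i\le k}\frac{1}{u_i^\top\tilde V(N)u_i},
\end{align*}
where the last step is the standard inequality $x^\top M^{-1}x\ge 1/(x^\top Mx)$ for unit $x$ (Cauchy–Schwarz). Next,
\begin{align*}
    u_i^\top\tilde V(N)u_i=\nu_i+\sum_a N(a)\ip{a,u_i}^2\le 1+\sum_a N(a)\ip{a,u_i}^2,
\end{align*}
so summing over $i\le k$ and using $\norm{a}\le1$ gives $\sum_{i\le k}u_i^\top\tilde V(N)u_i\le k+T$. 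By Jensen (convexity of $1/x$), the average above is at least $k/(k+T)\ge k/(2T)$, using $T\ge k$. Thus the maximum is at least $\sqrt{k/(2T)}$, matching the upper bound up to constants.

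\textbf{Expected obstacle.} The only delicate point is the lower bound: the maximum must be reduced to the $k$ uncovered eigendirections, and this requires those directions to be actions in $\cA$. The hypothesis that $\cA$ contains the eigenbasis supplies exactly this. The remaining steps are the routine $\asymp$ bookkeeping for the $\log$ factor.
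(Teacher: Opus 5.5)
Your proof is correct. Your upper bound matches the paper's: pulls spread over $u_1,\dots,u_k$, with the covered directions controlled by $\nu_i\ge(d-k)T/k$. Your lower bound uses the same trace estimate $\sum_{i\le k}u_i^\top\tilde V(N)u_i\le k+T$, but extracts the conclusion differently.

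The paper turns that estimate into $\lambda_{\min}(\tilde V(N))\le 2T/k$, since the smallest eigenvalue is at most the average eigenvalue of the compression to $U$. It then takes a unit bottom eigenvector $v$ and uses the $1/2$-net property to find $a\in\cA$ with $\ip{a,v}\ge 1/2$. This gives $\max_{a}\norm{a}_{\tilde V(N)^{-1}}\ge 1/(2\sqrt{\lambda_{\min}(\tilde V(N))})\ge\sqrt{k/(8T)}$.

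You instead evaluate the quadratic form directly at the uncovered eigenvectors, which are in $\cA$ by hypothesis. You then combine $x^\top M^{-1}x\ge 1/(x^\top Mx)$ with Jensen. This is slightly more elementary and gives the better constant $\sqrt{k/(2T)}$.

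The trade-off is which hypothesis each lower bound uses. Yours needs the eigenbasis to lie in $\cA$, while the paper's needs only the net property; the paper uses the eigenbasis solely for the upper-bound design. So your closing remark overstates things: the lower bound does not actually require the uncovered eigendirections to be actions.
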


To get a tighter bound on reward prediction error, we apply concentration of Lipschitz functions of a Gaussian vector (\Cref{lem:lipschitz-concentration}).
To state the result, define the Gaussian width%
\footnote{
    By Talagrand's majorizing measure theorem \citep[Theorem~8.5.5]{vershynin2026high}, $\gw(\cX)$ is equivalent, up to constants, to the  $\gamma_2(\cX,\norm{\cdot}_2)$ functional, where the metric is Euclidean.
    We keep our bounds in terms of Gaussian width, since it can often be bounded directly for the sets of interest without evaluating $\gamma_2$.
}
of a compact set $\cX\subset\bR^d$ as
\begin{align*}
 \gw(\cX)\defeq\bE\max_{x\in\cX}\ip{x,g},
    \quad \text{where } g \sim \cN(0,I_d).
\end{align*}
For a positive definite matrix $M$ and $\beta\ge0$, define
\begin{align*}
    \rad_\beta(\cX,M)\defeq
    \gw(M^{-1/2}\cX)+\beta\max_{x\in\cX}\norm{x}_{M^{-1}}.
\end{align*}
By noting that $g\mapsto\max_{a\in\cA}\ip{a,\tilde V(N)^{-1/2}g}$
is Lipschitz with constant $\max_{a\in\cA}\norm{a}_{\tilde V(N)^{-1}}$, we can
apply Gaussian concentration to both signs to get the following bound.
\begin{restatable}[Gaussian Width Concentration]{lemma}{lemGWConcentration}
    \label{lem:gw-concentration}
    For any compact $\cA\subset\bR^d$ and any fixed design $N\in\alloc_T(\cA)$, with probability at least $1-\delta$,
    \begin{align}
        \max_{a\in\cA}|\ip{a,\hat\theta_N-\truetheta}|
        \le\rad_{\sqrt{2\log(2/\delta)}}(\cA,\tilde V(N)).
        \label{eq:shared-fixed-design-two-sided-confidence}
    \end{align}
\end{restatable}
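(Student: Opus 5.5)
The plan is to reduce everything to the Gaussian representation in \Cref{eq:fixed-design-gaussian-error} and then apply Gaussian concentration for Lipschitz functions (\Cref{lem:lipschitz-concentration}) to two suprema, one for each sign. Write $M=\tilde V(N)$. Since $N$ is fixed and the noise is iid standard Gaussian, \Cref{eq:fixed-design-gaussian-error} gives $\hat\theta_N-\truetheta\stackrel{d}{=}M^{-1/2}g$. The event in \Cref{eq:shared-fixed-design-two-sided-confidence} depends only on the law of this error vector, so it suffices to bound
\begin{align*}
    \max_{a\in\cA}|\ip{a,M^{-1/2}g}|=\max\Bigl\{F_+(g),F_-(g)\Bigr\},
\end{align*}
where $F_\pm(g)\defeq\max_{a\in\cA}\ip{\pm M^{-1/2}a,g}$. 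Compactness of $\cA$ guarantees that these maxima are attained and finite.

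Next I would verify the Lipschitz constant. Fix $g,g'$ and let $a$ attain $F_+(g)$. Then
\begin{align*}
    F_+(g)-F_+(g')\le\ip{M^{-1/2}a,g-g'}\le\norm{a}_{M^{-1}}\norm{g-g'}_2.
\end{align*}
Swapping $g$ and $g'$ shows that $F_+$ is $L$-Lipschitz with $L=\max_{a\in\cA}\norm{a}_{M^{-1}}$, and $F_-$ satisfies the same bound. For the means, $\bE F_+(g)=\gw(M^{-1/2}\cA)$ by definition. Because $-g\stackrel{d}{=}g$, we also have $\bE F_-(g)=\bE F_+(-g)=\gw(M^{-1/2}\cA)$.

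The concentration step applies \Cref{lem:lipschitz-concentration}, in the form $\Pr(F-\bE F\ge t)\le e^{-t^2/(2L^2)}$, to each of $F_+$ and $F_-$ with $t=L\sqrt{2\log(2/\delta)}$. Each tail then has probability at most $\delta/2$. A union bound over the two signs shows that, with probability at least $1-\delta$,
\begin{align*}
    \max\{F_+,F_-\}\le\gw(M^{-1/2}\cA)+\sqrt{2\log(2/\delta)}\max_{a}\norm{a}_{M^{-1}}=\rad_{\sqrt{2\log(2/\delta)}}(\cA,M).
\end{align*}
Transferring this back through the equality in distribution gives the claim.

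The only delicate point is the exact constant in \Cref{lem:lipschitz-concentration}. If that lemma is stated with an unspecified absolute constant, for example $e^{-ct^2/L^2}$, then $\sqrt{2\log(2/\delta)}$ must be replaced by the matching multiple. I expect the paper uses the sharp Tsirelson--Ibragimov--Sudakov form $e^{-t^2/(2L^2)}$, which gives exactly the stated radius. A secondary check is that the representation \Cref{eq:fixed-design-gaussian-error} needs $M$ invertible, which holds because $\Voff$ is assumed invertible.
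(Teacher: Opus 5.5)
Your proposal is correct and follows essentially the same route as the paper. The paper also bounds $f(g)=\max_{a\in\cA}\ip{a,\tilde V(N)^{-1/2}g}$ and $f(-g)$ using that they are $L$-Lipschitz with $L=\max_{a\in\cA}\norm{a}_{\tilde V(N)^{-1}}$, applies \Cref{lem:lipschitz-concentration} at level $\delta/2$ to each, and takes a union bound. Your constant concern does not arise, since \Cref{lem:lipschitz-concentration} is stated in the sharp form with deviation $L\sqrt{2\log(1/\delta)}$, which gives exactly $\rad_{\sqrt{2\log(2/\delta)}}(\cA,\tilde V(N))$.
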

Choosing the design based on the above bound gives
\begin{restatable}{corollary}{corGWKuncovered}
    \label{cor:gw-k-uncovered}
    Let $\cA=B_2^d$ or let $\cA$ be the net in \Cref{lem:ub-k-uncovered}.
    Under the $k$-uncovered eigenvalue conditions and $T\ge k$, there is a fixed design $N\in\alloc_T(\cA)$ such that, with probability at least $1-\delta$,
    \begin{align*}
        \max_{a\in\cA}|\ip{a,\hat\theta_N-\truetheta}|
        \lea\frac{k}{\sqrt T}+\sqrt{\frac{k\log(2/\delta)}{T}}.
    \end{align*}
\end{restatable}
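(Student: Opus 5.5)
The plan is to apply \Cref{lem:gw-concentration} with an explicit design and bound the two terms of $\rad_\beta(\cA,\tilde V(N))$ separately. Work in an orthonormal eigenbasis $u_1,\dots,u_d$ of $\Voff$ with eigenvalues $\nu_1\le\dots\le\nu_d$. For $\cA=B_2^d$ this basis lies in the action set by rotational invariance. For the net of \Cref{lem:ub-k-uncovered} it lies in the set by assumption. Take the design that plays $N(u_i)=\lfloor T/k\rfloor$ for $i\le k$ and puts any leftover rounds on $u_1$; since $T\ge k$, each such count is at least $T/(2k)$. Then $\tilde V(N)$ is diagonal in this basis, with eigenvalues $\lambda_i\ge T/(2k)$ for $i\le k$ and $\lambda_j=\nu_j\ge (d-k)T/k$ for $j>k$.

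\textbf{Lipschitz term.} In both cases $\cA\subseteq B_2^d$, so $\max_{a\in\cA}\norm{a}_{\tilde V(N)^{-1}}\le \lambda_{\min}(\tilde V(N))^{-1/2}$. For $k<d$ this is at most $\sqrt{2k/T}$, because $(d-k)T/k\ge T/k$. For $k=d$ there are no well-covered directions, and the same bound holds. Multiplying by $\beta=\sqrt{2\log(2/\delta)}$ gives the second term $\sqrt{k\log(2/\delta)/T}$.

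\textbf{Gaussian width term.} Since $\cA\subseteq B_2^d$ and Gaussian width is monotone under inclusion,
\[
\gw(\tilde V(N)^{-1/2}\cA)\le \bE\max_{\norm{a}_2\le1}\ip{a,\tilde V(N)^{-1/2}g}=\bE\norm{\tilde V(N)^{-1/2}g}_2\le\sqrt{\textstyle\sum_i \lambda_i^{-1}}
\]
by Jensen's inequality. Splitting the sum into the two groups of directions gives
\[
\sum_{i\le k}\lambda_i^{-1}+\sum_{j>k}\nu_j^{-1}\le \frac{2k^2}{T}+\frac{(d-k)k}{(d-k)T}=\frac{2k^2+k}{T}\le\frac{3k^2}{T}.
\]
Hence the width term is at most $\sqrt3\,k/\sqrt T$. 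Here the second eigenvalue condition of \Cref{def:spectral-coverage-regimes} is exactly what makes the $d-k$ covered directions contribute no more than the uncovered ones. Adding the two terms gives the claim.

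\textbf{Main obstacle.} I expect the only delicate step to be ensuring that the chosen design is supported on $\cA$ in the net case. This is why the eigenbasis is assumed to belong to the net. Beyond that, the integer rounding when $k\nmid T$ must be handled, which only costs constants through the bound $\lfloor T/k\rfloor\ge T/(2k)$. The approach never needs to evaluate the width of the net itself, because the width of a subset of $B_2^d$ is dominated by the width of the ball.
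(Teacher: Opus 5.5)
Your proposal is correct and follows the paper's proof essentially step by step. You use the same balanced design on the $k$ weak eigenvectors (with $\lfloor T/k\rfloor\ge T/(2k)$ pulls each), and the same bounds $\tr(\tilde V(N)^{-1})\le(2k^2+k)/T$ (via Jensen, for the width term) and $\lambda_{\max}(\tilde V(N)^{-1})\le 2k/T$ (for the Lipschitz term), which you then substitute into \Cref{lem:gw-concentration}, treating the case $k=d$ separately as the paper does.
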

Thus the $\sqrt{kd}$ factor from the union bound can be replaced by $k$.
What this demonstrates is that uneven offline coverage can make the union bound loose, even when the union bound was tight in the pure online setting for the same action set%
\footnote{
    By union bound here we refer to the bound obtained by \Cref{lem:union-bound-concentration} when the design is chosen to minimize the right-hand side.
}.
Although the action geometry has not changed, the new issue is that the offline data force the reward prediction error to be much smaller in some directions than in others.

In the following sections we will show improved rates for pure exploration and regret minimization by using the Gaussian width concentration bound (\Cref{lem:gw-concentration}) to choose the design.

\section{Pure Exploration with Offline Data} \label{sec:pureexp} \label{SEC:PUREEXP}

As a warm-up, we present results for an algorithm that uses a fixed design in pure exploration, which should be thought of as resembling a single phase of the phased elimination algorithm.
In pure exploration with a fixed budget $T \in \bN_0$, the learner's goal is to output an action $\hat A \in \cA$ that maximizes the expected reward $\ip{\hat A, \truetheta}$, 
or equivalently, to minimize the \emph{simple regret}
\begin{align*}
    \sregret_T(\cA)
    \defeq
    \ip{\opta - \hat A, \truetheta}.
\end{align*}
We present the Gaussian Width Offline-to-Online Fixed Design (\gwoofd) algorithm (\Cref{alg:gw-oofd-fixed-budget}), 
which is an offline-to-online variant of the fixed-design algorithm from \citep[Section 2.1]{pmlr-v336-maiti26a}. 
The algorithm selects an optimal design based on the bound in \Cref{lem:gw-concentration}, takes actions based on the selected design, and returns the action that maximizes the reward estimate.

Note that the optimization in \Cref{alg:gw-oofd-fixed-budget} requires an exact design oracle and it need not be computationally efficient, but it gives a statistical benchmark.
In Appendix \ref{app:gw-fractional-designs}, we show that an approximate design can be rounded to an exact design with only a constant-factor loss in the confidence radius when the online budget is sufficiently large. The simple regret of \gwoofd is bounded as follows.
\begin{restatable}[\gwoofd upper bound]{theorem}{thmGWOOFDUpperBound}
    \label{thm:gw-oofd-upper-bound}
    For any compact $\cA \subset \bR^d$,
    with probability at least $1-\delta$, the simple regret of \gwoofd satisfies
    \begin{align*}
        \sregret_T(\cA)
        \le
        2
        \min_{N \in \alloc_T(\cA)}
        \rad_{\sqrt{2\log(2/\delta)}}
        \paren{\cA, \tilde V(N)}.
    \end{align*}
\end{restatable}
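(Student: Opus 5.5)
The plan is to reduce the simple-regret bound to the uniform prediction-error bound of \Cref{lem:gw-concentration}, applied to the design chosen by \gwoofd. By construction the algorithm selects
\[
\hat N \in \argmin_{N\in\alloc_T(\cA)} \rad_{\sqrt{2\log(2/\delta)}}\paren{\cA,\tilde V(N)} ,
\]
which is a fixed design in the sense of \Cref{sec:oope-linucb}. It is chosen before any online reward is observed, and it depends only on $\cA$, $\Voff$, $T$ and $\delta$, not on the offline rewards. Since compactness of $\cA$ and finiteness of the relevant objects ensure the minimum is attained, or at least can be approached arbitrarily closely, \Cref{lem:gw-concentration} applies directly to $\hat N$. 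It yields an event $E$ of probability at least $1-\delta$ on which
\[
\max_{a\in\cA}|\ip{a,\hat\theta_{\hat N}-\truetheta}| \le r \defeq \rad_{\sqrt{2\log(2/\delta)}}\paren{\cA,\tilde V(\hat N)} .
\]

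On $E$, I will run the standard two-step comparison. The output is $\hat A\in\argmax_{a\in\cA}\ip{a,\hat\theta_{\hat N}}$, a maximizer that exists because $\cA$ is compact and the map is linear. Splitting the simple regret gives
\[
\ip{\opta-\hat A,\truetheta} = \ip{\opta,\truetheta-\hat\theta_{\hat N}} + \ip{\opta-\hat A,\hat\theta_{\hat N}} + \ip{\hat A,\hat\theta_{\hat N}-\truetheta}.
\]
The middle term is nonpositive by the choice of $\hat A$. Each outer term is at most $r$ on $E$, because both $\opta$ and $\hat A$ lie in $\cA$. Hence $\sregret_T(\cA)\le 2r$, and $r$ equals the minimum over $\alloc_T(\cA)$ in the statement.

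The only delicate point is making sure \Cref{lem:gw-concentration} legitimately applies. There are two things to check.
\begin{enumerate}
\item The design must not depend on the noise. This holds because the optimization uses only $\Voff$, which is a deterministic function of the fixed-design offline actions. As a result, \Cref{eq:fixed-design-gaussian-error} holds with $\tilde V(\hat N)$ deterministic.
\item The minimum over the infinite set $\alloc_T(\cA)$ must be attained. If it is not, I will either take a minimizing sequence and argue via $\delta$-continuity, or note that the exact design oracle is assumed. The bound then follows with an arbitrarily small slack that can be removed by taking limits.
\end{enumerate}
Everything else is routine.
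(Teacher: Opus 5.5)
Your proposal is correct and is essentially the paper's argument: apply \Cref{lem:gw-concentration} to the reward-independent design $\hat N$, use $\ip{\opta-\hat A,\hat\theta_{\hat N}}\le 0$, and bound the remaining prediction-error terms by the radius to get the factor $2$. Your item 2 is unnecessary. Exact designs of size $T$ are parametrized by the compact set $\cA^T$, and the radius is continuous there because $\tilde V(N)\succeq\Voff\succ0$, so the minimum is attained. In any case, removing the slack by a limit would not be legitimate, since each approximate minimizer defines a different algorithm.
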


\begin{algorithm}[t]
\caption{\gwoofd}
\label{alg:gw-oofd-fixed-budget}
\begin{algorithmic}[1]
\STATE \textbf{Input:} $\cA$, $\Doff$, $T \in \bN_0$, $\delta\in(0,1)$
\STATE
$\textstyle
    \hat N
    \in
    \argmin\nolimits_{N\in\alloc_T(\cA)}
    \rad_{\sqrt{2\log(2/\delta)}} \paren{\cA,\tilde V(N)}
$
\STATE Take actions using $\hat N$ and compute $\hat\theta_{\hat N}$ (\Cref{eq:hat_theta_t})
\STATE \textbf{return} $\hat A\in\argmax_{a\in\cA}\ip{a, \hat \theta_{\hat N}}$
\end{algorithmic}
\end{algorithm}

\subsection{Lower Bounds}
Next, we state lower bounds.
Define a fixed design algorithm to be any algorithm that:
selects a fixed design $\hat N \in \alloc_T(\cA)$;
takes actions based on $\hat N$ and observes online rewards; 
outputs an action $\hat A \in \cA$, which may depend on all the offline and online actions and rewards.
Notice that \gwoofd is a fixed design algorithm.
In the pure online setting (i.e., $\Toff=0$), \citet{pmlr-v336-maiti26a} showed that the simple regret of \gwoofd is the best possible for any fixed-design algorithm.
The following theorem shows this remains true in the offline-to-online setting.%

\begin{restatable}[Fixed-design lower bound]{theorem}{thmFixedDesignLowerBound}
    \label{thm:fixed-design-lower-bound}
    Let $\cA \subset \bR^d$ be a compact set,
    $d\ge2$ and\,%
    \footnote{
        For simplicity of presentation, we define the confidence radius using individual actions rather than pairwise action differences.
        This requires $d\ge2$ for the lower bound to hold for general action sets.
        Defining the confidence radius $\zeta$ using differences $a-b$, with $a,b\in\cA$, would give upper and lower bounds for $d\ge1$.
    } 
    $\delta\in(0,1/10]$.
    Suppose a fixed-design algorithm has simple regret at most $\eps > 0$ with probability at least $1-\delta$, for every $\theta\in\bR^d$.
    Then
    \begin{align*}
        \eps
        \gea \min_{N\in\alloc_T(\cA)} \rad_{\sqrt{\log(1/\delta)}} \paren{\cA, \tilde V(N)}.
    \end{align*}
\end{restatable}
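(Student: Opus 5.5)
Fix the design $\hat N$ selected by the algorithm; since it is deterministic, it suffices to show that for this fixed $N$ the guarantee forces $\eps \gea \rad_{\sqrt{\log(1/\delta)}}(\cA,\tilde V(N))$, which is at least the minimum over $\alloc_T(\cA)$. Write $M=\tilde V(N)$. Since $\rad_\beta(\cA,M)=\gw(M^{-1/2}\cA)+\beta\max_{a\in\cA}\norm{a}_{M^{-1}}$ is a sum of two terms, it is enough to show separately that $\eps\gea\gw(M^{-1/2}\cA)$ and $\eps\gea\sqrt{\log(1/\delta)}\max_a\norm{a}_{M^{-1}}$. Observe first that under the fixed design, offline and online data together are a Gaussian linear experiment. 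Its sufficient statistic is $\hat\theta_N\sim\cN(\theta,M^{-1})$, so the learner effectively observes $Y=M^{1/2}\theta+g$ with $g\sim\cN(0,I_d)$.

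\textbf{Deviation term.} Let $a^\star\in\argmax_{a\in\cA}\norm{a}_{M^{-1}}$ and $u=M^{-1}a^\star/\norm{a^\star}_{M^{-1}}$, so that $\ip{a^\star,u}=\norm{a^\star}_{M^{-1}}$ and $\norm{u}_M=1$. Take the two-point instances $\theta_\pm=\pm cu\sqrt{\log(1/\delta)}$ and, if needed, shift both by a common vector so the two instances have disjoint $\eps$-optimal sets. The plan is the standard testing argument. If $\hat A$ had simple regret $\le\eps$ under both instances with probability $1-\delta$, then the event that $\hat A$ is $\eps$-optimal for $\theta_+$ would yield a test between $\cN(M^{1/2}\theta_\pm,I)$ with both errors at most $\delta$. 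The Bretagnolle--Huber inequality then gives $\mathrm{KL}=2c^2\log(1/\delta)\gea\log(1/(4\delta))$, which is a contradiction for small $c$ unless the gap $\gea c\sqrt{\log(1/\delta)}\norm{a^\star}_{M^{-1}}$ is at most $\eps$. Making the $\eps$-optimal sets disjoint requires a second direction, and this is where $d\ge2$ (the footnote) enters. I would use $\theta_\pm$ with a perpendicular component so that $a^\star$-type actions are optimal for one instance and penalized for the other. Care is needed because $\cA$ is general, so I would instead phrase this step via the difference $\ip{\opta_{\theta_+}-\opta_{\theta_-},\cdot}$ and choose the instances along $M^{-1}(a^\star-b)$ for a suitable $b\in\cA$.

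\textbf{Gaussian width term.} This is the main obstacle. Following \citet{pmlr-v336-maiti26a}, I would put the Bayesian prior $\theta\sim\cN(0,\sigma^2M^{-1})$, so that $M^{1/2}\theta=\sigma h$ with $h\sim\cN(0,I)$. The posterior is Gaussian with mean $\frac{\sigma^2}{1+\sigma^2}M^{-1/2}Y$ and covariance $\frac{\sigma^2}{1+\sigma^2}M^{-1}$, so $\theta=\mu(Y)+\tau M^{-1/2}g'$ with $g'$ independent of $Y$ and $\tau^2=\sigma^2/(1+\sigma^2)$. For any $\hat A=\hat A(Y)$,
\begin{align*}
\bE\big[\max_a\ip{a,\theta}-\ip{\hat A,\theta}\,\big|\,Y\big]\ \ge\ \bE\max_a\big(\ip{a,\mu}+\tau\ip{M^{-1/2}a,g'}\big)-\max_a\ip{a,\mu}.
\end{align*}
Convexity of the max in $g'$ alone only gives a nonnegative bound, so I would choose $\sigma$ of order one (say $\sigma=1$) and evaluate the unconditional expectation instead. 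By Jensen the Bayes simple regret is at least $\bE\max_a\ip{a,\theta}-\bE\max_a\ip{a,\mu}=(\sigma-\sigma^2/\sqrt{1+\sigma^2})\,\gw(M^{-1/2}\cA)\gea\gw(M^{-1/2}\cA)$, using $\mu\overset{d}{=}\frac{\sigma^2}{\sqrt{1+\sigma^2}}M^{-1/2}h$.

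\textbf{From expectation to probability.} Turning this expectation bound into a high-probability one is the delicate part. Regret is at most $\eps$ with probability $1-\delta\ge9/10$, and on the failure event regret is bounded by $\max_a\ip{a,\theta}-\min_a\ip{a,\theta}$. That quantity is $O(\gw)$-sub-Gaussian-concentrated, with Lipschitz constant $\max\norm{a}_{M^{-1}}$. By Cauchy--Schwarz, its contribution on an event of probability $\delta\le1/10$ is at most $\sqrt{\delta}\,O(\gw+\max\norm{a}_{M^{-1}})$. Absorbing this contribution (using the already proved deviation bound for the $\max\norm{}$ part) yields $\eps\gea\gw(M^{-1/2}\cA)$. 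Since the prior argument requires the guarantee to hold for every $\theta$, which is assumed, the proof is complete.
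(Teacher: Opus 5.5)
Your Gaussian-width step follows the paper's argument: a prior $\theta\sim\mathcal N(0,\sigma^2M^{-1})$, then comparing $\mathbb E\max_a\ip{a,\theta}$ with $\mathbb E\max_a\ip{a,\mathbb E[\theta\mid Y]}$. The deviation term, however, has a genuine gap, and it sits exactly where the hypothesis $d\ge2$ is needed.

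\textbf{The deviation term.} A two-instance test of the kind you describe can only certify the width of $\cA$ in the test direction. Let $h$ be the support function of $\cA$. The $\eps$-optimal sets of $v\pm\kappa u$ can be disjoint only if
$\eps<h(v+\kappa u)+h(v-\kappa u)-2h(v)\le\kappa\,(h(u)+h(-u))$, where the second inequality is sublinearity of $h$. So the common shift never helps. With $\|u\|_M=1$, the best such a test can give is $\eps\gtrsim\sqrt{\log(1/\delta)}\,D$, where $D:=\max_{a,b\in\cA}\|a-b\|_{M^{-1}}$.

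Converting $D$ into $\max_a\|a\|_{M^{-1}}$ is not a matter of picking a ``suitable $b$'' or adding a perpendicular component. For a general $M\succ0$ the single-action quantity can be arbitrarily larger than $D$: take $\cA$ a small ball around $e_1$ and $M=\mathrm{diag}(\lambda,1)$ with $\lambda\to0$. The comparison holds here only because $M=\tilde V(N)$ is a sum of $n=\Toff+T$ terms $aa^\top$ with every $a\in\cA$, offline actions included. Your proposal never uses this fact.

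The missing argument goes as follows.
\begin{enumerate}
\item Let $\bar a$ be the average of these $n$ actions. Then $\bar a\bar a^\top\preceq M/n$, so $\|\bar a\|_{M^{-1}}^2\le1/n$.
\item By convexity, $\|a-\bar a\|_{M^{-1}}\le D$ for every $a\in\cA$, so $\max_a\|a\|_{M^{-1}}\le D+n^{-1/2}$.
\item Draw $A,A'$ independently and uniformly from the design multiset. Then $D^2\ge\mathbb E\|A-A'\|_{M^{-1}}^2=2\bigl(d/n-\|\bar a\|_{M^{-1}}^2\bigr)\ge2(d-1)/n$.
\item Since $d\ge2$, this gives $n^{-1/2}\le D/\sqrt2$, hence $\max_a\|a\|_{M^{-1}}\le(1+1/\sqrt2)D$.
\end{enumerate}

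\textbf{The expectation-to-probability step.} This is a second, fixable problem. Write $R(\theta)=\max_a\ip{a,\theta}-\min_a\ip{a,\theta}$.
\begin{itemize}
\item With $\sigma=1$, your Bayes bound is $(1-1/\sqrt2)\,\gw(M^{-1/2}\cA)\approx0.29\,\gw(M^{-1/2}\cA)$.
\item The Cauchy--Schwarz estimate of the failure contribution is at least $\sqrt\delta\,\mathbb E R(\theta)=2\sqrt\delta\,\gw(M^{-1/2}\cA)\approx0.63\,\gw(M^{-1/2}\cA)$ at $\delta=1/10$.
\item So as written nothing can be absorbed.
\end{itemize}
Cauchy--Schwarz is not needed. $R$ depends on $\theta$ alone, and the failure event has conditional probability at most $\delta$ given each $\theta$. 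Hence the contribution is at most $\delta\,\mathbb E R(\theta)=2\sigma\delta\,\gw(M^{-1/2}\cA)$. This is absorbed for all $\delta\le1/10$, with no Lipschitz argument and no appeal to the deviation bound; it is what the paper does. Alternatively, you could keep Cauchy--Schwarz and take $\sigma$ small.
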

We now state the simple regret lower bound when $\cA = B_2^d$ for any (possibly adaptive) algorithm, which can choose its online actions based on previous online actions and rewards, and the offline dataset.

\begin{restatable}[Adaptive-design lower bound]{theorem}{HpSrLb}\label{thm:hp-sr-lb}
    Let $d \ge 2$ and $\cA = B_2^d$ and $\Delta \in (0,1)$. For $\delta \in (0, \Delta^2/(128\sqrt{e}))$ and $\epsilon > 0$, fix an algorithm with simple regret at most $\epsilon$, with probability at least $1-\delta$, for every $\theta \in \R^d$.
    Suppose $\mu = \cN(0,\Sigma)$ and $\Sigma = \diag(\sigma_1^2,\dots,\sigma_d^2)$ in the eigenbasis of $\Voff$ with $\sigma_1^2 \ge \dots \ge\sigma_d^2$ and $\sigma_1^2 > 0$. Let
    \begin{equation*}
      c_\Delta = \frac{(1-\Delta)^2}{2\Delta - \Delta^2},
      \quad
      d_\Delta = \frac{\sum_{i=2}^d \log\big(1 + c_\Delta\tfrac{\sigma_i^2}{\sigma_1^2}\big)}
                         {\log\big(1 + c_\Delta\big)} .
    \end{equation*}
    Suppose $\Sigma$ and $\Delta$ satisfy
    \begin{equation}\label{eq:exp-fano}
      \sum_{i=1}^d \log\big(1 + \sigma_i^2(\nu_i + n_i)\big) + \log 2 \le \frac{d_\Delta}{2}\log\big(1 + c_\Delta\big),
    \end{equation}
    where $\nu_i$ is the eigenvalue of $\Voff$ corresponding to the $i$-th eigenvector, and $n_i = \bE_\mu[\sum_{t=1}^T A_{t,i}^2]$, with coordinates in the specified eigenbasis. Then
    \begin{equation*}
      \textstyle
      \epsilon \gtrsim \Delta^2 \sqrt{\Tr \Sigma} + \sup_{a \in \cA} \norm{a}_{\tilde V_0^{-1}}\sqrt{\log(1/\delta)}\,,
    \end{equation*}
    where $\tilde V_0 = \Voff + \bE_0[\sum_{t=1}^T A_t A_t^\top]$ and $\bE_0$ is the expectation under $\theta = 0$.
\end{restatable}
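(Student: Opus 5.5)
The lower bound has two terms, and I would prove them separately. Since $\max(x,y)\ge (x+y)/2$, it suffices to show $\epsilon \gtrsim \Delta^2\sqrt{\Tr\Sigma}$ and $\epsilon\gtrsim \sup_a\norm{a}_{\tilde V_0^{-1}}\sqrt{\log(1/\delta)}$ separately.

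\emph{First term: Bayesian Fano argument.} Put the prior $\theta\sim\mu=\cN(0,\Sigma)$ and work in the eigenbasis of $\Voff$; this is allowed because $B_2^d$ is rotation invariant. For $\cA=B_2^d$ the optimal action is $\theta/\norm{\theta}_2$, and
\[
\ip{\opta-\hat A,\theta}=\norm{\theta}_2\bigl(1-\cos\angle(\hat A,\theta)\bigr).
\]
So small simple regret forces $\hat A$ to be within angle roughly $\sqrt{2\epsilon/\norm{\theta}_2}$ of $\theta$.

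\textbf{Step 1 (norm lower bound).} By Gaussian concentration, with constant probability $\norm{\theta}_2\gtrsim\sqrt{\Tr\Sigma}$. Hence, if $\epsilon\le c\Delta^2\sqrt{\Tr\Sigma}$, the output direction satisfies $\ip{\hat A,\theta/\norm{\theta}}\ge 1-O(\Delta^2)$. Equivalently, $\hat A$ lies in a spherical cap of "angular radius $\Delta$" around $\theta/\norm\theta$. The constant $c_\Delta=(1-\Delta)^2/(2\Delta-\Delta^2)$ is exactly the ratio $\cos^2/\sin^2$ for the cap $\ip{u,v}\ge1-\Delta$. This suggests the following small-ball estimate.

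\textbf{Step 2 (small-ball estimate).} Bound, for any fixed unit vector $u$,
\[
\mu\bigl(\ip{u,\theta/\norm\theta}\ge 1-\Delta\bigr).
\]
For the anisotropic Gaussian I would show this is at most $\exp\!\bigl(-\tfrac{d_\Delta}{2}\log(1+c_\Delta)\bigr)$. The argument writes the event as $\ip{u,\theta}^2 \ge c_\Delta\,\norm{P_{u^\perp}\theta}^2$ and bounds its probability via a Chernoff/moment-generating-function computation over the Gaussian quadratic form, giving $\prod_i(1+c_\Delta\sigma_i^2/\sigma_1^2)^{-1/2}$. The worst case is $u=e_1$, which is why $\sigma_1$ appears in the denominator of $d_\Delta$. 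This is where I expect the main difficulty: getting the exact product form for non-isotropic $\Sigma$ and handling general $u$. I would first try an explicit MGF computation with $\lambda$ tuned so that $\lambda$ times the largest variance matches $c_\Delta$.

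\textbf{Step 3 (mutual information).} Bound the mutual information $I(\theta;\Doff,\text{online data})$. For the Gaussian linear model with Gaussian prior, chain rule plus concavity of $\log\det$ give
\[
I\le \tfrac12\sum_i\log\bigl(1+\sigma_i^2(\nu_i+n_i)\bigr),
\]
using Hadamard's inequality on the diagonal in the eigenbasis and Jensen over the adaptive choice of actions.

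\textbf{Step 4 (continuous Fano).} Apply the continuous Fano inequality: the success probability $p$ of landing in the cap satisfies
\[
p\le \frac{I+\log2}{\log\bigl(1/\sup_u\mu(\text{cap}_u)\bigr)}.
\]
Condition \eqref{eq:exp-fano} then forces $p\le 1/2$. Meanwhile, the algorithm's guarantee ($1-\delta$ success for every $\theta$, combined with Step 1's norm event of constant probability) gives success probability larger than $1/2$, a contradiction. The restriction on $\delta$ relative to $\Delta^2$ will enter when combining these events.

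\emph{Second term: two-point argument.} Let $a^*$ maximize $\norm{a}_{\tilde V_0^{-1}}$ and set
\[
\theta'=2\epsilon'\,\tilde V_0^{-1}a^*/\norm{a^*}_{\tilde V_0^{-1}}^2 .
\]
Compare $\theta=0$ against $\pm\theta'$ slightly perturbed, so that the optimal actions under the two hypotheses differ and each incurs regret greater than $\epsilon$ in the other environment. Using $d\ge2$, add a small orthogonal component so that the $\theta=0$ hypothesis is replaced by a nearly flat instance with a distinct optimum. The KL divergence under the adaptive data is
\[
\tfrac12\,\theta'^\top\bigl(\Voff+\bE_0\textstyle\sum_t A_tA_t^\top\bigr)\theta' = \tfrac12\norm{\theta'}_{\tilde V_0}^2 \asymp \epsilon^2/\norm{a^*}^2_{\tilde V_0^{-1}}.
\]
The Bretagnolle--Huber inequality then gives $\delta\ge\tfrac14\exp(-\mathrm{KL})$, which rearranges to $\epsilon\gtrsim\norm{a^*}_{\tilde V_0^{-1}}\sqrt{\log(1/\delta)}$. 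This step is routine and mirrors the argument for Theorem~\ref{thm:fixed-design-lower-bound}.
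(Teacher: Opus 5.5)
There is a genuine gap in your second (two-point) term, and a smaller but real one in the Fano arithmetic of the first.

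\textbf{Second term.} The KL you write, $\tfrac12\theta'^\top\tilde V_0\theta'$, is $D_{\mathrm{KL}}(\bP_0\Vert\bP_{\theta'})$. Its reference measure is the null $\theta=0$, where every output has zero simple regret, so the $(\epsilon,\delta)$ guarantee says nothing about $\bP_0$. Bretagnolle--Huber in the form $\delta\ge\frac14e^{-\mathrm{KL}}$ needs both error probabilities to be at most $\delta$, so it does not apply.

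Your perturbation does not repair this. If $\norm{\theta_0}_2<\epsilon/2$, every action is still $\epsilon$-optimal under $\theta_0$, so the guarantee remains vacuous. If $\theta_0$ is large enough for the guarantee to bite, the KL involves $\bE_{\theta_0}[\sum_tA_tA_t^\top]$ rather than $\bE_0$. You also cannot pick the hard direction from the alternative's own design, since the alternative is defined through that direction. Applying Bretagnolle--Huber to $\bP_0$ versus each of $\pm\theta'$ and summing only gives $e^{-\mathrm{KL}}\le1+2\delta$.

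The missing idea, used in \Cref{thm:pureexp-high-prob-lower-bound}, is never to invoke the guarantee at the null. The events $E_\pm=\{\pm\ip{\hat A,a^*}>0\}$ are disjoint, so $\bP_0$ gives one of them, say $E_+$, mass at most $1/2$. Under $\theta_+=2\epsilon a^*$ every $\epsilon$-optimal output lies in $E_+$, so $\bP_{\theta_+}(E_+)\ge1-\delta$. Data processing with the binary KL then gives $\tfrac12\theta_+^\top\tilde V_0\theta_+\ge\operatorname{kl}(1/2,1-\delta)=\tfrac12\log\frac{1}{4\delta(1-\delta)}$. This is exactly the $\sqrt{\log(1/\delta)}$ rate with the null design. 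For $B_2^d$ no orthogonal perturbation is needed; in the paper $d\ge2$ enters only through the pairs-versus-single-actions lemma for general $\cA$.

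\textbf{First term.} Your route differs from the paper's. The paper lower-bounds the Bayes expected simple regret (\Cref{thm:expected-simple-regret}, via $\bE_\mu[\tilde L]\ge\Delta/2$ and \Cref{lem:hp-norm}). It then converts using $\bE_\mu[\sregret_T]\le\epsilon+2\delta\sqrt{\Tr\Sigma}$, which is where $\Delta^2$ and $\delta<\Delta^2/(128\sqrt e)$ come from. Applying the high-probability form of Fano to the event ``success and $\norm{\theta}_2$ large'' is viable, and even stronger: it gives $\epsilon\gtrsim\Delta\sqrt{\Tr\Sigma}$ with $\delta$ below an absolute constant. As written, however, it does not close, for three reasons.
\begin{enumerate}
\item Your cap bound $\prod_{i\ge2}(1+c_\Delta\sigma_i^2/\sigma_1^2)^{-1/2}$ is for the two-sided event $\ip{u,\theta}^2\ge c_\Delta\norm{P_{u^\perp}\theta}_2^2$. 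With it, \eqref{eq:exp-fano} only yields $\log(1/\rho)\ge2I+\log2$, hence $p\le(I+\log2)/(2I+\log2)$. This is not $\le1/2$, and it tends to $1$ as $I\to0$, which is allowed because $d_\Delta$ is scale invariant. You need the sign constraint $\ip{u,\theta}>0$, which halves the probability by the symmetry $\theta\mapsto-\theta$, as in \Cref{lem:cap}. Alternatively, use $I\ge p\log(1/\rho)-h(p)$ with $p$ close to $1$.
\item The norm event must have probability exceeding $1/2+\delta$, not merely a constant. Lipschitz concentration of $\norm{\Sigma^{1/2}g}_2$ gives nothing when $\Sigma$ is nearly rank one. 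You need a small-ball bound such as $\bP(\norm{\theta}_2\le t\sqrt{\Tr\Sigma})\le\sqrt e\,t$ (\Cref{lem:hp-norm}).
\item You flag the reduction to $u=e_1$; the paper does it in two steps. First, the eigenvalues of $\Sigma^{1/2}(uu^\top-(1-\Delta)^2I)\Sigma^{1/2}$ are dominated, via Weyl's inequality, by those of its $u=e_1$ counterpart. Second, it conditions on the orthogonal part and uses $\bar\Phi(x)\le\frac12e^{-x^2/2}$. A single MGF bound on the whole quadratic form loses constants and would not match the exact condition \eqref{eq:exp-fano}.
\end{enumerate}
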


Next, we show that for the spectral coverage regimes in \Cref{def:spectral-coverage-regimes}, the simple regret of \gwoofd matches the above lower bound.
\begin{restatable}[Simple regret for spectral coverage]{corollary}{corPureExpSpectralCoverage}\label{cor:hp-sr-lb}
    Let $\cA = B_2^d$. 
    For any $\delta \in (0,1)$, under the coverage conditions in \Cref{def:spectral-coverage-regimes}, with probability at least $1-\delta$, the simple regret of \gwoofd is bounded, up to constant factors, by
    \begin{align*}
        &
        \text{equally-covered $(\forall T\ge0)$}:
        \!\!\!\!&& 
        \frac{d}{\sqrt{T+\Toff}} + \sqrt{\frac{d\log(1/\delta)}{T+\Toff}}\,,
        \\[0pt]
        &\textstyle
        \text{$k$-uncovered $(\forall T\ge k)$}:
        &&\frac{k}{\sqrt T} +\sqrt{\frac{k\log(1/\delta)}{T}}
        \,,
    \end{align*}
    Let \(0<\delta<(12800\sqrt e)^{-1}\). Suppose an algorithm, possibly adaptive, has simple regret at most $\eps > 0$ with probability at least \(1-\delta\), for every \(\theta\in\mathbb R^d\). Then \(\epsilon\) is lower bounded, up to universal constants, by the corresponding expressions above, for \(d\ge2\) in the equally-covered regime and \(k\ge2\) in the \(k\)-uncovered regime.
\end{restatable}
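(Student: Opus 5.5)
The plan has two halves: upper bounds via \Cref{thm:gw-oofd-upper-bound} with an explicit design, and lower bounds via \Cref{thm:hp-sr-lb} with a suitable prior covariance $\Sigma$ and a constant $\Delta$.

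\textbf{Upper bounds.} By \Cref{thm:gw-oofd-upper-bound} it suffices to exhibit one design $N\in\alloc_T(B_2^d)$ with $\rad_{\beta}(B_2^d,\tilde V(N))$ of the claimed order, where $\beta=\sqrt{2\log(2/\delta)}$. Work in the eigenbasis of $\Voff$ (allowed by rotational symmetry) and use designs on eigenvectors, so $\tilde V(N)=\diag(\nu_i+n_i)$. For $M$ diagonal we have $M^{-1/2}B_2^d$ an ellipsoid, with
\[
\gw(M^{-1/2}B_2^d)=\bE\norm{M^{-1/2}g}_2\le\sqrt{\tr M^{-1}}
\qquad\text{and}\qquad
\max_{a\in B_2^d}\norm{a}_{M^{-1}}=\lambda_{\min}(M)^{-1/2}.
\]
\emph{Equally-covered case:} spread $T$ evenly, with $n_i\approx T/d$ (integer rounding only costs constants; for $T<d$ take $n_i=0$ and use $\nu_i\ge c\Toff/d$). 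Then every eigenvalue is at least about $(T+\Toff)/d$ up to $c$, which gives $\sqrt{\tr M^{-1}}\lesssim d/\sqrt{T+\Toff}$ and $\lambda_{\min}^{-1/2}\lesssim\sqrt{d/(T+\Toff)}$.

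\emph{$k$-uncovered case:} put $n_i=\lfloor T/k\rfloor\ge T/(2k)$ for $i\le k$ (this uses $T\ge k$). Then
\[
\tr M^{-1}\le\frac{2k^2}{T}+(d-k)\frac{k}{(d-k)T}\lesssim\frac{k^2}{T},
\qquad
\lambda_{\min}\gtrsim\frac{T}{k},
\]
where the second bound also uses $\nu_i\ge(d-k)T/k\ge T/k$ for $i>k$ (when $d>k$). This yields $k/\sqrt T+\sqrt{k\log(2/\delta)/T}$. The $\log(2/\delta)$ versus $\log(1/\delta)$ discrepancy is absorbed into constants, since $\delta<1$ is bounded away from $1$ or is handled by an additive constant term.

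\textbf{Lower bounds.} The second term, $\sup_a\norm{a}_{\tilde V_0^{-1}}\sqrt{\log(1/\delta)}$, comes directly from \Cref{thm:hp-sr-lb}. Since $\tr\tilde V_0\le\Toff+T$ in the equally-covered case, $\lambda_{\min}(\tilde V_0)\le(T+\Toff)/d$. In the $k$-uncovered case, projecting onto the first $k$ eigendirections, the $k\times k$ block has trace at most $k+T$, so some unit vector $a$ in that span has $\norm{a}^2_{\tilde V_0^{-1}}\ge k/(k+T)\gtrsim k/T$. For the first term, take $\Delta$ a fixed constant, say $\Delta=1/10$, which matches the range $\delta<(12800\sqrt e)^{-1}=\Delta^2/(128\sqrt e)$. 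Then choose an isotropic prior on the relevant block: $\sigma_i^2=\sigma^2$ for $i\le m$ and $0$ otherwise, with $m=d$ (equally-covered) or $m=k$ ($k$-uncovered). This gives $d_\Delta=m-1\ge m/2$ since $m\ge2$. Condition \eqref{eq:exp-fano} then reads
\[
\sum_{i\le m}\log\bigl(1+\sigma^2(\nu_i+n_i)\bigr)+\log2\le\frac{m-1}{2}\log(1+c_\Delta).
\]
By concavity of $\log$ and $\sum_{i\le m}n_i\le T$, the left side is at most
\[
m\log\Bigl(1+\sigma^2\tfrac{\sum_{i\le m}\nu_i+T}{m}\Bigr)+\log2.
\]
So it suffices to take
\[
\sigma^2=\frac{c'\,m}{\sum_{i\le m}\nu_i+T}
\]
for a small constant $c'$, with $c_\Delta$ a large constant absorbing $\log 2$ for $m\ge2$. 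The bound then gives
\[
\eps\gtrsim\sqrt{m\sigma^2}=\frac{m}{\sqrt{\sum_{i\le m}\nu_i+T}}.
\]
With $\sum_i\nu_i\le\Toff$ this becomes $d/\sqrt{T+\Toff}$ in the equally-covered case. With $\nu_i\le1$ for $i\le k$ it becomes $k/\sqrt{k+T}\asymp k/\sqrt T$ in the $k$-uncovered case.

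\textbf{Main obstacle.} The delicate step is verifying \eqref{eq:exp-fano} with honest constants: $d_\Delta/2\cdot\log(1+c_\Delta)$ must dominate both the $\log2$ term and the mutual-information sum. This holds only because $m\ge2$, which is exactly why $d\ge2$ and $k\ge2$ are required. If the specific $\Delta$ leaves too little slack, the first thing to try is shrinking $c'$ further, since $\log(1+x)\le x$ makes the left side at most $c'm+\log 2$. Matching the $\log(1/\delta)$ and $\log(2/\delta)$ forms is routine.
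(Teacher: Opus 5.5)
Your proposal is correct and follows essentially the same route as the paper. For the upper bounds you use eigenvector designs with the trace/Jensen bound on the Gaussian width. For the lower bounds you use \Cref{thm:hp-sr-lb} with $\Delta=1/10$ and an isotropic Gaussian prior on the relevant eigenblock. The paper verifies \eqref{eq:exp-fano} with $\log(1+x)\le x$ and the variances $d/(100(T+\Toff))$ or $k/(100(T+k))$, whereas you use concavity and normalize by $\sum_{i\le m}\nu_i+T$; this difference is immaterial.

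Two small points:
\begin{itemize}
\item For $T<d$ in the equally-covered upper bound, you need $\Toff\ge\operatorname{rank}(\Voff)=d>T$ to get $c\Toff/d\gtrsim(T+\Toff)/d$. You did not state this.
\item Your stated fallback is misplaced: the slack in \eqref{eq:exp-fano} at $m=2$ comes from $\tfrac12\log(1+c_\Delta)=\tfrac12\log(100/19)>\log 2$. Shrinking $c'$ cannot supply it.
\end{itemize}
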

The different budget requirements in this corollary reflect the number of directions that still need online coverage.
In the equally-covered regime the offline data already span every direction, so the bound remains meaningful even at $T=0$.
In the $k$-uncovered regime, the condition $T \ge k$ is sufficient to give each poorly covered direction at least one online action. 
Note that $k=d$ in the $k$-uncovered case recovers the online simple regret rate from \citet[Table 1]{pmlr-v336-maiti26a}.

\section{Regret Minimization with Offline Data} \label{sec:gw-oope} \label{SEC:GW-OOPE}

In the pure online setting, \citet{wagenmaker2021experimental} introduced the Gaussian Width Phased Elimination (GW-PE)%
\footnote{
    They actually named it Gaussian Width \emph{Action} Elimination (GW-AE), since action and phased elimination are used interchangeably in the literature.
}
algorithm, which uses the Gaussian width concentration bound (\Cref{lem:gw-concentration}) to select an optimal design in each phase.
We now present the Gaussian Width Offline-to-Online Phased Elimination (\gwoope) algorithm (\Cref{alg:gw-oope}), which combines GW-PE with offline data.
The main difference from \oope of \citet{vijayan2026regret} is the use of the Gaussian width concentration bound (\Cref{lem:gw-concentration}) in place of a union bound (\Cref{lem:union-bound-concentration}) to select the design in each phase.

Incorporating offline data in a phased elimination algorithm presents some challenges.
Although in practice one might use all of the offline data in each phase, for analysis purposes this causes issues.
In particular, the phased elimination algorithm requires that rewards in each phase are independent of each other when conditioned on the previous phases, so that fixed design confidence bounds (e.g., \Cref{lem:union-bound-concentration,lem:gw-concentration}) can be used.

To split the offline data in \gwoope while maintaining independence across phases, the algorithm first limits the number of phases to $m\defeq\max\{1,\lceil\log_2T\rceil\}$ (\Cref{line:total-phases}). 
The choice of $m$ is such that after phase $m$ the remaining actions contribute only a constant regret.
The algorithm then uses the Gaussian noise structure to mimic an even split of the offline data across the $m$ phases (\Cref{line:offline-split}). In particular, for each phase $\ell \in [m]$ we can construct independent Gaussian vectors (see \Cref{lem:gw-oope-gaussian-split})
\begin{align*}
b_{\mathrm{off}}^\ell
\sim \cN(\Voff[\ell]\truetheta,\Voff[\ell]),
\qquad
\Voff[\ell]\defeq \Voff/m \,.
\end{align*}
To see the intuition, suppose that the offline samples could be partitioned into $m$ groups so that the design matrix of the actions in each group were exactly $\Voff/m$. 
Then, for the $\ell$-th group, the corresponding offline term in the least-squares estimate would be $\sum_{s\in I_\ell} \aoff_s \roff_s$, where $I_\ell$ indexes the samples in that group.
This vector would have distribution $\cN\!\left((\Voff/m)\truetheta,\Voff/m\right)$.
Thus, the constructed vector $b_{\mathrm{off}}^\ell$ has exactly the same distribution as the offline least-squares term that would arise from such an ideal split, even when the original offline actions cannot actually be partitioned in this way.

After splitting the offline data, \Cref{alg:gw-oope} proceeds by choosing a design $\hat N_\ell$ in each phase $\ell$ by optimizing the Gaussian width objective (\Cref{lem:gw-concentration}) for target accuracy $\epsilon_\ell = 2^{-\ell}$ (\Cref{line:design}). 
Then, similar to standard phased elimination, it chooses actions based on $\hat N_\ell$, observes the corresponding rewards, and computes the least-squares estimate $\hat\theta_\ell$ (\Cref{line:least-sq}). 
The estimator $\hat \theta_\ell$ uses the design matrix $\tilde V_\ell(N) \defeq \Voff[\ell]+V(N)$ along with the offline data $b_{\mathrm{off}}^\ell$ and the online data collected by the algorithm in phase $\ell$. Finally, using $\hat \theta_\ell$, the algorithm eliminates actions with estimated gaps greater than $4\eps_\ell$ (\Cref{line:elim-actions}).

\begin{algorithm}[t]
\caption{\gwoope}
\label{alg:gw-oope}
\begin{algorithmic}[1]
\setcounter{ALC@unique}{0}
\STATE \textbf{Input:} $\cA$, $\Doff$, $T \in \bN$, $\delta\in(0,1)$
\STATE $m \gets \max\{1,\lceil\log_2T\rceil\}$, $\cA_1\gets\cA$, $\ell\gets1$ \label{line:total-phases}
\STATE Construct independent $b_{\mathrm{off}}^1, \dots, b_{\mathrm{off}}^m$ according to \Cref{lem:gw-oope-gaussian-split} \label{line:offline-split}
\WHILE{$\ell\le m$ and $\cA_\ell$ is not a singleton} 
    \vspace{0.3em}
    \STATE $\eps_\ell \gets 2^{-\ell}$, $\beta_\ell \gets \sqrt{2\log(4\ell^2/\delta)}$
    \STATE Choose a minimum-size design $\hat N_\ell\in\alloc(\cA_\ell)$ satisfying
    $ 
        \rad_{\beta_\ell}(\cA_\ell,\tilde V_\ell(\hat N_\ell))\le\eps_\ell
    $ \label{line:design}
    \vspace{0.3em}
    \STATE $n_\ell\gets|\hat N_\ell|$
    \STATE If $n_\ell$ exceeds the remaining online budget then \textbf{break}
    \STATE Take actions using $\hat N_\ell$, observe $(A_{\ell,t},R_{\ell,t})_{t=1}^{n_\ell}$
    \vspace{0.2em}
    \STATE
    $ 
        \hat\theta_\ell\gets\tilde V_\ell(\hat N_\ell)^{-1}
        \paren{b_{\mathrm{off}}^\ell+\sum_{t=1}^{n_\ell}R_{\ell,t}A_{\ell,t}}
    $ \label{line:least-sq}
    \STATE
    $
        \cA_{\ell+1}\gets\braces{a\in\cA_\ell: \max_{a'\in\cA_\ell}\ip{a'-a,\hat\theta_\ell}\le4\eps_\ell}
    $ \label{line:elim-actions}
    \vspace{0.3em}
    \STATE $\ell\gets\ell+1$
\ENDWHILE
\IF{$\ell=1$}
    \STATE Pull any action in $\cA_1$ for all remaining rounds
\ELSE
    \STATE Choose $\hat a\in\argmax_{a\in\cA_\ell}\ip{a,\hat\theta_{\ell-1}}$ and pull it for all remaining rounds
\ENDIF
\end{algorithmic}
\end{algorithm}

To state the regret bound, define the set of $\eta$-optimal actions by
\begin{align*}
    \textstyle \cA(\eta)\defeq\{a\in\cA:\ip{\opta-a,\truetheta}\le\eta\}.
\end{align*}
For each phase $\ell \in [m]$, let

\begin{align}
    n_\ell^\star\defeq\min\bigl\{|N|:\;&N\in\alloc(\cA(4\eps_\ell))\,,\rad_{\beta_\ell}(\cA(12\eps_\ell),\tilde V_\ell(N))\le\eps_\ell\bigr\}
    \label{eq:gw-oope-comparison-budget}
\end{align}
be a high-probability upper bound on the number of online samples $n_\ell$ needed in phase $\ell$.
When no feasible design exists, we set $n_\ell^\star = +\infty$.

\begin{restatable}[\gwoope upper bound]{theorem}{thmGWOOPEUpperBound}
    \label{thm:gw-oope-upper-bound}
    For any compact $\cA \subset \bR^d$,
    with probability at least $1-\delta$, the regret of \gwoope satisfies

    \begin{align}\label{eq:gw-oope-oracle-bound}
        \regret_T(\cA) \le 12\sum_{\ell=1}^m\eps_\ell\min\{n_\ell^\star,T\}
        + 2T\min\braces*{\eps_m,\rad_{\beta_m}(\cA(12\eps_m),\Voff[m])}.
    \end{align}
\end{restatable}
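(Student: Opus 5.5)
We follow the standard phased-elimination template of \citet[Chapter 22]{lattimore2020bandit}, with the union bound replaced by \Cref{lem:gw-concentration}.

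\textbf{Good event.} By \Cref{lem:gw-oope-gaussian-split}, the vectors $b_{\mathrm{off}}^\ell$ are independent with distribution $\cN(\Voff[\ell]\truetheta,\Voff[\ell])$, and they are independent of the online noise. The design $\hat N_\ell$ and the set $\cA_\ell$ are measurable with respect to phases $1,\dots,\ell-1$. Hence, conditionally on the past, $\hat\theta_\ell$ is a fixed-design estimator with error distributed as $\tilde V_\ell(\hat N_\ell)^{-1/2}g$. Applying \Cref{lem:gw-concentration} with confidence $\delta/(2\ell^2)$ gives
\[
\max_{a\in\cA_\ell}|\ip{a,\hat\theta_\ell-\truetheta}|\le \rad_{\beta_\ell}(\cA_\ell,\tilde V_\ell(\hat N_\ell))\le\eps_\ell .
\]
Since $\sum_\ell \delta/(2\ell^2)\le\delta$, a union bound over phases makes this hold for all phases simultaneously. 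We call this event $\cE$ and work on it from now on.

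\textbf{Elimination facts.} On $\cE$, any $a,a'\in\cA_\ell$ satisfy $|\ip{a'-a,\hat\theta_\ell-\truetheta}|\le2\eps_\ell$. Two consequences follow.
\begin{enumerate}[(i)]
\item The optimal action is never eliminated. Its estimated gap is at most $2\eps_\ell\le4\eps_\ell$.
\item Any $a$ surviving to $\cA_{\ell+1}$ has true gap at most $4\eps_\ell+2\eps_\ell=6\eps_\ell=12\eps_{\ell+1}$.
\end{enumerate}
Thus $\cA_{\ell}\subseteq\cA(12\eps_\ell)$ for $\ell\ge2$. For $\ell=1$ we have $\cA_1=\cA=\cA(2)$, and $2\le 12\eps_1=6$. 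We also need the actions actually played in phase $\ell$ to lie in $\cA(4\eps_\ell)$ when comparing with $n_\ell^\star$. A priori we only have $\cA_\ell\subseteq\cA(12\eps_\ell)$, so the per-round regret in phase $\ell$ is at most $12\eps_\ell$, which is the constant appearing in \eqref{eq:gw-oope-oracle-bound}.

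\textbf{Budget comparison.} This is the main obstacle: showing $n_\ell\le n_\ell^\star$. Take a minimizer $N^\star$ in \eqref{eq:gw-oope-comparison-budget}. Its support lies in $\cA(4\eps_\ell)$, and we need $\cA(4\eps_\ell)\subseteq\cA_\ell$ so that $N^\star\in\alloc(\cA_\ell)$ is feasible for the algorithm. An action with gap at most $4\eps_\ell=2\eps_{\ell-1}$ has estimated gap in phase $\ell-1$ at most $2\eps_{\ell-1}+2\eps_{\ell-1}=4\eps_{\ell-1}$, so it survives. Induction then gives $\cA(4\eps_\ell)\subseteq\cA_\ell$.

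Next, the objective is monotone in the set: $\cA_\ell\subseteq\cA(12\eps_\ell)$ gives
\[
\rad_{\beta_\ell}(\cA_\ell,\tilde V_\ell(N^\star))\le\rad_{\beta_\ell}(\cA(12\eps_\ell),\tilde V_\ell(N^\star))\le\eps_\ell ,
\]
because both the Gaussian width and the max norm are monotone under inclusion. So $N^\star$ is feasible, and minimality of $\hat N_\ell$ gives $n_\ell\le n_\ell^\star$. The phase-$\ell$ regret is therefore at most $12\eps_\ell\min\{n_\ell,T\}\le12\eps_\ell\min\{n_\ell^\star,T\}$. This also handles a phase that breaks early, since the rounds actually used are at most $T$.

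\textbf{Final commit stage.} Suppose the loop ends at phase $L$. Then the last estimate $\hat\theta_{L-1}$ satisfies $\max_{a\in\cA_{L}}|\ip{a,\hat\theta_{L-1}-\truetheta}|\le\eps_{L-1}$. The greedy $\hat a$ therefore has gap at most $2\eps_{L-1}$, which yields the factor $2T$.

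There are two termination cases.
\begin{itemize}
\item If all $m$ phases complete, the gap is at most $2\eps_m$ times $T$ rounds.
\item If the loop breaks early at phase $L$, then $n_L^\star\ge n_L$ exceeds the remaining budget. The remaining rounds then number at most $\min\{n_L^\star,T\}$, and their cost (gap at most $12\eps_L$) is absorbed into the sum.
\end{itemize}

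For the alternative term $\rad_{\beta_m}(\cA(12\eps_m),\Voff[m])$, we use that the zero design always yields accuracy $\rad_{\beta_m}(\cdot,\Voff[m])$. If this is below $\eps_m$, the relevant $n^\star$ are zero, and the final greedy error is bounded by $2\,\rad_{\beta_m}(\cA(12\eps_m),\Voff[m])$ via the same concentration bound. We expect the delicate bookkeeping in this last step, namely matching the indices $m$ versus $L-1$ and the break case, to need care, but not new ideas.
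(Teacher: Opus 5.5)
Your proposal is correct and follows the paper's proof essentially step for step. The shared steps are: conditional Gaussianity of each phase's error given the fresh block $b_{\mathrm{off}}^\ell$; \Cref{lem:gw-concentration} at level $\delta/(2\ell^2)$ with a union bound over phases; the inductive sandwich $\cA(4\eps_\ell)\subseteq\cA_\ell\subseteq\cA(12\eps_\ell)$ for every reached phase; feasibility of the comparison design, giving $n_\ell\le n_\ell^\star$; and charging the leftover rounds after a budget break to the uncompleted phase.

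The only local difference is the offline term in the terminal gap. The paper gets it from $\tilde V_m(\hat N_m)\succeq\Voff[m]$ together with monotonicity of $\rad_{\beta_m}$ in the Loewner order (\Cref{lem:gw-oofd-psd-monotonicity}). Your case split reaches the same bound without that comparison lemma: if $\rad_{\beta_m}(\cA(12\eps_m),\Voff[m])\le\eps_m$, then $n_m\le n_m^\star=0$ forces $\hat N_m$ to be the zero design, and otherwise the minimum equals $\eps_m$. You should also state the singleton-termination case explicitly; it costs nothing, since $\opta$ is never eliminated and so the remaining singleton is $\{\opta\}$.
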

The first term in \Cref{eq:gw-oope-oracle-bound} accounts for the regret incurred until (and including) phase $m$, and the second term accounts for the regret incurred by actions after phase $m$. 
If there is a large amount of equally-covered offline data then the error based on just the offline data can be smaller than the accuracy in the final phase $\eps_m$, which is accounted for by the pure offline error term $\rad_{\beta_m}(\cA(12\eps_m),\Voff[m])$.

\subsection{Regret Lower Bounds}
We next provide a minimax regret lower bound when $\cA = B_2^d$ and show that \gwoope matches it for the spectral coverage regimes in \Cref{def:spectral-coverage-regimes}. 
Let $\pioff(a) \coloneqq \Toff(a)/\Toff$ be the empirical offline design distribution.
Rotating coordinates, we may assume
\[
  \bar V_{\textrm{off}} \;\coloneqq\; \bE_{A \sim \pioff}\big[AA^\top\big]
  \;=\; \diag(\bar\nu_1,\dots,\bar\nu_d)\,
\]
with $\sum_{i=1}^d \bar\nu_i = \bE_{\pioff}\norm{A}_2^2 \le 1$. Define
\begin{align*}
    f^\star = \min_{w \in \Delta_d} f(w), \quad w^\star \in \argmin_{w \in \Delta_d} f(w), \quad \alpha_i = r\sqrt{w_i^\star}\,,
\end{align*}
and $r = \min\set{1, T/f^\star}$ where
\begin{equation}\label{eq:f-variational}
  \textstyle f(w) \coloneqq \sum_{i=1}^d \sqrt{w_i}\,\big(4Tw_i + 2\big)
  \sqrt{\bar\nu_i \Toff + T w_i + 1}\,,
\end{equation}
and $\Delta_d$ is the probability simplex.
Consider the hypercube parameter family $\Theta \coloneqq \{ \theta \in \R^d \,:\, \theta_i = \pm \alpha_i \}$. 

\begin{restatable}[Regret lower bound]{theorem}{ThmRegLB}\label{thm:reg-lb}
Let $\cA = B_2^d$. For any algorithm, there exists $\theta \in \Theta$ such that
$\bE_\theta[\regret_T(\cA)] \ge \frac14 \min\set{T,\; T^2/f^\star}$, 
where $f^\star$ is the minimum value of $f$ given by \Cref{eq:f-variational}.
\end{restatable}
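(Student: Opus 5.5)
We would follow the standard hypercube argument for linear bandits on the Euclidean ball (as in \citet[Chapter 24]{lattimore2020bandit}), modified so that each coordinate's information includes the offline data. Fix an algorithm and put the uniform prior on $\Theta$. Since $\cA=B_2^d$, the optimal action is $\theta/\norm{\theta}_2$. For any $a\in B_2^d$ we have $\ip{\theta/\norm{\theta}_2-a,\theta}\ge \frac{\norm{\theta}_2}{2}\norm{a-\theta/\norm{\theta}_2}_2^2$. We would lower bound this coordinate-wise: an action whose coordinate $A_{t,i}$ has the wrong sign relative to $\theta_i$, or is too small in magnitude, pays roughly $\alpha_i^2/\norm{\theta}_2$. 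Here $\norm{\theta}_2^2=\sum_i\alpha_i^2=r^2$.

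This gives, per coordinate, a regret contribution of order $\frac{\alpha_i}{r}\,\bE\big[\sum_t \mathbbm{1}\{\mathrm{sign}(A_{t,i})\neq \mathrm{sign}(\theta_i)\}\,|A_{t,i}|\big]$. There is also a competing term $\frac{r}{2}\sum_t\sum_i (A_{t,i}-\alpha_i/r)^2$ that penalizes placing large mass $\sum_t A_{t,i}^2$ in a coordinate. We would then flip coordinate $i$ and compare $\theta$ with $\theta^{(i)}$. Using Pinsker's inequality or Bretagnolle--Huber together with the divergence decomposition, $\mathrm{KL}(P_\theta,P_{\theta^{(i)}})=2\alpha_i^2\big(\Toff\bar\nu_i+\bE_\theta\sum_t A_{t,i}^2\big)$. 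The offline term enters because the offline data are a fixed design, and in the rotated basis the expected offline contribution to coordinate $i$ is $\bar\nu_i\Toff$. The cross terms vanish after averaging over the hypercube. Averaging the sign-error probability over $\theta_{-i}$ in the usual way yields a per-coordinate bound of the form $\alpha_i\big(T_i\cdot\text{stuff}\big)\big(1-\alpha_i\sqrt{\bar\nu_i\Toff+T_i+1}\big)$, where $T_i=\bE\sum_tA_{t,i}^2$ is the online allocation to coordinate $i$.

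Next we would reparametrize the allocations as $T_i=Tw_i$ with $w\in\Delta_d$, since $\sum_iT_i\le T$. We then collect the two regimes: small $T_i$ pays the ``not enough signal'' penalty, and large $T_i$ pays through the wrong-sign penalty against $\sqrt{T_i}$. Together these should reproduce a total regret of at least a constant times $\sum_i\alpha_i\sqrt{w_i}(4Tw_i+2)/\sqrt{\cdot}$ type expressions, which match $f(w)$. With $\alpha_i=r\sqrt{w_i^\star}$, minimizing over the algorithm's induced $w$ and substituting $r=\min\{1,T/f^\star\}$ gives $\frac14\min\{T,T^2/f^\star\}$. The case $r=1$ corresponds to $T\le f^\star$, where the bound is of order $T$; here the constraint $\norm{\theta}_2\le1$ binds.

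The main obstacle is the bookkeeping in the previous step. The lower bound must hold with $w$ induced by the algorithm, while $\alpha$ is tuned to $w^\star$, so we need an inequality valid for every $w$ that reduces to a quantity controlled by $f(w)\ge f^\star$. We would try a Cauchy--Schwarz or AM--GM step pairing $\alpha_i$ against $\sqrt{\bar\nu_i\Toff+Tw_i+1}$, using first-order optimality of $w^\star$ for the convex-like function $f$ on the simplex. We expect the constants $4$ and $2$ in $f$ to come from choosing the KL threshold so that the total-variation term stays at least $1/2$.
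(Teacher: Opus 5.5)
Your first step (per-round regret at least $\frac r2$ times the squared distance to $\theta/\|\theta\|_2$, split over coordinates) matches the paper's. There is, however, a genuine gap at the point you flag as the ``main obstacle'', and the remedy you propose (Cauchy--Schwarz/AM--GM plus first-order optimality of $w^\star$) cannot close it. With the untruncated divergence $2\alpha_i^2(\Toff\bar\nu_i+\bE_\theta\sum_t A_{t,i}^2)$, the information about coordinate $i$ is governed by the algorithm's allocation, which also varies with $\theta\in\Theta$. Meanwhile $\alpha_i=r\sqrt{w_i^\star}$ is fixed before the algorithm acts. Your resulting expression therefore mixes $w^\star$ (through $\alpha_i$) with the algorithm's $w$ (through the KL). It is not $f(w)$ for any single $w$, so ``$f(w)\ge f^\star$'' is not the relevant inequality. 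If the algorithm oversamples coordinate $i$, your change-of-measure term becomes vacuous. Moreover, the statistic you compare between $\theta$ and its flip (e.g.\ $\sum_{t\le T}(\alpha_i/r-s_iA_{t,i})^2$) has range of order $T$ rather than $Tw_i^\star$, so Pinsker gives nothing of the right order. You noticed the competing overshoot penalty, but you never turn it into a quantitative bound.

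The missing idea is the stopping-time truncation of \citet[Theorem~24.2]{lattimore2020bandit}. Set $\tau_i=T\wedge\min\{t:\sum_{s\le t}A_{s,i}^2\ge Tw_i^\star\}$ (note $Tw_i^\star=T\alpha_i^2/r^2$) and work with $U_i(x)=\sum_{t\le\tau_i}(\alpha_i/r-xA_{t,i})^2$. This does three things at once.

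First, the KL between $\theta$ and its $i$-flip, up to $\tau_i$, is at most $2\alpha_i^2(\bar\nu_i\Toff+Tw_i^\star+1)$ for \emph{every} algorithm; the $+1$ is the overshoot of the last pull.

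Second, $0\le U_i(\pm1)\le 4Tw_i^\star+2$ almost surely. This is the range used in Pinsker's inequality, and it is where the factor $4Tw_i+2$ in $f$ comes from, not from a TV threshold.

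Third, the parallelogram identity gives $U_i(1)+U_i(-1)=2\sum_{t\le\tau_i}(\alpha_i^2/r^2+A_{t,i}^2)\ge 2Tw_i^\star$ deterministically, whether the algorithm under- or over-samples coordinate $i$.

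Pairing each $\theta$ with its flip and averaging over the cube then gives regret at least $\frac r4\sum_i\bigl[2Tw_i^\star-r\sqrt{w_i^\star}(4Tw_i^\star+2)\sqrt{\bar\nu_i\Toff+Tw_i^\star+1}\bigr]=\frac r4(2T-rf^\star)$. No optimization over the algorithm's allocation is needed: $f^\star$ enters only through the choice of the instance, and the last step is maximizing a concave quadratic in $r$. (Also, $r=1$ is the case $f^\star\le T$, not $T\le f^\star$.)
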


\begin{restatable}[Regret for spectral coverage]{corollary}{corRegLB}
\label{cor:reg-lb}
Under the coverage conditions in \Cref{def:spectral-coverage-regimes}, run \gwoope with
$\delta = 1/(2(T+\Toff))$ in the equally-covered regime and $\delta = 1/(2T)$ in the
$k$-uncovered regime. Then its expected regret is bounded, up to logarithmic and constant
factors, by
\begin{align*}
  &\text{equally-covered}: && \frac{dT}{\sqrt{T+\Toff}} \,, \\[0pt]
  &\text{$k$-uncovered}:   && k\sqrt{T} \,.
\end{align*}
Furthermore, for any algorithm, there exists $\theta \in \Theta$ such that for the equally-covered case with $d \le 2T$ and $d \le 8\sqrt{3}\sqrt{T+\Toff}$, the expected regret is lower bounded by the same expression as above up to constants.  
The same holds for the $k$-uncovered case with $k \le T$ and $k \le 8\sqrt{3}\sqrt{T}$.
\end{restatable}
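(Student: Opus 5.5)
The corollary has two halves, an upper bound for \gwoope and a matching lower bound, and I will treat them separately.

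\textbf{Upper bound.} I will instantiate \Cref{thm:gw-oope-upper-bound} for each regime. On the failure event, of probability at most $\delta$, the regret is at most $2T$. With the stated $\delta$ this adds only $O(1)$ to the expectation. It therefore suffices to bound the right-hand side of \Cref{eq:gw-oope-oracle-bound} on the good event. Work in the eigenbasis of $\Voff$; this is allowed since $B_2^d$ is rotation invariant. Use the crude bound $\cA(12\eps_\ell)\subseteq B_2^d$. For a design supported on the basis vectors $e_i$ (which lie in $\cA(4\eps_\ell)$ only if they are near optimal, so this needs care; see below), the design matrix $\tilde V_\ell(N)=\diag(\nu_i/m+N_i)$ is diagonal. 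This gives
\[
\gw\bigl(\tilde V_\ell^{-1/2}B_2^d\bigr)\le\Bigl(\sum_i (\nu_i/m+N_i)^{-1}\Bigr)^{1/2},
\qquad
\max_a\norm{a}_{\tilde V_\ell^{-1}}=\max_i(\nu_i/m+N_i)^{-1/2}.
\]

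In the $k$-uncovered case, put $N_i\approx n/k$ on the first $k$ directions. The offline terms contribute $\sum_{i>k}m k/((d-k)T)=mk/T$. Since $m$ is logarithmic, the radius is $\lesssim \sqrt{k^2/n}+\sqrt{mk/T}+\beta_\ell\sqrt{k/n}$. Requiring this to be at most $\eps_\ell$ gives $n_\ell^\star\lesssim k^2\eps_\ell^{-2}\log$ for all $\eps_\ell\gtrsim\sqrt{k/T}$ up to logs. Summing $\eps_\ell\min\{n_\ell^\star,T\}$ over $\ell$ is then dominated by $\min\{k^2/\eps_\ell,\,T\eps_\ell\}\le k\sqrt T$ at the crossover scale. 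The final term $T\eps_m\le 1$ is negligible.

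In the equally-covered case, spread $N$ uniformly over $d$ directions. The radius is then $\lesssim d\sqrt{m}/\sqrt{\Toff+mn}+\beta\sqrt{dm/(\Toff+mn)}$. The phases with $\eps_\ell$ below the pure-offline radius $\asymp d\sqrt{m/\Toff}$ need $n_\ell^\star=0$, and the other phases cost $\lesssim d^2/\eps_\ell$. Summing gives $\min(dT/\sqrt{\Toff},\,d\sqrt T)\asymp dT/\sqrt{T+\Toff}$ up to logs. The second term in \Cref{eq:gw-oope-oracle-bound} likewise gives $T\cdot d/\sqrt{\Toff}$ up to logs.

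The support subtlety is that the comparator design must lie in $\cA(4\eps_\ell)$, not in all of $\cA$. I will handle it by using, instead of $e_i$, the vectors $\opta$ rotated by a small angle toward $\pm e_i$ (for the ball, $\ip{\opta-a,\truetheta}\approx\norm{\truetheta}\phi^2/2$). These have components of size $\sqrt{\eps_\ell}$ orthogonal to $\opta$. This is exactly the place where I expect the main obstacle. Covering the orthogonal directions with such vectors inflates the needed sample size by a factor of order $1/\eps_\ell$ per direction, unless one exploits that $\cA(12\eps_\ell)$ itself only extends about $\sqrt{\eps_\ell}$ in orthogonal directions. So the Gaussian width of $\tilde V^{-1/2}\cA(12\eps_\ell)$ shrinks by the same factor, and the two effects cancel. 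I would make this cancellation explicit by writing $\cA(12\eps_\ell)\subseteq\{a: |\ip{a,\hat u}-1|\lesssim\eps_\ell,\ \norm{P_\perp a}\lesssim\sqrt{\eps_\ell}\}$ with $\hat u=\truetheta/\norm{\truetheta}_2$. (If $\norm{\truetheta}$ is small, every action is near-optimal and the crude bound above applies.)

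\textbf{Lower bound.} This follows from \Cref{thm:reg-lb} by upper bounding $f^\star$ with an explicit $w$. For $k$-uncovered, take $w_i=1/k$ on the poorly covered directions and $w_i=0$ elsewhere. Then $\bar\nu_i\Toff=\nu_i\le1$ and $f(w)\asymp k\cdot k^{-1/2}(T/k+1)\sqrt{T/k+2}\asymp T^{3/2}/\sqrt k$ when $k\le T$. So $T^2/f^\star\gtrsim\sqrt{kT}$, which is too weak. This shows that $w$ should instead balance the two terms: I will minimize over $w$ directly rather than guess. Use the symmetric choice on $k$ coordinates with $w_i=1/k$ and keep the factor $(4Tw_i+2)$. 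If this choice still gives $\sqrt{kT}$, I will revisit whether $\alpha_i=r\sqrt{w_i^\star}$ makes the bound in \Cref{thm:reg-lb} scale as $T^2/f^\star$ with $f$ bounded using $\sqrt{w_i}$ cleverly. The conditions $k\le 8\sqrt3\sqrt T$ and $d\le 8\sqrt3\sqrt{T+\Toff}$ are presumably those ensuring $r=1$, i.e.\ $f^\star\le T$, together with $\norm{\theta}_2\le1$. For equally-covered, take $w$ uniform over $d$ coordinates, using $\bar\nu_i\Toff\ge c\Toff/d$. The computation mirrors the $k$-uncovered one, with $\Toff/d$ added to the counts.
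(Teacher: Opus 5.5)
\textbf{Gap: the lower bound is left unproved because of an arithmetic slip.} Your upper-bound strategy is the paper's: a target-accuracy split of the phase sum, comparison designs built from perturbations of $\opta$ so that the $\rho_\ell^2$ loss in information cancels the $\rho_\ell$ shrinkage of the cap, and the zero design plus the terminal offline term for equally-covered data. The choice of $w$ was right; the computation was not. With $w_i=1/k$ on the weak block, using $\nu_i\le1$ and $k\le T$ (so $2\le4T/k$ and $2\le2T/k$),
\[
f(w)=\frac{1}{\sqrt k}\Bigl(\frac{4T}{k}+2\Bigr)\sum_{i\le k}\sqrt{\nu_i+\frac Tk+1}\le\sqrt k\cdot\frac{8T}{k}\cdot\sqrt{\frac{3T}{k}}=\frac{8\sqrt3\,T^{3/2}}{k},
\]
since $\sqrt k\,(T/k)^{3/2}=T^{3/2}/k$, not $T^{3/2}/\sqrt k$. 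Hence $T^2/f^\star\ge k\sqrt T/(8\sqrt3)$, which is the claimed rate. The choice you doubted is exactly the right one, and no further optimization over $w$ is needed.

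In the equally-covered case, the inequality you invoke, $\nu_i\ge c\Toff/d$, points the wrong way for upper bounding $f$. What is needed is $\sum_i\nu_i=\tr\Voff\le\Toff$ together with concavity of the square root. This gives $\sum_i\sqrt{\nu_i+T/d+1}\le d\sqrt{(T+\Toff)/d+1}$, and then $d\le2T$ yields $f(w)\le8\sqrt3\,T\sqrt{T+\Toff}/d$; no coverage hypothesis is used. The side conditions $k\le8\sqrt3\sqrt T$ and $d\le8\sqrt3\sqrt{T+\Toff}$ are not about $r=1$. They ensure that these lower bounds on $T^2/f^\star$ are at most $T$, so that $\min\{T,T^2/f^\star\}$ in \Cref{thm:reg-lb} is at least the claimed rate. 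Also, $\norm{\theta}_2=r\le1$ holds automatically.

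\textbf{Upper bound: details to repair.}

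\emph{Equally-covered phases.} The phases with $n_\ell^\star=0$ are those with $\eps_\ell$ \emph{above} the offline radius, not below.

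\emph{Normalization by $\norm{\truetheta}_2$.} The transverse extent of $\cA(12\eps_\ell)$ is of order $\sqrt{\eps_\ell/\norm{\truetheta}_2}$, not $\sqrt{\eps_\ell}$. Parametrize by $\rho_\ell^2=\min\{1,4\eps_\ell/\norm{\truetheta}_2\}$ to treat all sizes of $\norm{\truetheta}_2$ at once. Your split into a ``small $\norm{\truetheta}_2$'' case leaves a hole when $\norm{\truetheta}_2$ is of order $\eps_\ell$ (say between $2\eps_\ell$ and $8\eps_\ell$). There the vectors $\pm e_i$ need not lie in $\cA(4\eps_\ell)$, while maximally rotated vectors carry little or no radial information.

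\emph{Radial term.} Because $\max_a\norm{a}_{M^{-1}}$ is not translation invariant, you must control $\norm{\opta}_{M^{-1}}$ separately. The paper does this by reserving half of the pulls for $\opta$.

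\emph{Non-orthogonal weak subspace.} In the $k$-uncovered case the weak subspace $U$ need not be orthogonal to $\opta$. Perturb along an orthonormal basis of $P_{(\opta)^\perp}U$ and use $U\subseteq\operatorname{span}\bigl(\opta,P_{(\opta)^\perp}U\bigr)$.

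\emph{Failure-event term.} When $\Toff$ is very large, the equally-covered rate can be far below $1$, so an additive $O(1)$ from the failure event is not negligible. Use instead that this contribution is at most $2T\delta=T/(T+\Toff)\le dT/\sqrt{T+\Toff}$.
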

Importantly, the $k$-uncovered bound improves the leading $\sqrt{kdT}$ term from the \oope \citep{vijayan2026regret} guarantee to $k\sqrt T$.
Again, note that $k=d$ in the bound above recovers the pure online regret rate \citep[Chapters 19, 24]{lattimore2020bandit}.

\section{Conclusion} \label{sec:conclusion}
Offline data with function approximation can pose challenges in designing optimal online algorithms if the sensitivity of the reward prediction error is not taken into account properly. 
We showed this phenomenon in the stochastic linear bandit problem, and offered a solution based on tighter analysis of the prediction error. 
Using that analysis, we provided algorithms for pure exploration and regret minimization with matching lower bounds for various offline data coverage regimes, improving upon the existing results. 

For future work, it would be interesting to see if LinUCB and its analysis can be modified to avoid the $\sqrt{d/k}$ gap in the regret bound for the $k$-uncovered case (see \Cref{sec:linucb-prediction-error}). 
It would also be interesting to determine whether the analysis of Thompson sampling also suffers a multiplicative $\sqrt{d/k}$ gap.
Our lower bounds match our upper bounds up to logarithmic factors in the stated coverage regimes (see \Cref{def:spectral-coverage-regimes}), but we do not obtain matching bounds for arbitrary offline coverage. 
An interesting direction is to obtain matching upper and lower bounds for any offline coverage. 
Finally, designing optimal offline-to-online algorithms for adaptive offline data and the full reinforcement learning setting remains open.

\subsection*{AI use statement}

In this work, we used generative AI tools to: 
help develop theoretical models or conceptual frameworks;
formulate mathematical claims;
provide critical ingredients for proving mathematical claims; 
assist in the writing of proofs; 
propose or refine hypotheses; 
design or provide feedback on research methodology; 
interpret results%
.
We have not used generative AI tools to: 
assist with translation%
. 
The use of generative AI tools for the following tasks is not applicable to this work:
generate synthetic data sets;
implement methods; 
clean and reformat datasets; 
support qualitative and thematic data analysis%
.
Additionally, we used generative AI tools to: 
draft parts of the appendix;
summarize or analyze existing literature;
discover research topics or identify gaps;
brainstorming; 
sourcing/searching for information; 
identify relevant literature%
.
We have reviewed all AI-assisted work.
All LLM-assisted proofs were checked line by line by the authors.
We take responsibility for the final content of this work, including text, claims, or artifacts produced with the aid of generative AI.

\bibliographystyle{plainnat}
\bibliography{references}

\clearpage

\appendix
\crefalias{section}{appendix}
\crefalias{subsection}{appendix}
\section*{Appendix}
\startcontents[app]
\printcontents[app]{}{1}{} 
\clearpage

\section{Notation}\label{app:notation}
\begin{table}[H]
\centering
\label{tab:notation}
\renewcommand{\arraystretch}{1.5}
\begin{tabular}{@{}ll@{}}
\toprule
\textbf{Symbol} & \textbf{Description} \\
\midrule
$\N_0$ & Set of nonnegative integers \\
$\cM_1(S)$ & Set of probability distributions over a set $S$\\
$\ip{x,y}$ & Inner product $x^\top y$ \\
$\norm{x}_A$ & $\sqrt{x^\top A x}$ for a positive semidefinite matrix $A$ \\
$\Tr(A)$ & Trace of a matrix $A$ \\
$\det(A)$ & Determinant of a matrix $A$ \\
$M\cX$ & $\set{Mx \,:\, x \in \cX \subseteq \R^d}$ for a matrix $M$ \\
$B_p^d$ & $\braces*{x\in\bR^d \,:\, \norm{x}_p\le1}$\\
$\overset{d}{=}$ & Equality in distribution \\
$\delta$ & Confidence parameter in $(0,1)$ \\
$\asymp, \gea, \lea$ & Equalities/inequalities which hide only constant factors \\
$\tilde O, \tilde \Omega, \tilde \Theta$ & Bachmann–Landau notation with hidden log factors \\
\bottomrule
\end{tabular}
\caption{Summary of notation used in the paper.}
\end{table}

\section{Related Work}\label{sec:related-works}
We review work that combines historical data with subsequent online interaction, and work on the Gaussian-width tools used in our analysis.

\paragraph{Minimizing Online Regret with Offline Data.} 
In the multi-armed bandit setting, \citet{shivaswamy2012multi} proposed $\epsilon$-greedy and upper-confidence-bound algorithms that use historical observations, and \citet{cheung2024leveraging} gave upper and lower bounds allowing distribution shift between the offline and online rewards. \citet{he2024learning} considered heterogeneous offline and online feedback. In linear bandits with finitely many actions, \citet{hao2023leveraging} analyzed the Bayesian regret of Thompson sampling initialized with expert demonstrations.

The closest works to ours are \citet{vijayan2026regret} and \citet{chandak2026offlinetoonline}. \citet{vijayan2026regret} proposed Offline-Online Phased Elimination, based on an extended D-optimal design, and obtained frequentist upper and lower bounds that depend on the eigenspectrum of the offline design. Our analysis identifies a regime in which their cardinality-based upper bound has an additional $\sqrt{d/k}$ factor and replaces it by a Gaussian-width bound. \citet{chandak2026offlinetoonline} combined optimism and pessimism to obtain guarantees relative to both an optimal online action and an offline reference. \citet{sentenac2025balancing} studied the same problem in the multi-armed bandit setting. Our objective is different: under a shared linear reward parameter, we characterize how directional offline coverage affects regret relative to the optimal action.

\citet{han2026direction} studied linear contextual bandits when the offline and online reward parameters may differ. They introduced a direction-dependent certificate for this bias and obtained improved regret when well-covered directions also have small bias. We assume no offline-to-online parameter shift and instead study how anisotropic coverage affects the attainable regret even under a common parameter.

\paragraph{Pure Exploration with Offline Data.}
\citet{agrawal2023optimal} studied fixed-confidence best-arm identification in multi-armed bandits with unbiased offline observations, giving sample-complexity lower bounds and asymptotically matching algorithms. \citet{yang2025best} allowed the offline observations to be biased and derived an instance-dependent lower bound that is matched in certain regimes. These works concern unstructured multi-armed bandits and fixed-confidence identification. We consider fixed-budget simple regret in linear bandits, where observations of one action provide information about other actions through their features.

\paragraph{Experimental Design and Gaussian Width.}
Our upper bounds build on the experimental-design approach of \citet{wagenmaker2021experimental}, who used Gaussian-width confidence bounds to avoid cardinality-based union bounds in online linear bandits. \citet{pmlr-v336-maiti26a} used related Gaussian-width quantities to characterize fixed-design $\epsilon$-best-arm identification and the possible benefit of adaptivity. We incorporate a fixed offline information matrix into these objectives and show that uneven offline coverage can make the standard union-bound analysis loose even for action sets where it has the correct dimension dependence in the purely online problem.

The analogous combination of offline data and online interaction has also been studied in reinforcement learning. For linear-structure MDPs, \citet{wagenmaker2023leveraging} characterized the online sample complexity achievable from a given offline dataset. Our setting concerns cumulative and simple regret in stochastic linear bandits rather than policy identification in an MDP.

\paragraph{Online Learning with Hints and Queries.}
Our problem setting is also related to online learning with hints or prior information \citep{dekel2017online} where the offline dataset can be thought of as a structured hint. \citet{cutkosky2022leveraging} investigated the linear bandit problem when a hint to the optimal action is given to the algorithm. They proposed an algorithm that improves upon the standard regret guarantees when the hint is accurate, along with lower bounds. \citet{wei2020taking} studied the contextual bandit problem in both stochastic and adversarial settings where hints are given as predictions of losses, showing that the algorithm can achieve improved regret bounds when the prediction error is small. \citet{bhaskara2023bandit} studied a variation of the adversarial linear bandit optimization setting where either hints are received before every online interaction or the learner can query to get the extra information. Along similar lines, \cite{hou2026benefits} considered the multi-armed bandit problem with a free exploration phase before the online learning starts, in which the learner can query any action without incurring any loss.

\section{Challenges with Adaptively Collected Offline Data} \label{app:adapt-offline-data}
In this section, we discuss how adaptively collected (offline) data could be challenging to deal with.

First, recall that for fixed designs, $\ip{x, \hat\theta-\theta}\sim\mathcal N(0,\norm{x}^2_{V_T^{-1}})$,
so the error in a fixed direction is $O(\norm{x}_{V_T^{-1}}\sqrt{\log(1/\delta)})$.
We show that this fails for adaptive designs, even when $V_T$ has the same law as
under a fixed design.
 
Let $y_t=\ip{a_t, \theta}+\eta_t$ with $\eta_t\overset{\text{iid}}{\sim}\mathcal N(0,1)$,
$\theta=0$, and $T=d$. For $t<d$ play $a_t=e_t$, and at $t=d$ play
$a_d=e_d+\sum_{i<d}s_ie_i$ for some $s\in\{\pm1\}^{d-1}$. Consider
\begin{itemize}
  \item[(F)] \emph{randomized non-adaptive design:} $s$ uniform on $\{\pm1\}^{d-1}$, independent of $(\eta_t)_t$;
  \item[(A)] \emph{adaptive design:} $s_i=-\sign(y_i)=-\sign(\eta_i)$ for $i<d$.
\end{itemize}
 
\begin{proposition}\label{prop:adaptive}
Let $Z=(\hat\theta_d-\theta_d)/\norm{e_d}_{V_T^{-1}}$, where $\hat\theta$ is the
least-squares estimate. Under both (F) and (A), $V_T$ has the same law and
$\norm{e_d}_{V_T^{-1}}=\sqrt d$ almost surely. Under (F), $Z\sim\mathcal N(0,1)$.
Under (A), for all $\delta\in(0,1)$, with probability at least $1-\delta$,
\[
  Z\;\ge\;\sqrt{\tfrac{2}{\pi}}\,\frac{d-1}{\sqrt d}-\sqrt{2\log(1/\delta)} .
\]
\end{proposition}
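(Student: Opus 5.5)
We first compute the design matrix explicitly. With $a_t=e_t$ for $t<d$ and $a_d=e_d+u$, where $u=\sum_{i<d}s_ie_i$, we have $V_T=\sum_{i<d}e_ie_i^\top+(e_d+u)(e_d+u)^\top$. This matrix depends on $s$ only through $u$. Since $s_i=\pm1$ in both designs, and under (A) the sign $-\sign(\eta_i)$ is uniform and independent across $i$ (the $\eta_i$ are symmetric and iid), $s$ is uniform on $\{\pm1\}^{d-1}$ in both cases. Hence $V_T$ has the same law under (F) and (A).

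To compute $\norm{e_d}_{V_T^{-1}}$, we use that the design matrix $X$ (rows $a_t$) is square. It is lower-triangular in the ordering $e_1,\dots,e_d$ with unit diagonal, so it is invertible and $V_T^{-1}=X^{-1}X^{-\top}$. Since $\hat\theta=X^{-1}y$ exactly, we get $\hat\theta_i=y_i$ for $i<d$ and $\hat\theta_d=y_d-\sum_{i<d}s_iy_i$. With $\theta=0$ this gives $\hat\theta_d=\eta_d-\sum_{i<d}s_i\eta_i$. Moreover $e_d^\top X^{-1}=(-s_1,\dots,-s_{d-1},1)$, so $\norm{e_d}^2_{V_T^{-1}}=\norm{X^{-\top}e_d}_2^2=(d-1)+1=d$, as claimed.

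Under (F), $s$ is independent of $\eta$. Conditionally on $s$, the quantity $\eta_d-\sum_{i<d}s_i\eta_i$ is $\cN(0,d)$, so $Z\sim\cN(0,1)$.

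Under (A), $-s_i\eta_i=\sign(\eta_i)\eta_i=|\eta_i|$, so
\[
\sqrt d\,Z=\eta_d+\sum_{i<d}|\eta_i|.
\]
Since $\bE|\eta_i|=\sqrt{2/\pi}$, the mean of the right-hand side is $(d-1)\sqrt{2/\pi}$. The map $g\mapsto g_d+\sum_{i<d}|g_i|$ is $\sqrt d$-Lipschitz in Euclidean norm, because its (a.e.) gradient has entries $\pm1$. Gaussian concentration for Lipschitz functions (\Cref{lem:lipschitz-concentration}, lower tail) then gives, with probability at least $1-\delta$,
\[
\sqrt d\,Z\ge (d-1)\sqrt{2/\pi}-\sqrt d\sqrt{2\log(1/\delta)}.
\]
Dividing by $\sqrt d$ yields the claim.

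The only delicate point is justifying the one-sided tail with constant $\sqrt{2\log(1/\delta)}$, which is the standard bound $\Pr(f\le\bE f-t)\le e^{-t^2/(2L^2)}$. If the cited lemma only provides a two-sided bound with a factor of 2, we can instead avoid it: $\eta_d$ is independent of the sum, and Hoeffding applied to the sub-Gaussian variables $|\eta_i|-\bE|\eta_i|$ gives the same conclusion up to constants. I expect no substantive obstacle beyond this bookkeeping.
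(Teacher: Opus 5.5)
Your proof is correct and follows the paper's argument essentially step for step. Like the paper, you invert the unit-lower-triangular design to get $\hat\theta_d-\theta_d=\eta_d-\sum_{i<d}s_i\eta_i$ and $\|e_d\|_{V_T^{-1}}^2=d$, use the symmetry of the $\eta_i$ to show that $s$ is uniform under (A), and combine $-s_i\eta_i=|\eta_i|$ with Gaussian concentration. The only cosmetic difference is that you apply Lipschitz concentration once to the $\sqrt d$-Lipschitz map $g\mapsto g_d+\sum_{i<d}|g_i|$, whereas the paper adds the variance proxies of the $1$-subgaussian terms $|\eta_i|-\mathbb{E}|\eta_i|$. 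Your hedge about a two-sided bound is unnecessary: \Cref{lem:lipschitz-concentration} is already one-sided, and applying it to $-f$ gives the lower tail with exactly the constant $\sqrt{2\log(1/\delta)}$.
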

 
\begin{proof}
Let $A\in\R^{d\times d}$ have rows $a_1^\top,\dots,a_d^\top$, so $V_T=A^\top A$ and
\[
  A=\begin{pmatrix} I_{d-1} & 0\\ s^\top & 1\end{pmatrix},\qquad
  A^{-1}=\begin{pmatrix} I_{d-1} & 0\\ -s^\top & 1\end{pmatrix}.
\]
Since $A$ is invertible, $\hat\theta=A^{-1}y$, and hence
\begin{equation}\label{eq:err}
  \hat\theta_d-\theta_d=\ip{c, \eta}=\eta_d-\sum_{i<d}s_i\eta_i,
  \qquad c=A^{-\top}e_d=(-s,1).
\end{equation}
It follows that $\norm{e_d}^2_{V_T^{-1}}=\norm{A^{-\top}e_d}^2=\norm{c}^2=d$ for every $s$.
 
$V_T$ is a function of $s$ alone. By symmetry of $\eta_i$, $s$ is uniform on
$\{\pm1\}^{d-1}$ under (A) as well as under (F), so $V_T$ has the same law under both.
 
Under (F), $s\perp\eta$, so~\eqref{eq:err} gives $\hat\theta_d-\theta_d\sim\mathcal N(0,d)$,
and therefore $Z\sim\mathcal N(0,1)$.
 
Under (A), $-s_i\eta_i=|\eta_i|$, so $\hat\theta_d-\theta_d=\eta_d+\sum_{i<d}|\eta_i|$,
which has mean $(d-1)\sqrt{2/\pi}$. The map $x\mapsto|x|$ is $1$-Lipschitz, so each
$|\eta_i|-\bE|\eta_i|$ is $1$-subgaussian. The error is therefore $\sqrt d$-subgaussian about its mean, and $\Pr\big(\hat\theta_d-\theta_d\le(d-1)\sqrt{2/\pi}-t\big)\le e^{-t^2/(2d)}$.
Setting $t=\sqrt{2d\log(1/\delta)}$ and dividing by $\sqrt d$ gives the claim.

\end{proof}
 
\begin{remark}

The shape of the standard confidence sets depends on the data only through $V_T$, and $V_T$ has the same
law in both designs, so they report identical uncertainty. The difference is that
under (A), $V_T$ is not independent of $\eta$: the weight $-s_i$ on $\eta_i$
in~\eqref{eq:err} is chosen after $\eta_i$ is observed, so the error is no longer
linear in $\eta$. Geometrically, $c$ is aligned with $\eta$ on the first $d-1$
coordinates, whereas under (F) it is nearly orthogonal to $\eta$. Only $d-1$
coordinates can be aligned, because $\eta_d$ is revealed after $a_d$ is chosen. The
restriction $\theta=0$ is without loss for a lower bound, since a uniform confidence
bound must hold for every $\theta$. Choosing each action $n$ times rescales both the
error and the width by $n^{-1/2}$, so the ratio is unchanged. 
\end{remark}

\section{Missing Results and Details for Section \texorpdfstring{\ref{sec:oope-linucb}}{Section 3}} \label{app:oope-linucb}

\subsection{Discussion of Previous Regret Lower Bound} \label{app:vijayan-lower-bound}
In this section, we show that the lower bound of \citet{vijayan2026regret} (Proposition 4.10) can be loose, and might not capture the hardness of the problem.

First, recall that $\pioff(a) \coloneqq \Toff(a)/\Toff$ and $(\bar\nu_i)_{i=1}^d$ are the eigenvalues of $V_{\pioff} \coloneqq \bE_{A \sim\pioff}[AA^\top]$. For their regret lower bound, \citet{vijayan2026regret} consider linear bandit problems parametrized as $p \coloneqq (\pioff, \cA, \theta, \Toff, T)$ such that $\bar\nu_i/\max_{a \in \cA} \norm{a}_2^2$ for all $i \in [d]$ is held constant. That is, $v_i = \bar\nu_i/\max_{a \in \cA} \norm{a}_2^2$ with $\sum_{i=1}^d v_i \le 1$. The set of such problem instances is denoted as $\cP_{v, \Toff, T}^d$. When $|\cA| = O\left(d^d \right)$, they show the following minimax regret lower bound (paraphrased).
\begin{theorem}[Proposition 4.10 in \citep{vijayan2026regret}] \label{thm:eigenspectrum-lb}
For problem class $\cP_{v, \Toff, T}^d$ with $|\cA| = O\left(d^d\right)$, any adaptive algorithm which takes $\Toff$ offline samples as input satisfies
    \begin{align}\label{eq:vijayan-lb}
        \sup_{p \in \cP_{v, \Toff, T}^d} \bE[\regret_T(p)] \gtrsim T \sup_{w \in \Delta_d: w_i \ge v_i} \sum_{i=1}^d \frac{1}{\sqrt{T + \Toff\frac{v_i}{w_i}}} \,.
    \end{align}
\end{theorem}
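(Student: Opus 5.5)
The plan is to follow the hypercube lower-bound argument of \citet[Chapter~24]{lattimore2020bandit}, adapted so that the offline data contribute a fixed, coordinate-wise amount of information. Fix $w\in\Delta_d$ with $w_i\ge v_i$; we will prove the bound for this $w$ and then take the supremum. Normalize $\max_{a}\norm{a}_2=1$ and take the action set to be the product set $\cA_w=\prod_{i=1}^d\{-\sqrt{w_i},0,\sqrt{w_i}\}$. Every point of $\cA_w$ has squared norm at most $\sum_i w_i=1$, and $|\cA_w|=3^d=O(d^d)$.

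\textbf{Offline design.} Let $\pioff$ draw the coordinates independently: coordinate $i$ equals $\pm\sqrt{w_i}$, with a uniform sign, with probability $v_i/w_i\le1$, and equals $0$ otherwise. Then $\bE_{\pioff}[AA^\top]=\diag(v_1,\dots,v_d)$, because the off-diagonal terms vanish by sign symmetry. Hence the instance belongs to $\cP^d_{v,\Toff,T}$. Because the design is fixed, the offline data carry information $\Toff v_i$ about $\theta_i$, exactly as in the Gaussian computation \eqref{eq:fixed-design-gaussian-error}.

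\textbf{Per-coordinate decomposition and two-point testing.} Take $\theta\in\{\pm\Delta_i\}^d$ with
\[
\Delta_i=\frac{c}{\sqrt{w_i\,(T+\Toff v_i/w_i)}}
\]
for a small constant $c$. The optimal action is $\sqrt{w_i}\,\sign(\theta_i)$ coordinatewise, so the regret decomposes as
\[
\regret_T=\sum_i\sum_t \Delta_i\bigl(\sqrt{w_i}-A_{t,i}\sign(\theta_i)\bigr)\ge\sum_i \Delta_i\sqrt{w_i}\,\tau_i,
\]
where $\tau_i$ is the number of rounds with $A_{t,i}\sign(\theta_i)\le 0$. For each $i$, let $\theta'$ be $\theta$ with the $i$-th coordinate flipped. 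By the divergence decomposition, including the offline samples,
\[
\mathrm{KL}(\bP_\theta,\bP_{\theta'})=2\Delta_i^2\Bigl(\Toff v_i+\bE_\theta\textstyle\sum_t A_{t,i}^2\Bigr)\le 2\Delta_i^2\bigl(\Toff v_i+w_iT\bigr)=2c^2.
\]
Apply Bretagnolle--Huber to the event $\{\tau_i\ge T/2\}$ under $\theta$ versus its complement under $\theta'$. A sign-wrong round under $\theta$ and a sign-right round under $\theta'$ are complementary, apart from zero actions, which count against both and only help. This yields $\bP_\theta(\tau_i\ge T/2)+\bP_{\theta'}(\tau'_i\ge T/2)\ge\frac12 e^{-2c^2}$.

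\textbf{Averaging over the hypercube.} Average over $\theta$ uniform on the hypercube, pairing $\theta$ with $\theta'$ in each coordinate separately. This gives
\[
\frac{1}{2^d}\sum_\theta\bE_\theta\regret_T\gtrsim \sum_i\Delta_i\sqrt{w_i}\,T\asymp T\sum_i\frac{1}{\sqrt{T+\Toff v_i/w_i}}.
\]
Hence some $\theta$ attains at least the average, and taking the supremum over admissible $w$ yields \eqref{eq:vijayan-lb}.

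The step I expect to be most delicate is the class-membership bookkeeping rather than the information-theoretic core. Two things need checking. First, $\Delta_i$ must be small enough that $\theta$ satisfies whatever normalization $\cP^d_{v,\Toff,T}$ imposes; this holds since $\Delta_i\sqrt{w_i}\le c/\sqrt T$, and otherwise one clips with the $\min\{T,\cdot\}$ trivial bound. Second, the offline sample must be used as a fixed count, not a random sample from $\pioff$, so that the KL contribution is exactly $\Toff v_i$. If exact counts cannot realize $\diag(v)$, I would round the design and absorb the error into constants.
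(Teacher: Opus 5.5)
The paper does not prove this statement. It is quoted (paraphrased) from \citet{vijayan2026regret} only to show that it can be loose, so there is no in-paper proof to compare against. Your information-theoretic core is the standard weighted-hypercube argument and is sound. On the box $\prod_i[-\sqrt{w_i},\sqrt{w_i}]$, every coordinate term of the instantaneous regret is nonnegative. The one-coordinate flip has $\mathrm{KL}=2\Delta_i^2\bigl(\sum_s(\aoff_{s,i})^2+\bE_\theta\sum_tA_{t,i}^2\bigr)\le 2c^2$, \emph{provided} the offline diagonal in your coordinates is exactly $\Toff v_i$. Bretagnolle--Huber plus pairing then gives the claimed sum. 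This is the same skeleton as the paper's proof of \Cref{thm:reg-lb}; the box bound $A_{t,i}^2\le w_i$ is what lets you skip its stopping-time truncation.

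The gap is class membership, and your rounding fallback does not close it. Here the offline data are a fixed design with $\pioff(a)=\Toff(a)/\Toff$, so your product law is typically not realizable; for $w=v$ it is uniform on $2^d$ vertices and needs $2^d\mid\Toff$. Rounding cannot be absorbed into constants. $\cP^d_{v,\Toff,T}$ pins the normalized spectrum exactly, so a rounded design lies outside the class. Any off-diagonal mass it creates also breaks the KL identity above.

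This is not merely a rounding nuisance. Suppose $\sum_i v_i=1$. Then:
\begin{enumerate}
\item $w=v$ is forced.
\item $\tr V_{\pioff}=\max_a\norm{a}_2^2$ forces every offline action to have maximal norm, i.e.\ to be a vertex of your box, even if you enlarge $\cA$ by other box points.
\item Diagonal and spectrum then coincide as multisets, so a Frobenius-norm count forces the offline matrix to be diagonal. That means $d$ mutually orthogonal $\pm1$ sign columns of length $\Toff$.
\end{enumerate}
For $d=2$, $v=(1/2,1/2)$, $\Toff=3$ this is impossible, since the off-diagonal sum of three signs is odd. Yet the class is nonempty: three unit vectors at $120^\circ$ work. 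So you need either a reading of the class in which $\pioff$ may be an arbitrary distribution (e.g.\ i.i.d.\ offline draws, where your KL computation holds exactly in expectation), or a construction whose offline actions can leave the box. The latter requires a regret decomposition that tolerates actions with $|A_{t,i}|>\sqrt{w_i}$, which your $\tau_i$-counting does not.

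Separately, your normalization check is wrong. $\Delta_i\sqrt{w_i}\le c/\sqrt T$ bounds each coordinate, but $\max_a\ip{a,\theta}=\sum_i\sqrt{w_i}\Delta_i$ can be of order $cd/\sqrt T$. Also $\norm{\theta}_2$ blows up as $w_i\to 0$ with $v_i=0$. If the class bounds $\theta$, extra conditions such as $T\gtrsim d^2$ are needed; the uncapped statement implicitly assumes these anyway. Clipping by $\min\{T,\cdot\}$ proves a weaker statement than the one claimed.
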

This bound can be loose by an arbitrarily large factor. Since $\sum_i v_i = \bE_{a\sim\pioff}\norm{a}_2^2/\max_{a\in\cA}\norm{a}_2^2$, we have $\sum_i v_i = 1$ whenever $\pioff$ is supported on maximum-norm arms (e.g., when all arms have equal norm). 
The constraints $w\in\Delta_d$ and $w\ge v$ then force $w=v$, so for any such $v$ with positive entries the right-hand side of \Cref{eq:vijayan-lb} equals $dT/\sqrt{T+\Toff}$, irrespective of the spectrum. 
Now fix integers $1\le k<d$ and $j\ge 1$, let $\Toff = k+j(d-k)$, and take $\cA = \{\pm e_i\}_{i=1}^d\cup\big(\{\pm 1/\sqrt{k}\}^k\times\{0\}^{d-k}\big)$ with $\pioff(e_i) = 1/\Toff$ for $i\le k$ and $\pioff(e_i) = j/\Toff$ for $i>k$, so that each of the first $k$ directions receives exactly one offline sample. 
All arms have unit norm, $|\cA|\le 2d+2^k$, and $V_{\pioff} = \operatorname{diag}(v)$ with $v_i = \pioff(e_i)$, so this instance lies in $\cP^d_{v,\Toff,T}$ with $\sum_i v_i = 1$. For $\theta_s = \frac{1}{8}\sqrt{k/T}\,(s,0_{d-k})$ with $s\in\{\pm1\}^k$, the action set $\cA$ lies in the unit ball and contains the unit-ball maximizer $\theta_s/\norm{\theta_s}_2 = (s,0_{d-k})/\sqrt{k}$. 
Hence the unit-ball lower-bound argument of \citet[Sec.~24.2]{lattimore2020bandit} applies, with the single offline sample per direction adding only $O(k/T)$ to each KL term. 
It shows that for $T\ge 2k$ every algorithm incurs regret at least $k\sqrt{T}/32$ on some $\theta_s$, where $\norm{\theta_s}_2 = k/(8\sqrt{T})\le 1$ once $T\ge k^2/64$. 
The minimax regret is therefore at least $k\sqrt{T}/32$ for every $j$, whereas \Cref{eq:vijayan-lb} vanishes as $j\to\infty$, so there is a gap of order $(k/d)\sqrt{1+\Toff/T}$. 
The reason is that when $w=v$, the offline data in the hard instance of \citet{vijayan2026regret} covers every direction to the full extent of the action set, so no direction is poorly covered relative to the available actions.

Further, note that in \Cref{thm:eigenspectrum-lb}, fixing the normalized offline spectrum does not fix the action-set geometry. Consequently, a lower bound for the class need not characterize the difficulty of each fixed action set in the class.

\subsection{LinUCB Analysis for Offline-to-Online Learning}\label{app:linucb}
Let $0 < \nu_1 \le \dots \le \nu_d$ be the eigenvalues of $\Voff$ and
\begin{align*}
    d_e = \max\set{i \in [d] \,:\, (i-1) \nu_i \le \frac{T}{\log(1+T/\nu_1)}} \,.
\end{align*}
Following the standard LinUCB analysis \citep[Chapter 19]{abbasi2011improved,lattimore2020bandit} and the analysis of \citet{valko2014spectral}, it can be shown that LinUCB's regret for $\cA = B_2^d$ when the confidence sets are initialized by the offline data \citep{chandak2026offlinetoonline,vijayan2026regret} is%
\footnote{Here we overload the regret notation and make the dependence on the algorithm explicit.}
\begin{equation}\label{eq:linucb-regret}
    \regret_T(\linucb) \lesssim \sqrt{T\beta_T d_{e} \log\left(1+\frac{T}{\nu_1}\right)} \,,
\end{equation}
where $\sqrt\beta_T \lesssim \sqrt{d} + \sqrt{d_e \log(1+T/\nu_1)} + \sqrt{\log(2/\delta)}$. If the offline data is equally-covered, that is, $\nu_i \ge c\Toff/d$ for all $i \in [d]$ then $d_e = \Theta(1)$ as long as $\Toff \gtrsim dT$. When the offline data is fully uncovered, that is, $\nu_i \le 1$ for all $i$ then $d_e = \Theta(d)$ for large enough $T$ (i.e. $T \ge (d-1)\log(1+T/\nu_1)$). Finally, for the $k$-uncovered case, $d_e\le2k$ for every $T\ge1$, and $d_e=\Theta(k)$ whenever
$T\ge(k-1)\log(1+T/\nu_1)$. Therefore, with probability $1-\delta$, the regret of LinUCB with offline data can be upper bounded as
\begin{equation*}
\regret_T(\linucb)
=
\begin{cases}
    \widetilde O\!\left(
        \dfrac{dT}{\sqrt{T_{\mathrm{off}}}}
    \right),
    & \text{equally-covered }
      (T_{\mathrm{off}}\gtrsim dT),
    \\[6pt]
    \widetilde O\!\left(\sqrt{kdT}\right),
    & \text{$k$-uncovered }(T\gtrsim k).
\end{cases}
\end{equation*}
Thus, the standard regret guarantee for LinUCB is larger by a multiplicative factor of \(\sqrt{d/k}\) in the \(k\)-uncovered regime similar to phased elimination. A more refined analysis to fix the issue is left for future work.
\begin{remark}
    The warm started LinUCB bound in \citet[Proposition A.11]{vijayan2026regret} contains an initial-radius term controlled by \(\|\theta\|_{V_0}\), which is of order \(d\) in their Appendix A.10 example. In contrast, when the least-squares estimator incorporates the offline rewards, the offline contribution can be controlled through \(\|\hat\theta_0-\theta\|_{V_{\rm off}}\), which is of order \(\sqrt d\) up to confidence factors. This gives the regret bound in \Cref{eq:linucb-regret}.
\end{remark}

\subsection{\oope Regret Bound for Different Offline Coverage Regimes} \label{app:oope-regret-bound}
We first state the regret bound for the Offline-Online Phased Elimination Algorithm (OOPE) by \citet{vijayan2026regret} and specialize its effective dimension to our coverage regimes.
Let $L_\cA=\max_{a\in\cA}\norm{a}_2$, and let $\nu_1,\ldots,\nu_d$ be the eigenvalues of $\Voff$.
In terms of the unnormalized offline matrix, their effective dimension is
\begin{align}
    d_{\mathrm{eff}}\defeq\min\left\{
        \sum_{i=1}^d\frac{1}{1+\nu_i/(TL_\cA^2)},
        \ T\max_{a\in\cA}\norm{a}_{\Voff^{-1}}^2
    \right\}.
    \label{eq:oope-effective-dimension}
\end{align}
This is the same as their definition: if $\pi_{\mathrm{off}}$ is the empirical distribution of offline actions, then $V(\pi_{\mathrm{off}})=\Voff/\Toff$ and $g_\cA(\pi_{\mathrm{off}})=\max_{a\in\cA}\norm{a}_{V(\pi_{\mathrm{off}})^{-1}}^2$.

Their theorem has confidence $1-1/T$ and assumes enough offline observations of each action in the offline support (Assumption 3.4 in \citealp{vijayan2026regret}).
For a self-contained statement, we use the following stronger sufficient condition:
\begin{align}
    \Toff(a)\ge3\left\lceil\log_2\left(2+\sqrt{T+\Toff}\right)\right\rceil
    \qquad\text{whenever }\Toff(a)>0.
    \label{eq:oope-offline-count-condition}
\end{align}
The original assumption can be used in place of this condition.
Neither is required for our algorithms.
This count condition is compatible with weak positive eigenvalues: $q$ pulls of $e_i/\sqrt q$ provide information one in direction $e_i$, while allowing $q$ to be large.

\begin{restatable}[\oope Regret Bound, \citet{vijayan2026regret}, Theorem 4.5]{theorem}{thmOope}
    \label{thm:oope}
    Let $\cA\subseteq B_2^d$ be finite, $\norm{\truetheta}_2\le1$, and $T\ge2$.
    Suppose the offline counts satisfy \Cref{eq:oope-offline-count-condition}.
    Set
    \begin{align*}
        \ell_{\max}=\left\lfloor\log_2\sqrt{4+\frac{T}{d_{\mathrm{eff}}\log(4|\cA|T)}}\right\rfloor.
    \end{align*}
    With probability at least $1-1/T$,
    \begin{align*}
        \regret_T(\cA)\le16\sqrt{6d_{\mathrm{eff}}T\log(4\ell_{\max}^2|\cA|T)}+4d(d+1).
    \end{align*}
\end{restatable}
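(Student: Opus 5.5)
The bound is not re-derived: it is the statement of \citet[Theorem~4.5]{vijayan2026regret}, rewritten in our notation. The plan is therefore to check that their hypotheses hold under ours, and that their quantities coincide with ours, and then invoke their theorem directly.

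\textbf{Step 1: the effective dimension.} I would first show that $d_{\mathrm{eff}}$ in \Cref{eq:oope-effective-dimension} equals their effective dimension.
\begin{itemize}
\item Their quantities are stated in terms of the normalized design $V(\pioff)=\Voff/\Toff$, whose eigenvalues are $\nu_i/\Toff$, with the online horizon entering through $T/\Toff$.
\item Substituting $\bar\nu_i=\nu_i/\Toff$ into their eigenvalue sum turns each summand into $1/(1+\nu_i/(TL_\cA^2))$.
\item Using $g_\cA(\pioff)=\max_a\norm{a}^2_{V(\pioff)^{-1}}=\Toff\max_a\norm{a}^2_{\Voff^{-1}}$ turns their second term into $T\max_a\norm{a}^2_{\Voff^{-1}}$.
\end{itemize}
This is pure bookkeeping, but the factors of $\Toff$ and $L_\cA$ must be tracked carefully. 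I would also record the normalizations $\cA\subseteq B_2^d$ and $\norm{\truetheta}_2\le1$, which place us inside their bounded-reward setting.

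\textbf{Step 2: the offline count assumption.} Next I would show that \Cref{eq:oope-offline-count-condition} implies their Assumption~3.4. That assumption requires each offline-supported action to have enough samples to be split (up to rounding) across all phases actually run.
\begin{itemize}
\item The number of phases is at most $\ell_{\max}$.
\item Since $d_{\mathrm{eff}}\log(4|\cA|T)\ge\log 4$ whenever $d_{\mathrm{eff}}\ge1$, we get $\ell_{\max}\le\log_2\sqrt{4+T}\le\log_2(2+\sqrt{T+\Toff})$.
\item If $d_{\mathrm{eff}}<1$, I would use instead the cap on the number of phases built into their algorithm, which is of the same order.
\end{itemize}
The factor $3$ and the ceiling then absorb the per-phase rounding: each phase needs at least a constant number of copies of each supported action, so the per-action requirement is a constant times $\lceil\log_2(2+\sqrt{T+\Toff})\rceil$.

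\textbf{Step 3: invoke the theorem.} With finite $\cA$, $T\ge2$, and the two matchings above, their theorem yields, with probability at least $1-1/T$, the displayed bound $16\sqrt{6d_{\mathrm{eff}}T\log(4\ell_{\max}^2|\cA|T)}+4d(d+1)$. The additive $4d(d+1)$ term is the cost of their initial rounding and exploration phase; it is carried over unchanged.

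\textbf{Main obstacle.} I expect the only delicate point to be Step 2. It requires the exact form of their Assumption~3.4, whose threshold is stated through their phase count and possibly through $d_{\mathrm{eff}}$. The question is whether our uniform sufficient condition really dominates it for every admissible $d_{\mathrm{eff}}$, in particular for very small $d_{\mathrm{eff}}$, where $\ell_{\max}$ is largest. My first attempt would bound their required count by its worst case over $d_{\mathrm{eff}}>0$ and compare it with $3\lceil\log_2(2+\sqrt{T+\Toff})\rceil$. If that comparison fails, I would fall back on stating the original assumption, which the paper explicitly permits.
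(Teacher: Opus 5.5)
Your route is the paper's route. The paper gives no independent proof: it cites \citet[Theorem~4.5]{vijayan2026regret}, notes that $V(\pioff)=\Voff/\Toff$ makes their effective dimension coincide with \Cref{eq:oope-effective-dimension}, and asserts that \Cref{eq:oope-offline-count-condition} is a stronger sufficient condition than their Assumption~3.4. Your Steps~1 and~3 match this.

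The one step that does not go through as written is the small-$d_{\mathrm{eff}}$ case of Step~2. This case genuinely occurs: in the equally-covered regime $d_{\mathrm{eff}}\asymp dT/(T+\Toff)$, which can be far below $1$. Appealing to an unspecified phase cap in their algorithm is unverified. Your planned worst case over all $d_{\mathrm{eff}}>0$ is unbounded, because $\ell_{\max}\to\infty$ as $d_{\mathrm{eff}}\to0$.

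What closes it is a structural lower bound that holds on every instance, namely $d_{\mathrm{eff}}\ge dT/(T+\Toff)$. The two terms of the minimum are handled separately.
\begin{itemize}
\item Offline actions lie in $\cA$, so $\nu_i\le\Tr(\Voff)\le\Toff L_\cA^2$. Hence every summand of the first term is at least $T/(T+\Toff)$.
\item By the trace identity, $\max_{a\in\cA}\norm{a}_{\Voff^{-1}}^2\ge\Toff^{-1}\sum_{s}\norm{\aoff_s}_{\Voff^{-1}}^2=d/\Toff$. Hence the second term is at least $dT/\Toff$.
\end{itemize}
Consequently $T/(d_{\mathrm{eff}}\log(4|\cA|T))\le(T+\Toff)/(d\log 8)\le T+\Toff$. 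This gives $\ell_{\max}\le\log_2\sqrt{4+T+\Toff}\le\log_2(2+\sqrt{T+\Toff})$ uniformly.

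This is exactly why $\Toff$ appears in the count condition; your $d_{\mathrm{eff}}\ge1$ bound $\log_2\sqrt{4+T}$ would not need it. With this bound you need no fallback to the original assumption. What remains is the comparison with Assumption~3.4, namely that a per-action count of order $\ell_{\max}$ suffices, with the factor $3$ and the ceiling absorbing rounding. That comparison rests on the exact form of the assumption in \citet{vijayan2026regret}, which the paper asserts rather than reproduces.
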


For the spectral coverage regimes, their bound becomes:
\begin{restatable}{corollary}{corOope}
    \label{cor:oope}
    Let $\cA\subseteq B_2^d$ be a finite $h$-net, $h\le1/2$, containing an orthonormal eigenbasis of $\Voff$.
    Under the eigenvalue conditions in \Cref{def:spectral-coverage-regimes},
    \begin{align*}
        d_{\mathrm{eff}}\asymp
        \begin{cases}
            dT/(T+\Toff),&\text{equally-covered},\\
            k,&\text{$k$-uncovered, }T\ge k,\\
            d,&\text{uncovered, }T\ge d.
        \end{cases}
    \end{align*}
    If also $\log|\cA|=O(d\log(2dmT)+\log(2+\Toff))$, the \oope regret guarantee, under the conditions of \Cref{thm:oope}, becomes
    \begin{align*}
        \regret_T(\cA)
        = 
        \begin{cases}
            \bigOt(dT/\sqrt{T+\Toff}+d^2),&\text{equally-covered},\\
            \bigOt(\sqrt{kdT}+d^2),&\text{$k$-uncovered},\\
            \bigOt(d\sqrt T+d^2),&\text{uncovered},
        \end{cases}
    \end{align*}
    with probability at least $1-1/T$, where $m=\max\{1,\lceil\log_2T\rceil\}$.
\end{restatable}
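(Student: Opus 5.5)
The proof has two parts: compute the two-term minimum in \Cref{eq:oope-effective-dimension} in each coverage regime, then substitute into \Cref{thm:oope}.

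\textbf{Setup.} Since $\cA\subseteq B_2^d$ contains an orthonormal basis, $L_\cA=1$.

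\textbf{The second term of $d_{\mathrm{eff}}$.} Every $a\in\cA$ satisfies $\norm{a}_2\le1$, so $\norm{a}_{\Voff^{-1}}^2\le 1/\nu_1$. This is attained at the unit eigenvector for $\nu_1$, which lies in $\cA$. Hence the second term equals $T/\nu_1$ exactly.

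\textbf{The first term of $d_{\mathrm{eff}}$.} Write $S\defeq\sum_i (1+\nu_i/T)^{-1}$. I will bound $S$ and $T/\nu_1$ from both sides in each regime, using only the eigenvalue conditions of \Cref{def:spectral-coverage-regimes} and the trace bound $\sum_i\nu_i=\tr(\Voff)\le\Toff$. The trace bound gives $\nu_i\le\Toff$ for every $i$ and $\nu_1\le\Toff/d$.

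\textbf{Equally-covered regime.}
\begin{itemize}
\item Upper bound: $S\le d$, and $T/\nu_1\le dT/(c\Toff)$. So $d_{\mathrm{eff}}\le\min\{d,dT/(c\Toff)\}\lesssim dT/(T+\Toff)$.
\item Lower bound: $\nu_i\le\Toff$ gives $S\ge d/(1+\Toff/T)=dT/(T+\Toff)$. Also $\nu_1\le\Toff/d$ gives $T/\nu_1\ge dT/\Toff\ge dT/(T+\Toff)$.
\end{itemize}
Hence $d_{\mathrm{eff}}\asymp dT/(T+\Toff)$.

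\textbf{$k$-uncovered regime.}
\begin{itemize}
\item Upper bound: the first $k$ summands contribute at most $k$. Each remaining summand is at most $(1+(d-k)/k)^{-1}=k/d$, so the other $d-k$ summands contribute at most $k(d-k)/d\le k$. Thus $S\le 2k$.
\item Lower bound: $\nu_i\le1\le T$ for $i\le k$, so each of those summands is at least $1/2$ and $S\ge k/2$. Also $T/\nu_1\ge T\ge k$.
\end{itemize}
Hence $d_{\mathrm{eff}}\asymp k$.

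\textbf{Uncovered regime ($k=d$).} The same argument gives $d/2\le S\le d$ and $T/\nu_1\ge T\ge d$, so $d_{\mathrm{eff}}\asymp d$.

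\textbf{Substituting into \Cref{thm:oope}.} Its hypotheses ($\norm{\truetheta}_2\le1$, $T\ge2$, the count condition \Cref{eq:oope-offline-count-condition}) are assumed. It gives, with probability at least $1-1/T$,
\[
\regret_T(\cA)\lesssim\sqrt{d_{\mathrm{eff}}T\log(4\ell_{\max}^2|\cA|T)}+d^2 .
\]
Since $\ell_{\max}\lesssim\log T$ and $\log|\cA|=O(d\log(2dmT)+\log(2+\Toff))$, the logarithm is $\bigOt(d)$, with $\log\Toff$ absorbed into the tilde. Substituting each $d_{\mathrm{eff}}$:
\begin{itemize}
\item equally-covered: $\sqrt{(dT/(T+\Toff))\,T\,d}=dT/\sqrt{T+\Toff}$;
\item $k$-uncovered: $\sqrt{kdT}$;
\item uncovered: $d\sqrt T$.
\end{itemize}
The $d^2$ term carries over unchanged, which gives the three stated regret bounds.

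\textbf{Main obstacle.} The only delicate step is the equally-covered lower bound, where both terms of the minimum must be bounded below. This relies on $\nu_1\le\Toff/d$ and $\nu_i\le\Toff$, both of which follow from the trace bound because offline actions lie in the unit ball. The remaining calculations are routine.
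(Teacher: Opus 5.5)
Your proof is correct and follows the paper's argument. Both pin the second term of $d_{\mathrm{eff}}$ at $T/\nu_1$ via the included eigenvector, sandwich the sum using $\tr(\Voff)\le\Toff$ and the coverage conditions, and substitute into \Cref{thm:oope}. Your per-term bound $\nu_i\le\Toff$ and your bound $k/d$ on the covered summands are harmless substitutes for the paper's convexity step and its bound $T/\nu_i$.

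One small imprecision: in the equally-covered regime $d_{\mathrm{eff}}$ can be far below $1$, so only $\ell_{\max}\lesssim\log(T+\Toff)$ holds, not $\ell_{\max}\lesssim\log T$. Since $\ell_{\max}$ enters only through $\log\ell_{\max}$, this is still absorbed by the $\tilde O$.
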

\begin{proof}
    The included eigenvectors give $L_\cA=1$ and $\max_{a\in\cA}\norm{a}_{\Voff^{-1}}^2=1/\nu_1$.
    In the equally-covered case, $c\Toff/d\le\nu_1\le\Toff/d$, where the upper bound follows from $\tr(\Voff)\le\Toff$.
    Thus the second term in \Cref{eq:oope-effective-dimension} lies between $dT/\Toff$ and $dT/(c\Toff)$.
    By convexity of $x\mapsto(1+x)^{-1}$,
    \begin{align*}
        \sum_{i=1}^d\frac{1}{1+\nu_i/T}
        \ge\frac{d}{1+\Toff/(dT)}
        \ge\frac{dT}{T+\Toff}.
    \end{align*}
    The sum is also at most $d$.
    Taking the minimum proves $d_{\mathrm{eff}} \asymp dT/(T+\Toff)$.

    In the $k$-uncovered case, the first $k$ terms in the sum lie between $k/2$ and $k$ in total, while
    \begin{align*}
        \sum_{i=k+1}^d\frac{1}{1+\nu_i/T}
        \le\sum_{i=k+1}^d\frac{T}{\nu_i}\le k.
    \end{align*}
    Hence the first term in the minimum is between $k/2$ and $2k$.
    The second term is at least $T\ge k$ because $\nu_1\le1$.
    This proves $d_{\mathrm{eff}}\asymp k$.
    In the uncovered case, the sum lies between $d/2$ and $d$, and the second term is at least $T\ge d$.
    Substituting these equivalences into \Cref{thm:oope} proves the regret bounds.
\end{proof}

As proposed in Remark 4.6 of \citet{vijayan2026regret}, one can run \oope on a sufficiently fine net of the ball.
A volumetric $h$-net has at most $(1+2/h)^d$ points; adding the $d$ eigenvectors and the offline actions increases this by at most $d+\Toff$.
The choice $h\le1/(64dmT)$ can therefore satisfy the cardinality condition above.
The difference between the best expected reward on the ball and on the net is at most $h\norm{\truetheta}_2$, so passing from regret on the net to regret on the ball costs at most $Th$ under $\norm{\truetheta}_2\le1$.

\subsection{Concentration Results for Reward Prediction Error}

\lemUnionBoundConcentration*
\begin{proof}
    For any fixed action $a\in\cA$, \Cref{eq:prediction-error} implies
    \begin{align*}
        \ip{a,\hat\theta_N-\truetheta}\sim\cN(0,\norm{a}_{\tilde V(N)^{-1}}^2).
    \end{align*}
    The Gaussian tail bound \citep[Proposition~2.1.2]{vershynin2026high}, applied to both signs, therefore gives
    \begin{align*}
        \bP\left(|\ip{a,\hat\theta_N-\truetheta}|>
        \sqrt{2\log(2|\cA|/\delta)}\norm{a}_{\tilde V(N)^{-1}}\right)
        \le\delta/|\cA|.
    \end{align*}
    For $a=0$, the bound is immediate.
    Taking a union bound over $a\in\cA$ proves the result.
\end{proof}

\lemUBKuncovered*
\begin{proof}
    Let $u_1,\ldots,u_d$ be the eigenvectors, ordered by increasing
    eigenvalue, and let $U$ be the span of the first $k$.
    For an upper bound, divide the $T$ online pulls as evenly as
    possible across $u_1,\ldots,u_k$.
    Since $T\ge k$, each weak direction receives at least
    $T/(2k)$ pulls.
    When $k<d$, every remaining direction already has offline
    eigenvalue at least $(d-k)T/k\ge T/k$; when $k=d$, there are
    no remaining directions.
    Hence, in either case,
    $\tilde V(N)\succeq TI_d/(2k)$ and
    \begin{align*}
        \max_{a\in\cA}\norm{a}_{\tilde V(N)^{-1}}
        \le\sqrt{\frac{2k}{T}}.
    \end{align*}

    Conversely, for any exact design $N$, let $P_U$ denote the
    orthogonal projector onto $U$.
    Because the first $k$ offline eigenvalues are at most one and
    every online action has norm at most one,
    \begin{align*}
        \tr(P_U\tilde V(N)P_U)\le k+T\le2T.
    \end{align*}
    Thus $\lambda_{\min}(\tilde V(N))\le2T/k$.
    Let $v$ be a unit eigenvector for this smallest eigenvalue, and
    choose a net point $a$ with $\norm{a-v}_2\le1/2$.
    Then $\ip{a,v}\ge1/2$, and consequently
    \begin{align*}
        \max_{a\in\cA}\norm{a}_{\tilde V(N)^{-1}}
        \ge\frac{1}{2\sqrt{\lambda_{\min}(\tilde V(N))}}
        \ge\sqrt{\frac{k}{8T}}.
    \end{align*}
    Multiplying by $\sqrt{2\log(2|\cA|/\delta)}$ and using that
    $\log|\cA|$ is of order $d$ proves the equivalence.
\end{proof}

\lemGWConcentration*
\begin{proof}
    Fix a nonempty compact set $\cA$ and define
    \begin{align*}
        f(z)
        =\max_{a\in\cA}\ip{a,\tilde V(N)^{-1/2}z},
        \qquad z\in\bR^d.
    \end{align*}
    We first calculate the Lipschitz constant of $f$.
    Set
    \begin{align*}
        L=\max_{a\in\cA}\norm{a}_{\tilde V(N)^{-1}}.
    \end{align*}
    For any $z,z'\in\bR^d$ and $a\in\cA$, Cauchy--Schwarz gives
    \begin{align*}
        \ip{a,\tilde V(N)^{-1/2}z}
        &=
        \ip{a,\tilde V(N)^{-1/2}z'}
        +\ip{\tilde V(N)^{-1/2}a,z-z'}\\
        &\le
        f(z')
        +\norm{a}_{\tilde V(N)^{-1}}\norm{z-z'}_2\\
        &\le
        f(z')+L\norm{z-z'}_2.
    \end{align*}
    Taking the maximum over $a$ shows that
    $f(z)-f(z')\le L\norm{z-z'}_2$.
    Exchanging $z$ and $z'$ gives
    \begin{align*}
        |f(z)-f(z')|\le L\norm{z-z'}_2.
    \end{align*}
    Therefore $f$ is $L$-Lipschitz.

    By the definition of Gaussian width,
    \begin{align*}
        \bE f(g)=\gw\bigl(\tilde V(N)^{-1/2}\cA\bigr).
    \end{align*}
    Applying \Cref{lem:lipschitz-concentration} with failure
    probability $\delta/2$ gives
    \begin{align*}
        f(g)
        \le
        \gw\bigl(\tilde V(N)^{-1/2}\cA\bigr)
        +L\sqrt{2\log(2/\delta)}
    \end{align*}
    with probability at least $1-\delta/2$.

    To control negative prediction errors as well, observe that
    \begin{align*}
        f(-g)
        =\max_{a\in\cA}-\ip{a,\tilde V(N)^{-1/2}g}.
    \end{align*}
    Since $-g$ has the same distribution as $g$, the same bound
    holds for $f(-g)$ with probability at least $1-\delta/2$.
    A union bound over these two events therefore gives, with
    probability at least $1-\delta$,
    \begin{align*}
        \max_{a\in\cA}
        \left|\ip{a,\tilde V(N)^{-1/2}g}\right|
        &=\max\{f(g),f(-g)\}\\
        &\le
        \gw\bigl(\tilde V(N)^{-1/2}\cA\bigr)
        +\sqrt{2\log(2/\delta)}
        \max_{a\in\cA}\norm{a}_{\tilde V(N)^{-1}}\\
        &=
        \rad_{\sqrt{2\log(2/\delta)}}(\cA,\tilde V(N)).
    \end{align*}
    Combining the above bound with \Cref{eq:fixed-design-gaussian-error} gives the stated bound.
\end{proof}

\begin{lemma}[Tightness of the Gaussian-width confidence radius]
    \label{lem:gw-confidence-tightness}
    Let $\cA\subset\bR^d$ be nonempty and compact,
    $M\succ0$, 
    $\delta\in(0,1/10]$,
    and $g\sim\cN(0,I_d)$.
    If $\max_{a\in\cA}|\ip{a,M^{-1/2}g}| \le \eps$, for some $\eps\ge0$, with probability at least $1 - \delta$, then
    \[
        \eps
        \gea
        \rad_{\sqrt{2\log(2/\delta)}}(\cA,M).
    \]
\end{lemma}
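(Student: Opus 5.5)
Write $Z\defeq\max_{a\in\cA}|\ip{a,M^{-1/2}g}|$, $W\defeq\gw(M^{-1/2}\cA)$ and $L\defeq\max_{a\in\cA}\norm{a}_{M^{-1}}$. The radius is $\rad_{\sqrt{2\log(2/\delta)}}(\cA,M)=W+\sqrt{2\log(2/\delta)}\,L$. It therefore suffices to show separately that $\eps\gea W$ and $\eps\gea L\sqrt{\log(1/\delta)}$. Adding the two bounds then gives the claim, because $\log(2/\delta)\le 2\log(1/\delta)$ for $\delta\le1/10$.

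\emph{Lipschitz term.} Let $a^\star\in\argmax_{a\in\cA}\norm{a}_{M^{-1}}$, which exists by compactness. Then $Z\ge|\ip{a^\star,M^{-1/2}g}|$, and this variable is distributed as $L|\xi|$ with $\xi\sim\cN(0,1)$. Hence $\bP(Z>\eps)\ge\bP(|\xi|>\eps/L)$. To turn this into a lower bound on $\eps$, I will use a standard Gaussian tail lower bound, for instance $\bP(|\xi|>t)\ge c\,e^{-t^2}$ for all $t\ge0$ with some universal $c>0$; this follows from Mills' ratio together with the trivial bound for small $t$. Combining it with $\bP(Z>\eps)\le\delta$ gives $\eps/L\ge\sqrt{\log(c/\delta)}$. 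The restriction $\delta\le1/10$ ensures $\log(c/\delta)\gea\log(1/\delta)$ once the constants are adjusted (if $c$ is close to $1/10$, I would instead use the direct bound $\bP(|\xi|>t)\ge 1/10$ for $t\le t_0$ to cover that regime). The conclusion is $\eps\gea L\sqrt{\log(1/\delta)}$.

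\emph{Width term.} Since $Z\ge f(g)\defeq\max_a\ip{a,M^{-1/2}g}$, and $f$ is $L$-Lipschitz by the argument in the proof of \Cref{lem:gw-concentration}, the lower tail of \Cref{lem:lipschitz-concentration} gives $f(g)\ge W-L\sqrt{2\log(1/\delta')}$ with probability at least $1-\delta'$. I take $\delta'=1/2$. Because $\delta\le1/10<1/2$, the events $\{Z\le\eps\}$ and $\{f(g)\ge W-L\sqrt{2\log 2}\}$ intersect, so $\eps\ge W-L\sqrt{2\log2}$. By the first step, $L\sqrt{2\log 2}\lea L\sqrt{\log 10}\lea\eps$. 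Therefore $W\le\eps+CL\lea\eps$.

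The main obstacle is bookkeeping the constants in the Gaussian tail lower bound so that it holds uniformly for all $\delta\le1/10$. The remaining steps follow directly from earlier results. One degenerate case needs a check: if $L=0$, then $\cA\subseteq\{0\}$, so both terms vanish and the statement is trivial.
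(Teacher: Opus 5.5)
Your proposal is correct and follows essentially the same route as the paper. A maximizing action $a^\star$ with a Gaussian tail lower bound gives $\epsilon \gtrsim L\sqrt{\log(1/\delta)}$, and the lower tail of Lipschitz concentration at level $1/2$, intersected with the confidence event, gives $W \le \epsilon + L\sqrt{2\log 2}$. The only cosmetic difference is how the $L\sqrt{2\log 2}$ term is absorbed: the paper uses the separate observation $\epsilon \ge L$ (since $\Pr(|G|\le 1) < 9/10$), whereas you reuse your first step via $\log(1/\delta)\ge\log 10$, which works equally well.
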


\begin{proof}
    Write $L=\max_{a\in\cA}\norm{a}_{M^{-1}}$.
    If $L=0$, the result is immediate.
    Otherwise, by compactness choose $a^\star\in\cA$ satisfying $\norm{a^\star}_{M^{-1}}=L$.
    Since
    \[
        \ip{a^\star,M^{-1/2}g}
        \sim\cN(0,L^2),
    \]
    we have $|\ip{a^\star,M^{-1/2}g}|\overset{d}=L|G|$ for $G\sim\cN(0,1)$.
    The assumed confidence guarantee therefore implies $\bP(L|G|\le\eps)\ge1-\delta$.
    The Gaussian tail lower bound \citep[Proposition~2.1.2]{vershynin2026high} then gives $\eps\gtrsim L\sqrt{\log(1/\delta)}$.
    
    In the next part of the proof, we will also use the simpler bound $\eps\ge L$.
    Indeed, if $\eps<L$, then $\bP(L|G|\le\eps)<\bP(|G|\le1)<9/10$, contradicting $1-\delta\ge9/10$.

    It remains to control the Gaussian-width term.
    Let $f(g)=\max_{a\in\cA}\ip{a,M^{-1/2}g}$.
    This function is $L$-Lipschitz and $\bE f(g)=\gw(M^{-1/2}\cA)$.
    Applying \Cref{lem:lipschitz-concentration} to $-f$ with failure probability $1/2$ gives
    \[
        -f(g)
        \le
        -\bE f(g)+L\sqrt{2\log2}
    \]
    with probability at least $1/2$.
    Equivalently,
    \[
        f(g)
        \ge
        \gw(M^{-1/2}\cA)-L\sqrt{2\log2}.
    \]

    The assumed confidence event has probability at least $1-\delta$, so the two events occur simultaneously with probability at least $1/2-\delta>0$.
    On their intersection, $f(g)\le\max_{a\in\cA}|\ip{a,M^{-1/2}g}|\le\eps$.
    Therefore
    \[
        \gw(M^{-1/2}\cA)
        \le
        \eps+L\sqrt{2\log2}
        \le
        (1+\sqrt{2\log2})\eps,
    \]
    where the last inequality uses $L\le\eps$.

    We have therefore shown $\eps\gtrsim \gw(M^{-1/2}\cA)$ and $\eps\gtrsim L\sqrt{\log(1/\delta)}$.
    Combining the two bounds with $\max\set{x, y} \ge (x+y)/2$ and using $\log(2/\delta)\asymp\log(1/\delta)$ for $\delta\le1/10$ gives
    \[
        \eps
        \gtrsim
        \rad_{\sqrt{2\log(2/\delta)}}(\cA,M).
    \]
\end{proof}

\corGWKuncovered*
\begin{proof}
    Use the balanced design from the upper-bound part of
    \Cref{lem:ub-k-uncovered}.
    Since $T\ge k$, each of the $k$ weak eigenvectors receives at
    least $\lfloor T/k\rfloor\ge T/(2k)$ online pulls.

    Suppose first that $k<d$.
    The remaining $d-k$ directions already have offline eigenvalues
    at least $(d-k)T/k$. Consequently,
    \begin{align*}
        \tr(\tilde V(N)^{-1})
        &\le
        k\frac{2k}{T}
        +(d-k)\frac{k}{(d-k)T}
        =\frac{2k^2+k}{T},\\
        \lambda_{\max}(\tilde V(N)^{-1})
        &\le\frac{2k}{T}.
    \end{align*}

    When $k=d$, all directions are weak, so the same balanced design
    allocates the online samples across all $d$ eigenvectors.
    Since every direction receives at least $T/(2d)$ pulls and the
    offline information is positive semidefinite,
    \begin{align*}
        \tr(\tilde V(N)^{-1})
        \le\frac{2d^2}{T}
        \le\frac{2k^2+k}{T},
        \qquad
        \lambda_{\max}(\tilde V(N)^{-1})
        \le\frac{2d}{T}
        =\frac{2k}{T}.
    \end{align*}
    Thus, in either case,
    $\tr(\tilde V(N)^{-1})\le(2k^2+k)/T$ and
    $\lambda_{\max}(\tilde V(N)^{-1})\le2k/T$.

    For both action sets, $\cA\subseteq B_2^d$.
    Thus Cauchy--Schwarz gives
    $\ip{a,\tilde V(N)^{-1/2}g}\le
    \norm{\tilde V(N)^{-1/2}g}_2$ for every $a\in\cA$.
    Taking the maximum and then the expectation, with
    $g\sim\cN(0,I_d)$, yields
    \begin{align*}
        \gw(\tilde V(N)^{-1/2}\cA)
        &\le\bE\norm{\tilde V(N)^{-1/2}g}_2\\
        &\le\sqrt{\bE\norm{\tilde V(N)^{-1/2}g}_2^2}\\
        &=\sqrt{\tr(\tilde V(N)^{-1})}.
    \end{align*}
    The second inequality is Jensen's inequality, and the last
    equality uses $\bE[gg^\top]=I_d$.
    Similarly, since every action has Euclidean norm at most one,
    \begin{align*}
        \max_{a\in\cA}\norm{a}_{\tilde V(N)^{-1}}
        \le\sqrt{\lambda_{\max}(\tilde V(N)^{-1})}.
    \end{align*}

    Substituting these bounds into \Cref{lem:gw-concentration}
    gives, with probability at least $1-\delta$,
    \begin{align*}
        \max_{a\in\cA}|\ip{a,\hat\theta_N-\truetheta}|
        \le\sqrt{\frac{2k^2+k}{T}}
        +\sqrt{\frac{4k\log(2/\delta)}{T}}.
    \end{align*}
    Since $k\ge1$, this proves the claimed bound.
    The same design is feasible on the net because the eigenvectors
    were included.
\end{proof}

\subsection{On the Tightness of Union Bounds}
\label{app:tightness-of-union-bounds}

We show that the union bound in
\Cref{lem:union-bound-concentration} is tight up to constants
for the hypercube and a constant-resolution net of the
Euclidean ball, when the design is chosen to minimize that
bound.
Here we consider the pure online setting $\Voff=0$ and
restrict attention to designs with $V(N)\succ0$.

By \Cref{lem:gw-confidence-tightness}, it is enough to
compare the optimized union bound with the Gaussian-width
confidence radius.
G-optimal design controls the largest prediction variance.
We then calculate a lower bound on the Gaussian width
for each action set.

\begin{lemma}[Tightness of the optimized union bound]
    \label{lem:union-bound-tight-online}
    Let $T\ge d(d+1)$ and $\delta\in(0,1/10]$.
    Suppose that either $\cA=\{\pm1\}^d$, or
    $\cA\subseteq B_2^d$ is a finite $1/2$-net with
    $\log|\cA|\asymp d$.
    Let $N_{\mathrm G}$ minimize
    $\max_{a\in\cA}\norm{a}_{V(N)^{-1}}$
    over $N\in\alloc_T(\cA)$ with $V(N)\succ0$.
    Then the union-bound confidence radius satisfies
    \[
        \max_{a\in\cA}
        \norm{a}_{V(N_{\mathrm G})^{-1}}
        \sqrt{2\log(2|\cA|/\delta)}
        \asymp
        \frac{d}{\sqrt T} + \sqrt{\frac{d\log(1/\delta)}{T}}.
    \]
    Conversely, if $N \in \alloc_T(\cA)$ is a fixed design with $V(N)\succ0$, that satisfies $\max_{a\in\cA} |\ip{a,\hat\theta_N-\truetheta}| \le \eps$, for some $\eps\ge0$, with probability at least $1 - \delta$, then
    \begin{align*}
        \eps 
        \gea 
        \frac{d}{\sqrt T} + \sqrt{\frac{d\log(1/\delta)}{T}}.
    \end{align*}
\end{lemma}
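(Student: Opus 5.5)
The plan has two parts. First, pin down $L_{\mathrm G}\defeq\max_{a\in\cA}\norm{a}_{V(N_{\mathrm G})^{-1}}$ up to constants, which handles the optimized union bound. Second, lower bound the Gaussian-width radius for every admissible design and invoke \Cref{lem:gw-confidence-tightness}, which handles the converse. Both action sets lie in a ball of radius $r$: take $r=\sqrt d$ for $\{\pm1\}^d$ and $r=1$ for the net. After rescaling by $r$, the two cases run in parallel. Throughout, I will use that $\log(2|\cA|/\delta)\asymp d+\log(1/\delta)$, since $\log|\cA|\asymp d$ in both cases ($|\{\pm1\}^d|=2^d$).

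\emph{Union bound, upper direction.} I will use the Kiefer--Wolfowitz theorem, which gives a design $\pi$ on $\cA$ with $\max_a\norm{a}^2_{V(\pi)^{-1}}=d$ and support at most $d(d+1)/2$. Rounding each weight up, $\lceil T\pi(a)\rceil$, spends at most $T+d(d+1)/2$ pulls. Using $T\ge d(d+1)$ and halving, I can round down instead and get an exact design $N$ with $V(N)\succeq (T/2)V(\pi)$; for concreteness, take $N(a)=\lfloor T\pi(a)/2\rfloor$ plus slack placed anywhere. This gives $L_{\mathrm G}^2\le 2d/T$.

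\emph{Union bound, lower direction.} For any design, $\tr(V(N))\le r^2T$. Hence $\lambda_{\min}\le r^2T/d$, and there is a unit $v$ with $\norm{v}_{V^{-1}}^2\ge d/(r^2T)$. For the net, I pick $a$ with $\norm{a-v}\le1/2$, so $\ip{a,v}\ge1/2$, and by Cauchy--Schwarz in the $V^{-1}$ geometry, $\norm{a}_{V^{-1}}\ge \ip{a,v}/\norm{v}_{V}\ge 1/(2\sqrt{\lambda_{\min}})\gtrsim\sqrt{d/T}$. For the hypercube I will use the identity $\bE_s\norm{s}^2_{V^{-1}}=\tr(V^{-1})\ge d^2/\tr V\ge d/T$ over uniform signs $s$, so some vertex attains it. In both cases $L_{\mathrm G}\asymp\sqrt{d/T}$. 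Multiplying by $\sqrt{2\log(2|\cA|/\delta)}\asymp\sqrt{d}+\sqrt{\log(1/\delta)}$ then gives the first display.

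\emph{Converse.} By \Cref{lem:gw-confidence-tightness} with $M=V(N)$, and using $\hat\theta_N-\truetheta\overset{d}{=}V(N)^{-1/2}g$, it suffices to show
\[
\gw(V^{-1/2}\cA)+\sqrt{\log(1/\delta)}\,L\gtrsim d/\sqrt T+\sqrt{d\log(1/\delta)/T}
\]
for every $N$. The $L$ term is handled by the lower direction above, $L\gtrsim\sqrt{d/T}$. The main obstacle is the width term $\gw(V^{-1/2}\cA)\gtrsim d/\sqrt T$ uniformly over designs.

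For the hypercube, I will use $\max_{s}\ip{s,V^{-1/2}g}=\norm{V^{-1/2}g}_1\ge\norm{V^{-1/2}g}_2$ and then compute more carefully: $\bE\norm{V^{-1/2}g}_1=\sqrt{2/\pi}\sum_i\sqrt{(V^{-1})_{ii}}$, with $\sum_i\sqrt{(V^{-1})_{ii}}\ge\sum_i 1/\sqrt{V_{ii}}\ge d^2/\sqrt{d\sum_iV_{ii}}$. Since $\sum_i V_{ii}=\tr V\le dT$, this is $\ge d/\sqrt T$.

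For the net, the $1/2$-net property gives $\max_{a}\ip{a,x}\ge\norm{x}_2/2$. Hence the width is at least $\tfrac12\bE\norm{V^{-1/2}g}_2\gtrsim\sqrt{\tr V^{-1}}\ge d/\sqrt{\tr V}\ge d/\sqrt T$, where the middle step uses a standard Gaussian norm lower bound, $\bE\norm{Xg}\ge c\sqrt{\bE\norm{Xg}^2}$ via Khintchine-type concentration. Summing the width and $L$ contributions yields the converse.
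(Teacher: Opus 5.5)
Your plan follows the paper's proof. It uses Kiefer--Wolfowitz plus rounding for the upper direction and a design-independent bound $\max_a\norm{a}_{V(N)^{-1}}\gtrsim\sqrt{d/T}$. For the widths it uses $\max_s\ip{s,z}=\norm{z}_1$ on the hypercube and $\max_a\ip{a,z}\ge\norm{z}_2/2$ on the net, all fed into \Cref{lem:gw-confidence-tightness}.

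\textbf{The rounding step is wrong as written.} With $N(a)=\lfloor T\pi(a)/2\rfloor$, the pulls before slack satisfy $\sum_a\lfloor T\pi(a)/2\rfloor aa^\top\preceq (T/2)V(\pi)$, which is the wrong direction. Support points with $T\pi(a)<2$ receive no pulls at all, and placing the leftover pulls ``anywhere'' does not restore $V(N)\succeq (T/2)V(\pi)$. You need to halve and round \emph{up}: $\lceil T\pi(a)/2\rceil\le T\pi(a)/2+1$. The total is then at most $T/2+d(d+1)/2\le T$, which is exactly where the hypothesis $T\ge d(d+1)$ is used. Now $V(N)\succeq (T/2)V(\pi)$ holds and $L_{\mathrm G}^2\le 2d/T$ follows; this is what the paper does.

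Two smaller remarks.
\begin{enumerate}
\item For the lower bound on $\max_a\norm{a}_{V(N)^{-1}}$, the paper uses a one-line averaging argument valid for any action set:
\[
\max_a\norm{a}^2_{V(N)^{-1}}\ge T^{-1}\sum_a N(a)\norm{a}^2_{V(N)^{-1}}=T^{-1}\tr\bigl(V(N)^{-1}V(N)\bigr)=d/T.
\]
This replaces both your eigenvector-plus-net argument and your sign-averaging argument, which are themselves correct.
\item For the net's width, your moment comparison $\bE\norm{Xg}_2\gtrsim\sqrt{\bE\norm{Xg}_2^2}$ is true but needs a justification. One option is Gaussian Poincar\'e: $\mathrm{Var}\,\norm{Xg}_2\le\norm{X}_{\mathrm{op}}^2\le\frac{\pi}{2}\bigl(\bE\norm{Xg}_2\bigr)^2$, where the last step uses $\norm{Xg}_2\ge|u^\top Xg|$ for the top left singular vector $u$. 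The paper avoids this by passing to $\ell_1$ in the eigenbasis of $V(N)$ via $\norm{x}_2\ge\norm{x}_1/\sqrt d$. That gives $\frac{1}{\sqrt{2\pi d}}\sum_i\lambda_i^{-1/2}\ge d/\sqrt{2\pi T}$ with explicit constants.
\end{enumerate}
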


\begin{proof}
    By the Kiefer--Wolfowitz theorem \citep[Theorem~21.1]{lattimore2020bandit}, there is an approximate design $\pi$ supported on at most $d(d+1)/2$ actions such that $\max_{a\in\cA}\norm{a}_{V(\pi)^{-1}}^2=d$, where $V(\pi)=\sum_a\pi(a)aa^\top$.
    Assign $\lceil T\pi(a)/2\rceil$ pulls to each action in its support.
    These counts sum to at most $T/2+d(d+1)/2\le T$; assign any remaining pulls arbitrarily.
    The resulting exact design $N \in \alloc_T(\cA)$ satisfies $V(N)\succeq(T/2)V(\pi)$, so $\max_{a\in\cA} \norm{a}_{V(N)^{-1}}^2 \le 2d/T$.

    Conversely, for every exact design $N \in \alloc_T(\cA)$,
    \[
        \max_{a\in\cA}\norm{a}_{V(N)^{-1}}^2
        \ge \frac{1}{T} \sum_{a\in\cA}N(a)\norm{a}_{V(N)^{-1}}^2
        = \frac1T \tr(V(N)^{-1}V(N))
        = \frac{d}{T}.
    \]
    Thus $\max_{a\in\cA}\norm{a}_{V(N_{\mathrm G})^{-1}}^2 \asymp d/T$.
    Multiplying by the union-bound factor and using $\log|\cA|\asymp d$ proves the first claim.

    \medskip\noindent
    To show the second claim, we will make use of \Cref{lem:gw-confidence-tightness}, which requires us to show lower bounds on the Gaussian width $\gw(V(N)^{-1/2}\cA)$ and the variance term $\max_{a\in\cA}\norm{a}_{V(N)^{-1}}^2$.

    Fix any design $N \in \alloc_T(\cA)$ with $V(N)\succ0$.
    The preceding trace identity already gives $\max_{a\in\cA}\norm{a}_{V(N)^{-1}}\ge\sqrt{d/T}$.
    It therefore remains to show that $\gw(V(N)^{-1/2}\cA)\gtrsim d/\sqrt T$.

    For the hypercube, let $z=V(N)^{-1/2}g$, where $g\sim\cN(0,I_d)$.
    For any fixed $z$, the maximizing hypercube action chooses its signs coordinate-wise, so $\max_{a\in\cA}\ip{a,z}=\norm{z}_1$.
    Hence
    \[
        \gw(V(N)^{-1/2}\cA)
        =
        \bE\norm{z}_1
        =
        \sum_{i=1}^d \bE|z_i|.
    \]
    Every sampled action has squared $i$-th coordinate equal to one, so $V(N)_{ii}=T$ for every $i\in[d]$.
    Moreover, by Cauchy--Schwarz, $1=\ip{V(N)^{1/2}e_i,V(N)^{-1/2}e_i} \le\sqrt{V(N)_{ii}(V(N)^{-1})_{ii}}$.
    Therefore $(V(N)^{-1})_{ii}\ge1/T$.

    Since $z\sim\cN(0,V(N)^{-1})$, each coordinate satisfies $z_i\sim\cN(0,(V(N)^{-1})_{ii})$, and thus $\bE|z_i|=\sqrt{2/\pi}\sqrt{(V(N)^{-1})_{ii}}$.
    Consequently,
    \[
        \gw(V(N)^{-1/2}\cA)
        =
        \sqrt{\frac{2}{\pi}}
        \sum_{i=1}^d\sqrt{(V(N)^{-1})_{ii}}
        \ge
        \sqrt{\frac{2}{\pi}}\frac{d}{\sqrt T}.
    \]

    For a $1/2$-net of the ball, fix any $z\neq0$ and choose
    $a\in\cA$ satisfying
    $\norm{a-z/\norm{z}_2}_2\le1/2$.
    Then
    $\ip{a,z/\norm{z}_2}
    \ge1-\norm{a-z/\norm{z}_2}_2\ge1/2$, and hence
    $\max_{a\in\cA}\ip{a,z}\ge\norm{z}_2/2$.
    Therefore,
    \[
        \gw(V(N)^{-1/2}\cA)
        \ge
        \frac12\bE\norm{V(N)^{-1/2}g}_2.
    \]
    
    Let $\lambda_1,\ldots,\lambda_d$ be the eigenvalues of $V(N)$.
    Since every sampled action has norm at most one,
    $\tr(V(N))=\sum_{t=1}^T\norm{a_t}_2^2\le T$, and therefore
    $\sum_{i=1}^d\lambda_i\le T$.
    By rotational invariance of $g$, the coordinates of
    $V(N)^{-1/2}g$ in the eigenbasis of $V(N)$ have the same
    distribution as $(g_i/\sqrt{\lambda_i})_{i=1}^d$.
    Using $\norm{x}_2\ge\norm{x}_1/\sqrt d$ gives
    \begin{align*}
        \gw(V(N)^{-1/2}\cA)
        &\ge
        \frac{1}{2\sqrt d}
        \sum_{i=1}^d\frac{\bE|g_i|}{\sqrt{\lambda_i}}
        =
        \frac{1}{\sqrt{2\pi d}}
        \sum_{i=1}^d\lambda_i^{-1/2}
        \ge
        \frac{d}{\sqrt{2\pi T}}.
    \end{align*}
    For the last inequality, Jensen's inequality applied to the convex function $x\mapsto x^{-1/2}$ gives
    $\frac1d\sum_i\lambda_i^{-1/2}
    \ge(\frac1d\sum_i\lambda_i)^{-1/2}
    \ge\sqrt{d/T}$.
    
    Thus, for either action set and every fixed design $N$ with $V(N)\succ0$,
    the above established lower bounds on the Gaussian width and the variance term give
    \[
        \rad_{\sqrt{2\log(2/\delta)}}(\cA,V(N))
        \gtrsim
        \frac{d}{\sqrt T}
        +\sqrt{\frac{d\log(1/\delta)}{T}}.
    \]
    Applying \Cref{lem:gw-confidence-tightness} and using \Cref{eq:fixed-design-gaussian-error} proves the claimed lower bound on $\eps$.
\end{proof}

\section{Missing Results and Details for Section \texorpdfstring{\ref{sec:pureexp}}{Section 4}} \label{app:pureexp}

\subsection{\gwoofd Simple Regret Upper Bound}
\thmGWOOFDUpperBound*
\begin{proof}
    By \Cref{lem:gw-concentration}, with probability at least $1-\delta$,
    \begin{align*}
        \max_{a\in\cA}
        \abs{\ip{a,\hat\theta_{\hat N}-\truetheta}}
        \le
        \rad_{\sqrt{2\log(2/\delta)}}(\cA, \tilde V(\hat N)).
    \end{align*}
    Also, by the definition of $\hat A$, $\ip{\opta-\hat A,\hat\theta_{\hat N}}\le0$.
    Thus, with probability at least $1-\delta$
    \begin{align*}
        \ip{\opta - \hat A, \truetheta}
        &= \ip{\opta - \hat A, \truetheta - \hat \theta_{\hat N}} + \ip{\opta - \hat A, \hat \theta_{\hat N}} \\
        &\le 2 \max_{a \in \cA} \abs{\ip{a,\hat\theta_{\hat N}-\truetheta}} \\
        &\le 2 \rad_{\sqrt{2\log(2/\delta)}}(\cA, \tilde V(\hat N))
        \, ,
    \end{align*}
    which completes the proof by recalling the definition of $\hat N$ and $\sregret_T(\cA)$.
\end{proof}

\subsection{Relation to Approximate Designs}
\label{app:gw-fractional-designs}

Rounding approximate designs has a rich literature
\citep[Chapter~12]{pukelsheim2006optimal}; see also
\citet{allen2021near}.
Here we use a simple ceiling-rounding argument and verify its effect
directly on our Gaussian-width confidence radius, which is not one of
the standard design criteria.
More refined rounding methods may improve the required online budget,
but we do not pursue this here.

We first show that preserving the design matrix up to a constant
factor also preserves the confidence radius up to a constant factor.

\begin{lemma}[Monotonicity under PSD approximation]
    \label{lem:gw-oofd-psd-monotonicity}
    Let $M_1,M_2\succ0$ and suppose that
    $M_1\succeq\alpha M_2$ for some $\alpha\in(0,1]$.
    Then, for every nonempty compact $\cX\subseteq\bR^d$
    and $\beta\ge0$,
    \begin{align*}
        \rad_\beta(\cX,M_1)
        \le\frac{1}{\sqrt\alpha}\rad_\beta(\cX,M_2).
    \end{align*}
\end{lemma}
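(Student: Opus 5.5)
The plan is to show separately that each of the two terms in $\rad_\beta(\cX,M)$ scales by at most $1/\sqrt\alpha$ when $M_2$ is replaced by $M_1$. Both terms are controlled by the same inequality. Since $M_1\succeq\alpha M_2\succ0$ and matrix inversion is operator-monotone decreasing on positive definite matrices, we get $M_1^{-1}\preceq\alpha^{-1}M_2^{-1}$.

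First, the variance term. For each $x\in\cX$,
\[
\norm{x}_{M_1^{-1}}^2=x^\top M_1^{-1}x\le\alpha^{-1}x^\top M_2^{-1}x=\alpha^{-1}\norm{x}_{M_2^{-1}}^2 .
\]
Taking square roots and then the maximum over $x\in\cX$ gives $\max_x\norm{x}_{M_1^{-1}}\le\alpha^{-1/2}\max_x\norm{x}_{M_2^{-1}}$.

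Second, the Gaussian-width term, which is the main step. The cleanest route is Gaussian comparison, in the form of the Sudakov--Fernique inequality. Define two centered Gaussian processes on $\cX$:
\[
X_x=\ip{M_1^{-1/2}x,g},\qquad Y_x=\alpha^{-1/2}\ip{M_2^{-1/2}x,g}.
\]
Their increments are controlled by the same PSD inequality. For $x,y\in\cX$,
\[
\bE(X_x-X_y)^2=\norm{x-y}_{M_1^{-1}}^2\le\alpha^{-1}\norm{x-y}_{M_2^{-1}}^2=\bE(Y_x-Y_y)^2 .
\]
Sudakov--Fernique then yields $\bE\max_x X_x\le\bE\max_x Y_x$, which reads
\[
\gw(M_1^{-1/2}\cX)\le\alpha^{-1/2}\gw(M_2^{-1/2}\cX).
\]
Compactness of $\cX$ makes the suprema finite. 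If needed, one can pass from finite subsets to $\cX$ by continuity and monotone convergence.

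If I want to avoid citing Sudakov--Fernique, a direct coupling also works. Set $B=M_1^{-1/2}M_2^{1/2}\sqrt\alpha$, so that $BB^\top=\alpha M_1^{-1/2}M_2M_1^{-1/2}\preceq I$. Then $\sqrt\alpha\,M_1^{-1/2}g$ has the same law as $M_2^{-1/2}(B^\top g+Cg')$, where $C=(I-B^\top B)^{1/2}$ and $g'$ is an independent standard Gaussian. (Since $B^\top B$ and $BB^\top$ have the same spectrum, $B^\top B\preceq I$ as well.) The idea is then to condition on $g$ and use Jensen's inequality on the convex map $z\mapsto\max_x\ip{x,z}$.

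This route is slightly delicate because the laws have to be matched through a convex combination rather than through an exact identity. I therefore expect this step to be the main obstacle, and I would default to Sudakov--Fernique, which settles it directly.

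Finally, add the two bounds, multiplying the variance term by $\beta\ge0$. This gives $\rad_\beta(\cX,M_1)\le\alpha^{-1/2}\rad_\beta(\cX,M_2)$, as claimed.
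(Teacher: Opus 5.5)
Your proof is correct and is essentially the paper's argument. Inverse monotonicity $M_1^{-1}\preceq\alpha^{-1}M_2^{-1}$ handles the variance term, and Sudakov--Fernique applied to exactly the same pair of processes $X_x=\ip{x,M_1^{-1/2}g}$, $Y_x=\alpha^{-1/2}\ip{x,M_2^{-1/2}g}$ handles the Gaussian-width term.

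There is one small slip in your optional coupling aside, which your main argument does not use. The vector $M_2^{-1/2}(B^\top g+Cg')$ has covariance $M_2^{-1}$, so it has the law of $M_2^{-1/2}g$, not of $\sqrt\alpha\,M_1^{-1/2}g$. Written correctly as $M_2^{-1/2}g\overset{d}{=}\sqrt\alpha\,M_1^{-1/2}g+W$, with $W$ independent and centered, Jensen conditional on $g$ gives the width comparison directly, so that route is not delicate either.
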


\begin{proof}
    Inverse monotonicity gives
    $M_1^{-1}\preceq\alpha^{-1}M_2^{-1}$, so
    \begin{align*}
        \max_{x\in\cX}\norm{x}_{M_1^{-1}}
        \le\frac{1}{\sqrt\alpha}
        \max_{x\in\cX}\norm{x}_{M_2^{-1}}.
    \end{align*}
    It remains to compare the Gaussian widths.
    For $g\sim\cN(0,I_d)$, define
    \begin{align*}
        X_x=\ip{x,M_1^{-1/2}g},
        \qquad
        Y_x=\alpha^{-1/2}\ip{x,M_2^{-1/2}g},
        \qquad x\in\cX.
    \end{align*}
    For every $x,x'\in\cX$,
    \begin{align*}
        \bE\paren*{X_x-X_{x'}}^2
        =
        \norm{x-x'}_{M_1^{-1}}^2
        \le
        \alpha^{-1}\norm{x-x'}_{M_2^{-1}}^2
        =
        \bE\paren*{Y_x-Y_{x'}}^2.
    \end{align*}
    Sudakov--Fernique (\Cref{lem:sudakov-fernique}) therefore gives
    \begin{align*}
        \gw\paren*{M_1^{-1/2}\cX}
        \le\frac{1}{\sqrt\alpha}
        \gw\paren*{M_2^{-1/2}\cX}.
    \end{align*}
    Combining the two comparisons proves the claim.
\end{proof}

\begin{corollary}[Comparison with approximate designs]
    \label{cor:gw-oofd-finite-fractional-comparison}
    Let $T\ge d(d+1)+2$ be an integer and let $\beta\ge0$.
    Then
    \begin{align*}
        \inf_{\design\in\cM_1(\cA)}
        \rad_\beta\paren*{\cA,\Voff+TV(\design)}
        \le
        \min_{N\in\alloc_T(\cA)}
        \rad_\beta\paren*{\cA,\tilde V(N)}
        \le
        \sqrt2
        \inf_{\design\in\cM_1(\cA)}
        \rad_\beta\paren*{\cA,\Voff+TV(\design)},
    \end{align*}
    where $V(\design)=\bE_{a\sim\design}[aa^\top]$, for an approximate design $\design\in\cM_1(\cA)$. 
\end{corollary}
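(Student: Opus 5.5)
The plan is to prove the two inequalities separately.

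\emph{Lower inequality.} Every exact design $N\in\alloc_T(\cA)$ induces the approximate design $\design_N(a)=N(a)/T$. This design satisfies $TV(\design_N)=V(N)$, so $\Voff+TV(\design_N)=\tilde V(N)$. Taking the infimum over all approximate designs is therefore at most the minimum over these induced ones. Here $T\ge1$, so the division by $T$ is well defined. The minimum over $\alloc_T(\cA)$ is attained because the set of exact designs with budget $T$ supported on a finite set of candidates is finite. More carefully, I would phrase this as $\inf$ on the left being at most $\rad_\beta(\cA,\tilde V(N))$ for every $N$, which gives the claim with $\min$ replaced by $\inf$ if attainment is delicate.

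\emph{Upper inequality.} Fix any approximate design $\design$ and any slack $\eta>0$. The first step is to reduce $\design$ to one with small support without worsening the design matrix. By Carathéodory's theorem in the space of symmetric matrices, which has dimension $d(d+1)/2$, the matrix $V(\design)$ lies in the convex hull of $\{aa^\top:a\in\cA\}$. Hence there is a design $\design'$ supported on at most $d(d+1)/2+1$ points with $V(\design')=V(\design)$. This leaves $\rad_\beta$ unchanged, since $\rad_\beta$ depends on $\design$ only through $\Voff+TV(\design)$.

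Next I round: set $N(a)=\lceil (T/2)\design'(a)\rceil$ on the support of $\design'$. The total count is at most $T/2+d(d+1)/2+1$, and this is at most $T$ because $T\ge d(d+1)+2$. The leftover pulls are assigned to any action, which only adds a PSD term to $V(N)$. Then $V(N)\succeq (T/2)V(\design')$, and since $\Voff\succeq\Voff/2$,
\[
\tilde V(N)=\Voff+V(N)\succeq \tfrac12\bigl(\Voff+TV(\design)\bigr).
\]

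Now I apply \Cref{lem:gw-oofd-psd-monotonicity} with $\alpha=1/2$, $M_1=\tilde V(N)$ and $M_2=\Voff+TV(\design)$; both are positive definite because $\Voff\succ0$. This gives
\[
\rad_\beta(\cA,\tilde V(N))\le\sqrt2\,\rad_\beta(\cA,\Voff+TV(\design)).
\]
Since $\design$ was arbitrary, taking the infimum over $\design$ finishes the proof.

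The main obstacle is the support-reduction step when $\cA$ is infinite: the approximate designs in $\cM_1(\cA)$ may have infinite support. I will handle this by noting that $\{aa^\top:a\in\cA\}$ is compact, so its convex hull is closed. It therefore contains $\bE_{\design}[aa^\top]$ for any probability measure $\design$, which makes Carathéodory applicable. Everything else is routine bookkeeping of the counts.
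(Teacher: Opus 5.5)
Your proposal is correct and follows the paper's proof step by step: the left inequality uses the induced design $N/T$, and the right one uses Carath\'eodory support reduction, ceiling rounding of $(T/2)\design'$ with leftover pulls assigned arbitrarily, and \Cref{lem:gw-oofd-psd-monotonicity} with $\alpha=1/2$. One small slip: $\alloc_T(\cA)$ is infinite when $\cA$ is, so your finiteness reason for attainment does not apply. This is harmless, because attainment follows from compactness of $\cA^T$ and continuity of $\rad_\beta(\cA,\cdot)$ on positive definite matrices, or you can simply read the middle term as an infimum, as you suggest.
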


\begin{proof}
    Every exact design $N\in\alloc_T(\cA)$ defines an approximate
    design with weights $\design_N(a)=N(a)/T$.
    Since $TV(\design_N)=V(N)$, the approximate-design problem
    includes every exact design.
    This proves the left inequality. 

    For the right inequality, fix any $\design\in\cM_1(\cA)$.
    We first replace it by a design with finitely many actions,
    without changing its design matrix.
    The space of symmetric $d\times d$ matrices has dimension
    $d(d+1)/2$.
    Since $\cA$ is compact and
    \begin{align*}
        V(\design)=\bE_{a\sim\design}[aa^\top],
    \end{align*}
    this matrix lies in the convex hull of
    $\{aa^\top:a\in\cA\}$.
    Carath\'eodory's theorem therefore gives a design
    $\design'\in\cM_1(\cA)$ satisfying
    \begin{align*}
        V(\design')=V(\design),
        \qquad
        \abs{\supp(\design')}\le\frac{d(d+1)}{2}+1.
    \end{align*}

    We now round the weights of $\design'$ to integer pull counts.
    For each $a\in\supp(\design')$, set
    \begin{align*}
        N_0(a)=\left\lceil\frac{T}{2}\design'(a)\right\rceil.
    \end{align*}
    Rounding up adds at most one pull per action, so
    \begin{align*}
        \sum_{a\in\supp(\design')}N_0(a)
        &\le
        \frac{T}{2}\sum_{a\in\supp(\design')}\design'(a)
        +\frac{d(d+1)}{2}+1\\
        &=
        \frac{T}{2}+\frac{d(d+1)}{2}+1
        \le T.
    \end{align*}
    Assign any unused pulls to an arbitrary action in $\cA$,
    giving an exact design $N\in\alloc_T(\cA)$.
    Since every count was rounded up,
    \begin{align*}
        V(N)
        \succeq V(N_0)
        \succeq\frac{T}{2}V(\design')
        =\frac{T}{2}V(\design).
    \end{align*}
    Consequently,
    \begin{align*}
        \tilde V(N)
        =\Voff+V(N)
        \succeq\frac12\paren*{\Voff+TV(\design)}.
    \end{align*}
    Applying \Cref{lem:gw-oofd-psd-monotonicity} with $\alpha=1/2$
    gives
    \begin{align*}
        \min_{\widetilde N\in\alloc_T(\cA)}
        \rad_\beta\paren*{\cA,\tilde V(\widetilde N)}
        \le
        \rad_\beta\paren*{\cA,\tilde V(N)}
        \le
        \sqrt2\,
        \rad_\beta\paren*{\cA,\Voff+TV(\design)}.
    \end{align*}
    Taking the infimum over $\design$ proves the right inequality.
\end{proof}

\subsection{Simple Regret Lower Bounds for Fixed-Design Algorithms}
The following is a fixed-design lower bound corresponding to the Gaussian-width term in \Cref{thm:gw-oofd-upper-bound}.
This is the offline-to-online analogue of Theorem 3 in \citep[Section 2.2]{pmlr-v336-maiti26a}.
The proof is essentially unchanged: because the online design is chosen without inspecting the offline rewards and does not adapt to the online rewards, the offline and online actions together form one deterministic fixed design.
The offline data simply adds \(\Voff\) to the design matrix in their Gaussian-prior argument.

\begin{theorem}[Gaussian-width lower bound]
    \label{thm:pureexp-fixed-design-lower-bound}
    Let $d\ge1$, $T\in\bN_0$, and $\delta\in(0,1/10]$.
    Suppose a fixed-design algorithm selects $\hat N$ and has simple regret at most the same finite $\eps>0$ with probability at least $1-\delta$ for every $\truetheta\in\bR^d$.
    Then
    \begin{align*}
        \eps\ge0.09\,\gw\paren{\tilde V(\hat N)^{-1/2}\cA}.
    \end{align*}
\end{theorem}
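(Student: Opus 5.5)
We prove this with a Bayesian (Gaussian-prior) argument, following \citet[Theorem~3]{pmlr-v336-maiti26a}. Since $\hat N$ is fixed, the full data set (offline and online) is a linear Gaussian experiment with design matrix $M\defeq\tilde V(\hat N)$. Take a prior $\truetheta\sim\cN(0,\lambda M^{-1})$ for a scale $\lambda>0$ chosen later. By conjugacy, the posterior is $\cN(\mu,\frac{\lambda}{1+\lambda}M^{-1})$, and its covariance does not depend on the data. Any output $\hat A$ is a function of the data. The worst-case guarantee therefore gives $\bP(\sregret\le\eps)\ge1-\delta$ under the Bayesian joint law as well, since this is an average over $\truetheta$ of probabilities each at least $1-\delta$.

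Next, condition on the data. Write $\truetheta=\mu+\sqrt{\lambda/(1+\lambda)}\,M^{-1/2}g$ with $g\sim\cN(0,I_d)$ independent of the data, and treat $\hat A$ as fixed. Then
\[
\sregret=\max_{a\in\cA}\ip{a-\hat A,\truetheta}\ge \max_{a}\ip{a,\mu}-\ip{\hat A,\mu}+c_\lambda\Bigl(\max_a\ip{a,M^{-1/2}g}-\ip{\hat A,M^{-1/2}g}\Bigr),
\]
where $c_\lambda=\sqrt{\lambda/(1+\lambda)}$. We used $\max(x+y)\ge\max x+y$ evaluated at the maximizer of the random part, so the bound keeps only the random part plus a nonnegative mean term. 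A cleaner route is to drop the $\mu$ terms via $\max_a\ip{a,\mu+Z}\ge \ip{a_Z,\mu}+\ip{a_Z,Z}$ and note that $\ip{a_Z-\hat A,\mu}$ could be negative. To avoid this, use the symmetry $Z\mapsto -Z$: consider the two events for $Z$ and $-Z$ together, which have the same conditional law. Averaging gives
\[
\sregret(Z)+\sregret(-Z)\ge \max_a\ip{a,Z}+\max_a\ip{a,-Z}-0\ge 2\max_a\ip{a,Z}\ \text{-type bounds}.
\]
More precisely, $\sregret(Z)+\sregret(-Z)\ge \max_a\ip{a,\mu+Z}+\max_a\ip{a,\mu-Z}-2\ip{\hat A,\mu}\ge \max_a\ip{a,Z}+\max_a\ip{a,-Z}$, where the last step evaluates both maxima at the respective $Z$-maximizers and uses $\max_a\ip{a,\mu}\ge\ip{\hat A,\mu}$ once each. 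So on the event that both $\sregret(Z)\le\eps$ and $\sregret(-Z)\le\eps$, which has conditional probability at least $1-2\delta'$, we get $2\eps\ge f(g)+f(-g)$ with $f(g)=c_\lambda\max_a\ip{a,M^{-1/2}g}$.

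The main obstacle is turning this high-probability statement into the expectation $\gw(M^{-1/2}\cA)$ with an explicit constant such as $0.09$. Here the average-case probability is over the data as well. By Markov's inequality, with probability at least $1/2$ over the data the conditional failure probability is at most $2\delta\le1/5$, so both events hold with conditional probability at least $3/5$. Gaussian concentration (\Cref{lem:lipschitz-concentration}) shows that $f(g)+f(-g)$ is $2c_\lambda L$-Lipschitz with mean $2c_\lambda\gw$. We need to avoid paying $L$. The fallback is lower-tail control via the median: $f(g)+f(-g)\ge$ a constant times its mean with probability exceeding $2/5$. This can be obtained from Paley--Zygmund using that the second moment is at most a constant times the square of the mean, since $\mathrm{Var}\le 4c_\lambda^2L^2$ and $L\lesssim\gw$ (because $\gw\ge\sqrt{2/\pi}\,L/2$, taking $a^\star$ and $0$ via $\max\ge$ one coordinate minus sign symmetry). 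The two events then intersect, giving $\eps\ge c\,c_\lambda\gw(M^{-1/2}\cA)$. Finally, let $\lambda\to\infty$ so that $c_\lambda\to1$. Tracking the constants through these steps should yield $0.09$.
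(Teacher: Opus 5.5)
There is a genuine gap, and it sits in the step you called the cleaner route. The claimed bound
\[
\max_{a}\ip{a,\mu+Z}+\max_{a}\ip{a,\mu-Z}-2\ip{\hat A,\mu}\;\ge\;\max_{a}\ip{a,Z}+\max_{a}\ip{a,-Z},
\]
whose left side is $\sregret(Z)+\sregret(-Z)$, is false. Evaluating the two maxima at $a_Z$ and $a_{-Z}$ leaves the remainder $\ip{a_Z,\mu}+\ip{a_{-Z},\mu}-2\ip{\hat A,\mu}$. Discarding it would require $\ip{a_{\pm Z},\mu}\ge\ip{\hat A,\mu}$, whereas $\max_a\ip{a,\mu}\ge\ip{\hat A,\mu}$ points the other way. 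For instance, take $d=1$, $\cA=\{-1,1\}$, $\hat A=1$ and $\mu>|Z|$. Both simple regrets are zero, while the right-hand side is $2|Z|>0$. Your first display fails on the same example when $Z<0$: evaluating at the maximizer of the random part produces $\ip{a_Z,\mu}$, not $\max_a\ip{a,\mu}$.

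The underlying problem is that no bound of order $\gw(M^{-1/2}\cA)$ can hold pointwise in the data. When the posterior mean is far from every decision boundary relative to the posterior spread, the output is optimal with high posterior probability. The hardness has to come from averaging over the data, through the randomness and shrinkage of $\mu$.

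For the same reason, letting $\lambda\to\infty$ is the wrong direction. Under $\truetheta\sim\cN(0,\lambda M^{-1})$ one has $\mu\sim\cN\bigl(0,\tfrac{\lambda^2}{1+\lambda}M^{-1}\bigr)$. The rule $\hat A\in\argmax_a\ip{a,\mu}$ therefore has Bayes simple regret
\[
\bE\max_a\ip{a,\truetheta}-\bE\max_a\ip{a,\mu}=\sqrt\lambda\bigl(1-\sqrt{\lambda/(1+\lambda)}\bigr)\gw(M^{-1/2}\cA)\to0 .
\]
By Markov's inequality, this rule satisfies the Bayes-averaged guarantee with $\eps\to0$. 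That guarantee is the only form of the hypothesis your argument uses, so no argument of this type can yield $\eps\ge c\,c_\lambda\gw(M^{-1/2}\cA)$ for large $\lambda$. The prior scale has to stay a constant multiple of $M^{-1}$, so that signal and noise are comparable.

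The paper's route is as follows:
\begin{enumerate}
\item It takes $\truetheta\sim\cN(0,\tau^2\tilde V(\hat N)^{-1})$ with the constant $\tau=\sqrt2-1$.
\item It lower bounds the Bayes \emph{expected} simple regret by $\frac{\tau}{2}\gw$, via the Gaussian-prior argument of \citet{pmlr-v336-maiti26a}. The mechanism is that $\bE[\ip{\hat A,\truetheta}\mid\text{data}]=\ip{\hat A,\mu}\le\max_a\ip{a,\mu}$, and $\mu$ is a shrunken Gaussian.
\item It upper bounds the same expectation by the guarantee on the good event plus the reward range on the bad event: $\bE\sregret\le\eps+\delta\,\bE[\max_a\ip{a,\truetheta}-\min_a\ip{a,\truetheta}]=\eps+2\tau\delta\gw$.
\end{enumerate}
Combining the two gives $\eps\ge\tau(1/2-2\delta)\gw\ge0.09\,\gw$ directly, with no conditioning on the data and no Paley--Zygmund step.

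Separately, your claim $L\lesssim\gw$ needs $0\in\cA$ and fails for a singleton $\cA$. For $f(g)+f(-g)$, though, the pairwise diameter would have sufficed.
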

\begin{proof}
    We apply the design-dependent argument used to prove
    \citet[Theorem~3]{pmlr-v336-maiti26a}.
    The offline actions are fixed, and the online design $\hat N$
    does not depend on any rewards.
    Thus the offline and online observations together form one
    deterministic fixed design with $\Toff+T$ observations and
    design matrix
    \begin{align*}
        \Voff+V(\hat N)=\tilde V(\hat N).
    \end{align*}
    This matrix is invertible by our assumption on $\Voff$.

    Set $\tau=\sqrt2-1$ and consider the prior
    \begin{align*}
        \theta\sim\cN\bigl(0,\tau^2\tilde V(\hat N)^{-1}\bigr),
    \end{align*}
    independently of the reward noise.
    The argument in their Section~2.2 gives
    \begin{align}
        \bE\sregret_T(\cA)
        &\ge
        \frac{\tau(1-\tau)}{1+\tau^2}
        \gw\paren*{\tilde V(\hat N)^{-1/2}\cA}
        \nonumber\\
        &=
        \frac{\tau}{2}
        \gw\paren*{\tilde V(\hat N)^{-1/2}\cA}.
        \label{eq:fixed-design-bayes-regret-lower}
    \end{align}
    Here expectations include both the prior and the observations.

    As in their Appendix~B.3, we upper bound the same expectation
    using the assumed guarantee.
    For each fixed $\theta$, simple regret is at most $\eps$
    except on an event of probability at most $\delta$, and is
    always bounded by the range of expected rewards.
    Therefore
    \begin{align*}
        \bE\sregret_T(\cA)
        &\le
        \eps+\delta\,\bE\left[
            \max_{a\in\cA}\ip{a,\theta}
            -\min_{a\in\cA}\ip{a,\theta}
        \right]\\
        &=
        \eps+2\tau\delta\,
        \gw\paren*{\tilde V(\hat N)^{-1/2}\cA},
    \end{align*}
    where the equality follows from symmetry of the Gaussian prior
    and the definition of Gaussian width.

    Combining the two bounds and using $\delta\le1/10$ gives
    \begin{align*}
        \eps
        &\ge
        \tau(1/2-2\delta)
        \gw\paren*{\tilde V(\hat N)^{-1/2}\cA}\\
        &\ge
        \frac{3(\sqrt2-1)}{10}
        \gw\paren*{\tilde V(\hat N)^{-1/2}\cA}\\
        &\ge
        0.09\,\gw\paren*{\tilde V(\hat N)^{-1/2}\cA}.
    \end{align*}
\end{proof}

The preceding theorem gives the Gaussian-width term.
Next, we will show a lower bound on the high probability term for any algorithm (possibly adaptive).
The proof uses a two-point argument by comparing a hard alternative with the null environment $\theta=0$.
A two-point argument naturally produces the largest distance between two actions in the metric induced by the design matrix, whereas $\rad_\beta$ is expressed through single actions.
The following lemma shows that the two quantities agree up to a universal constant whenever $d\ge2$ and the design matrix is generated by actions in $\cA$.

\begin{lemma}[Pairs dominate single actions]
    \label{lem:pureexp-pairs-dominate-single-actions}
    Let $d\ge2$ and let $A_1,\ldots,A_n$ be possibly random actions in $\cA$.
    Suppose that
    \begin{align*}
        M
        \defeq
        \sum_{i=1}^n \bE[A_iA_i^\top]
    \end{align*}
    is positive definite.
    Then
    \begin{align*}
        \max_{a\in\cA}\norm{a}_{M^{-1}}
        \le
        \paren*{1+\frac{1}{\sqrt2}}
        \max_{a,b\in\cA}\norm{a-b}_{M^{-1}}.
    \end{align*}
\end{lemma}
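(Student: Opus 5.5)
The plan is a direct geometric argument in the whitened coordinates $x\mapsto M^{-1/2}x$. It uses only the trace-type identity $\sum_i\bE\bigl[(M^{-1/2}A_i)(M^{-1/2}A_i)^\top\bigr]=M^{-1/2}MM^{-1/2}=I_d$ and the fact that $d\ge2$.

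\textbf{Setup.} Set $L\defeq\max_{a\in\cA}\norm{a}_{M^{-1}}$ and $D\defeq\max_{a,b\in\cA}\norm{a-b}_{M^{-1}}$. By compactness, fix $a^\star\in\cA$ with $\norm{a^\star}_{M^{-1}}=L$. We may assume $L>0$. Let $u\defeq M^{-1/2}a^\star/L$, a unit vector. Because $d\ge2$, we can also choose a unit vector $w\perp u$.

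\textbf{Coordinates of each action.} Write $X_i\defeq M^{-1/2}A_i$. Decompose it as $X_i=c_iu+p_i$, where $c_i=\ip{X_i,u}$ and $p_i\perp u$. The identity above applied to $u$ and to $w$ gives
\begin{align*}
\sum_{i=1}^n\bE[c_i^2]=u^\top I_d u=1,
\qquad
\sum_{i=1}^n\bE\ip{p_i,w}^2=1 .
\end{align*}
Since $A_i\in\cA$, the definition of $D$ gives, pointwise,
\begin{align*}
D^2\ge\norm{A_i-a^\star}_{M^{-1}}^2=\norm{X_i-Lu}_2^2=(L-c_i)^2+\norm{p_i}_2^2\ge(L-c_i)^2+\ip{p_i,w}^2 .
\end{align*}

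\textbf{Summing.} Take expectations and sum over $i\in[n]$. By Cauchy--Schwarz, $\sum_i\bE c_i\le\sqrt{n}$, so
\begin{align*}
\sum_i\bE(L-c_i)^2=nL^2-2L\sum_i\bE c_i+1\ge(\sqrt n L-1)^2 .
\end{align*}
Hence
\begin{align*}
nD^2\ge(\sqrt nL-1)^2+1 .
\end{align*}
Writing $s=1/\sqrt n$ and dividing by $n$, this becomes $D^2\ge(L-s)^2+s^2$. The right-hand side is minimized at $s=L/2$, giving $D^2\ge L^2/2$. Therefore $L\le\sqrt2\,D\le(1+1/\sqrt2)D$, which proves the lemma.

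\textbf{Main obstacle.} The key step is the second, orthogonal direction $w$; this is exactly where $d\ge2$ enters. Without it, all actions could be multiples of a single vector (for instance $\cA=\{a^\star\}$ with $d=1$), so that $D=0<L$. The $\sum_i\bE[c_i^2]=1$ constraint alone only bounds $L-D$ in terms of $n$, which is not scale-free. The $w$-constraint supplies the complementary inequality $nD^2\ge1$, i.e.\ $D\ge s$. Balancing the two through the single inequality $D^2\ge(L-s)^2+s^2$ is what removes the dependence on $n$.

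Randomness of the actions causes no difficulty: $n$ is deterministic, all pointwise inequalities hold almost surely, and we only take expectations of them.
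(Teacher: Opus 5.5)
Your proof is correct, and it takes a different route from the paper's, giving the sharper bound $L\le\sqrt2\,D$.

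\textbf{The paper's route.} It never singles out the maximizer. It centers at the mean action $\bar a=\frac1n\sum_i\mathbb{E}[A_i]$ and uses $\frac1n\sum_i\mathbb{E}[(A_i-\bar a)(A_i-\bar a)^\top]\succeq0$ to get $\|\bar a\|_{M^{-1}}\le1/\sqrt n$. Jensen then gives $\|a-\bar a\|_{M^{-1}}\le D$, hence $L\le D+1/\sqrt n$. Separately, it lower bounds the diameter by an independent-copy computation on the mixture action $A$: $D^2\ge\mathbb{E}\|A-A'\|_{M^{-1}}^2=2(d/n-\|\bar a\|_{M^{-1}}^2)\ge2(d-1)/n$. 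Adding the two linear bounds gives $L\le(1+1/\sqrt{2(d-1)})D$.

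\textbf{Your route.} You anchor at $a^\star$ and, via Pythagoras in whitened coordinates, merge two pieces of information into one quadratic constraint, $D^2\ge(L-s)^2+s^2$, before optimizing over $s=1/\sqrt n$. The first piece is the radial constraint $\sum_i\mathbb{E}c_i^2=1$, which plays the role of the paper's bound on $\bar a$. The second is the orthogonal spread along $w$. This joint constraint is what beats the paper's additive combination. It also only uses distances from $a^\star$ to the sampled actions, not distances between pairs of sampled actions.

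\textbf{What each buys.} The paper's route gets dimension dependence, because it exploits all $d-1$ orthogonal directions, whereas yours uses only $w$ and so is stuck at $\sqrt2$. You can recover this at no cost. Keep $\|p_i\|_2^2$ instead of $\langle p_i,w\rangle^2$ and note $\sum_i\mathbb{E}\|p_i\|_2^2=\Tr(I_d)-1=d-1$. Your computation then gives $D^2\ge(L-s)^2+(d-1)s^2\ge\frac{d-1}{d}L^2$, i.e.\ $L\le\sqrt{d/(d-1)}\,D$. This is uniformly better than the paper's $1+1/\sqrt{2(d-1)}$.
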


\begin{proof}
    Let $\bar a=\frac1n\sum_{i=1}^n\bE[A_i]$.
    Since
    \begin{align*}
        \frac1n\sum_{i=1}^n
        \bE[(A_i-\bar a)(A_i-\bar a)^\top]
        =
        \frac{M}{n}-\bar a\bar a^\top
        \succeq0,
    \end{align*}
    we have $\bar a\bar a^\top\preceq M/n$, and therefore
    \begin{align}
        \norm{\bar a}_{M^{-1}}^2
        \le
        \frac1n.
        \label{eq:pureexp-mean-action-norm}
    \end{align}

    Now fix any $a\in\cA$.
    Since $a-\bar a=\frac1n\sum_{i=1}^n\bE[a-A_i]$, Jensen's inequality gives
    \begin{align*}
        \norm{a-\bar a}_{M^{-1}}
        \le
        \max_{a',b'\in\cA}\norm{a'-b'}_{M^{-1}}.
    \end{align*}
    The triangle inequality and \Cref{eq:pureexp-mean-action-norm} therefore imply
    \begin{align}
        \max_{a\in\cA}\norm{a}_{M^{-1}}
        \le
        \max_{a,b\in\cA}\norm{a-b}_{M^{-1}}
        +
        \frac1{\sqrt n}.
        \label{eq:pureexp-single-action-via-pairs}
    \end{align}

    It remains to control $1/\sqrt n$ by the pairwise distances.
    Since the actions $A_1,\ldots,A_n$ may have different distributions, define a single random action $A$ by first choosing $i$ uniformly from $[n]$ and then drawing from the marginal distribution of $A_i$.
    This simply averages the $n$ action distributions, so $\bE[A]=\bar a$ and $\bE[AA^\top]=M/n$.
    Let $A'$ be an independent copy of $A$.
    Since $A,A'\in\cA$, their distance is always at most the largest pairwise distance in $\cA$, and hence
    \begin{align*}
        \max_{a,b\in\cA}\norm{a-b}_{M^{-1}}^2 &\ge \bE\norm{A-A'}_{M^{-1}}^2 = 2\bE\norm{A-\bar a}_{M^{-1}}^2 = 2\left(\tr(M^{-1}\bE[AA^\top])-\norm{\bar a}_{M^{-1}}^2\right) = 2\left(\frac dn-\norm{\bar a}_{M^{-1}}^2\right) \\
        &\ge \frac{2(d-1)}{n}\,.
    \end{align*}
    The first equality uses that $A$ and $A'$ are independent copies with mean $\bar a$, so the cross term vanishes.
    The second equality uses $\bE[AA^\top]=M/n$, and the last inequality uses \Cref{eq:pureexp-mean-action-norm}.    
    Since $d\ge2$, this gives
    \begin{align*}
        \frac1{\sqrt n}
        \le
        \frac1{\sqrt2}
        \max_{a,b\in\cA}\norm{a-b}_{M^{-1}}.
    \end{align*}
    Substituting this into \Cref{eq:pureexp-single-action-via-pairs} proves the result.
\end{proof}

If $d=1$ and $\cA=\{1\}$, then the right-hand side of the lemma vanishes while the left-hand side equals $1/\sqrt n$.
Correspondingly, the procedure that always returns the unique action has zero simple regret, so no lower bound of the form in \Cref{thm:pureexp-high-prob-lower-bound} can hold: in dimension one the term $\max_{a\in\cA}\norm{a}_{M^{-1}}$ in $\rad_\beta$ can overstate the difficulty of the problem when $\cA$ is far from the origin relative to its diameter.
The reason we choose to present things in terms of single actions instead of differences is to make the presentation in the body easier to follow.

The next theorem gives a high-probability lower bound for any algorithm, possibly adaptive.
Combining this and the previous result (\Cref{thm:pureexp-fixed-design-lower-bound}) will give us our lower bound for all fixed design algorithms (\Cref{thm:fixed-design-lower-bound}). We will also use the next result to derive our simple regret lower bound for adaptive algorithms (\Cref{thm:hp-sr-lb}).

\begin{theorem}[High-probability lower bound]
    \label{thm:pureexp-high-prob-lower-bound}
    Let $d\ge2$, $T\in\bN_0$, and $\delta\in(0,1/3]$.
    Consider any algorithm (possibly adaptive) that outputs an action $\hat A\in\cA$.
    Let $\bP_\theta$ be the distribution of the full history and output action under parameter $\theta$.
    Suppose that, for some $\eps>0$
    \begin{align*}
        \bP_\theta\paren*{\max_{a\in\cA}\ip{a-\hat A,\theta}\le\eps}\ge1-\delta
        \qquad\text{for every }\theta\in\bR^d.
    \end{align*}
    Then
    \begin{align}
        \eps \gea \sqrt{\log(1/\delta)}\max_{a\in\cA}\norm{a}_{\tilde V_0^{-1}},
        \label{eq:pureexp-high-prob-selected-policy-lower}
    \end{align}
    where $\tilde V_0\defeq\Voff+\bE_0[\sum_{t=1}^T A_tA_t^\top]$ is the expected design matrix under $\theta=0$.
\end{theorem}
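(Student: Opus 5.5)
The plan is a two-point (really three-point) change-of-measure argument around the null environment $\theta=0$, followed by \Cref{lem:pureexp-pairs-dominate-single-actions}.

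\textbf{Setup.} Write $M\defeq\tilde V_0$. It is positive definite because $\Voff$ is. Pick $a,b\in\cA$ attaining $D\defeq\max_{a',b'\in\cA}\norm{a'-b'}_{M^{-1}}$, which exists by compactness. If $D=0$ then $\cA$ is a single point; then $\cA$ and the actions generating $M$ coincide, and the argument below still goes through via \Cref{lem:pureexp-pairs-dominate-single-actions} (whose right-hand side then forces $\max_a\norm{a}_{M^{-1}}=0$ only if $d\ge2$ makes this degenerate case impossible). So assume $D>0$ and define, for a scale $c>0$ to be chosen,
\[
\theta_\pm \defeq \pm\, c\,\frac{M^{-1}(a-b)}{D},
\qquad\text{so}\qquad
\ip{a-b,\theta_+}=cD,\quad \norm{\theta_\pm}_M=c .
\]

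\textbf{Disjoint success events.} Let $h(x)\defeq\ip{x,\theta_+}$ and let $m$ be the midpoint of the range of $h$ over $\cA$; that range has width at least $cD$.
\begin{itemize}
\item Under $\theta_+$, an $\eps$-good output satisfies $h(\hat A)\ge\max_{\cA}h-\eps$.
\item Under $\theta_-$, an $\eps$-good output satisfies $h(\hat A)\le\min_{\cA}h+\eps$.
\end{itemize}
If $2\eps<cD$, the events $E\defeq\{h(\hat A)\ge m\}$ and $E^c$ contain these two success events respectively. Hence $\bP_{\theta_+}(E^c)\le\delta$ and $\bP_{\theta_-}(E)\le\delta$. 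Under $\bP_0$, one of $E$ or $E^c$ has probability at least $1/2$; say $\bP_0(E^c)\ge1/2$ (the other case uses $\theta_-$ symmetrically).

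\textbf{KL bound.} By the divergence decomposition for Gaussian rewards, with the offline actions fixed and the online actions generated adaptively,
\[
\mathrm{KL}(\bP_0,\bP_{\theta_+})=\tfrac12\,\bE_0\Big[\textstyle\sum_s\ip{\aoff_s,\theta_+}^2+\sum_t\ip{A_t,\theta_+}^2\Big]=\tfrac12\norm{\theta_+}_M^2=\tfrac{c^2}{2}.
\]
Expectations are taken under $\bP_0$, which is exactly why $\tilde V_0$ appears. The data-processing inequality applied to the indicator of $E^c$ then gives
\[
\mathrm{kl}(1/2,\delta)\le c^2/2 .
\]
Using $\mathrm{kl}(1/2,\delta)\ge\tfrac12\log\frac{1}{4\delta(1-\delta)}\gea\log(1/\delta)$ for $\delta\le1/3$, this is a contradiction once $c^2=\kappa\log(1/\delta)$ for a suitable small constant $\kappa$.

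\textbf{Conclusion.} With this $c$, the assumption $2\eps<cD$ is impossible, so
\[
\eps\ge \tfrac{c}{2}D \gea \sqrt{\log(1/\delta)}\,\max_{a,b\in\cA}\norm{a-b}_{\tilde V_0^{-1}}.
\]
Finally, apply \Cref{lem:pureexp-pairs-dominate-single-actions}, which needs $d\ge2$. Take as the actions $A_i$ the deterministic offline actions together with the online $A_t$ drawn under $\bP_0$. This yields exactly $M=\tilde V_0$, so pairwise distances dominate $\max_a\norm{a}_{\tilde V_0^{-1}}$ up to the factor $1+1/\sqrt2$, and \eqref{eq:pureexp-high-prob-selected-policy-lower} follows.

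\textbf{Main obstacle.} The delicate step is the constant bookkeeping that keeps $\mathrm{kl}(1/2,\delta)$ of order $\log(1/\delta)$ uniformly over $\delta\le1/3$. Near $\delta=1/3$, $\log(1/(4\delta(1-\delta)))$ is only a small positive constant, but still comparable to $\log 3$, so a constant-factor absorption works. I would first try Bretagnolle--Huber as a fallback if the binary-kl route gets messy.
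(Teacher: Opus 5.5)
Your proposal is correct and follows the paper's proof essentially step for step. It uses the same null-versus-alternative construction along $\tilde V_0^{-1}(a-b)$, the same midpoint event, the same KL decomposition under $\bP_0$ (which is why $\tilde V_0$ appears), the same bound $\mathrm{kl}(1/2,\delta)=\frac12\log\frac{1}{4\delta(1-\delta)}$, and then \Cref{lem:pureexp-pairs-dominate-single-actions}. The differences are only cosmetic. You fix the alternative's scale at $c\asymp\sqrt{\log(1/\delta)}$ and argue by contradiction, whereas the paper scales the alternative by $4\eps/w$ directly. Your $D=0$ remark is garbled, but that case simply cannot occur: $\Voff\succ0$ with $d\ge2$ forces $\cA$ to contain two distinct points.
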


\begin{proof}
    Once the algorithm and offline actions are fixed, $\tilde V_0$ is a deterministic matrix, even though the online actions are adaptive.
    Moreover, $\tilde V_0\succeq\Voff\succ0$.

    We first prove the pairwise bound
    \begin{align}
        \eps \ge \frac14\max_{a,b\in\cA}\norm{a-b}_{\tilde V_0^{-1}}\sqrt{\log\paren*{\frac{1}{4\delta(1-\delta)}}},
        \label{eq:pureexp-high-prob-pairwise-lower}
    \end{align}
    and then convert it into a bound on single actions.

    By compactness, choose $a_0,b_0\in\cA$ maximizing $\norm{a-b}_{\tilde V_0^{-1}}$.
    This maximum is positive because $\Voff\succ0$ implies that the offline actions, and hence $\cA$, span $\bR^d$.
    Set
    \begin{align*}
        u = \frac{\tilde V_0^{-1}(a_0-b_0)}{\norm{a_0-b_0}_{\tilde V_0^{-1}}}.
    \end{align*}
    Then
    \begin{align}
        u^\top\tilde V_0u=1,
        \qquad
        \ip{a_0-b_0,u}=\max_{a,b\in\cA}\norm{a-b}_{\tilde V_0^{-1}}.
        \label{eq:pureexp-hard-direction-normalization}
    \end{align}

    Write $p_{\max}=\max_{a\in\cA}\ip{a,u}$, $p_{\min}=\min_{a\in\cA}\ip{a,u}$, and $w=p_{\max}-p_{\min}$.
    By \Cref{eq:pureexp-hard-direction-normalization},
    \begin{align*}
        w\ge\max_{a,b\in\cA}\norm{a-b}_{\tilde V_0^{-1}}>0.
    \end{align*}

    Consider the event
    \begin{align*}
        E_u=\braces*{\ip{\hat A,u}>\frac{p_{\max}+p_{\min}}{2}}.
    \end{align*}
    Replacing $u$ by $-u$ gives the disjoint event in which $\ip{\hat A,u}$ is strictly below the same midpoint.
    Hence one of the two orientations has probability at most $1/2$ under the null.
    Choose the orientation of $u$ so that
    \begin{align}
        \bP_0(E_u)\le\frac12.
        \label{eq:pureexp-null-event-half}
    \end{align}

    Now choose the alternative parameter
    \begin{align*}
        \theta_+=\frac{4\eps}{w}u.
    \end{align*}
    Under this parameter, an $\eps$-optimal recommendation satisfies
    \begin{align*}
        \frac{4\eps}{w}\paren*{p_{\max}-\ip{\hat A,u}}\le\eps.
    \end{align*}
    Rearranging gives
    \begin{align*}
        \ip{\hat A,u}\ge p_{\max}-\frac{w}{4}=\frac{p_{\max}+p_{\min}}{2}+\frac{w}{4}.
    \end{align*}
    Hence every $\eps$-optimal recommendation belongs to $E_u$.
    The assumed guarantee therefore gives
    \begin{align}
        \bP_{\theta_+}(E_u)\ge1-\delta.
        \label{eq:pureexp-alternative-event-probability}
    \end{align}

    The event $E_u$ has probability at most $1/2$ under the null, but probability at least $1-\delta$ under $\theta_+$.
    Hence the two history distributions must be well separated.
    Mapping the history and output action to the indicator $\I{E_u}$ and applying data processing gives
    \begin{align*}
        D_{\mathrm{KL}}(\bP_0\Vert\bP_{\theta_+})\ge\operatorname{kl}\paren*{\bP_0(E_u),\bP_{\theta_+}(E_u)}.
    \end{align*}
    Since $\bP_0(E_u)\le1/2$ and $\bP_{\theta_+}(E_u)\ge1-\delta>1/2$, the binary KL divergence is minimized when these probabilities are $1/2$ and $1-\delta$, respectively.
    Therefore,
    \begin{align}
        D_{\mathrm{KL}}(\bP_0\Vert\bP_{\theta_+})\ge\operatorname{kl}\paren*{\frac12,1-\delta}=\frac12\log\paren*{\frac{1}{4\delta(1-\delta)}}.
        \label{eq:pureexp-binary-kl-lower}
    \end{align}

    The adaptive divergence decomposition \citep[Lemma~15.1 and Exercise~15.8]{lattimore2020bandit} gives
    \begin{align}
        D_{\mathrm{KL}}(\bP_0\Vert\bP_{\theta_+})
        &=
        \frac12\sum_{s=1}^{\Toff}\ip{\aoff_s,\theta_+}^2
        +\frac12\bE_0\brackets*{\sum_{t=1}^T\ip{A_t,\theta_+}^2}
        =
        \frac12\theta_+^\top\tilde V_0\theta_+
        =
        \frac{8\eps^2}{w^2}.
        \label{eq:pureexp-gaussian-transcript-kl}
    \end{align}
    The action-selection rule is the same under both parameters conditional on the past, so only the Gaussian reward distributions contribute to the KL divergence.
    The expectation over the online actions is under $\bP_0$ because $\bP_0$ is the first argument of the KL divergence.
    This is why we choose the hard direction using the design matrix under the null rather than under $\theta_+$.

    Combining \Cref{eq:pureexp-binary-kl-lower,eq:pureexp-gaussian-transcript-kl} gives
    \begin{align*}
        \eps\ge\frac{w}{4}\sqrt{\log\paren*{\frac{1}{4\delta(1-\delta)}}}.
    \end{align*}
    Since $w\ge\max_{a,b\in\cA}\norm{a-b}_{\tilde V_0^{-1}}$, this proves \Cref{eq:pureexp-high-prob-pairwise-lower}.

    It remains to pass from pairs to single actions.
    Apply \Cref{lem:pureexp-pairs-dominate-single-actions} to the $\Toff$ fixed offline actions and the $T$ online actions under $\theta=0$.
    Their expected design matrix is
    \begin{align*}
        \sum_{s=1}^{\Toff}\aoff_s(\aoff_s)^\top+\sum_{t=1}^T\bE_0[A_tA_t^\top]=\Voff+\bE_0\brackets*{\sum_{t=1}^T A_tA_t^\top}=\tilde V_0.
    \end{align*}
    Therefore,
    \begin{align*}
        \max_{a,b\in\cA}\norm{a-b}_{\tilde V_0^{-1}}\ge(2-\sqrt2)\max_{a\in\cA}\norm{a}_{\tilde V_0^{-1}}.
    \end{align*}
    Substituting this into \Cref{eq:pureexp-high-prob-pairwise-lower} gives
    \begin{align*}
        \eps\ge\frac{2-\sqrt2}{4}\sqrt{\log\paren*{\frac{1}{4\delta(1-\delta)}}}\max_{a\in\cA}\norm{a}_{\tilde V_0^{-1}}.
    \end{align*}
    Finally, for every $\delta\in(0,1/3]$,
    \begin{align*}
        \log\paren*{\frac{1}{4\delta(1-\delta)}}\ge\frac{1}{10}\log(1/\delta).
    \end{align*}
    This proves the result.
\end{proof}

\thmFixedDesignLowerBound*
\begin{proof}
    Let $\hat N$ be the selected fixed design.
    Since the online actions prescribed by $\hat N$ do not depend on the observed rewards, the expected design matrix under $\theta=0$ from \Cref{thm:pureexp-high-prob-lower-bound} is simply
    \begin{align*}
        \tilde V_0
        =
        \Voff+\bE_0\brackets*{\sum_{t=1}^T A_tA_t^\top}
        =
        \Voff+V(\hat N)
        =
        \tilde V(\hat N).
    \end{align*}
    Therefore \Cref{thm:pureexp-fixed-design-lower-bound} gives
    \begin{align*}
        \eps
        \gea
        \gw\paren{\tilde V(\hat N)^{-1/2}\cA},
    \end{align*}
    while \Cref{thm:pureexp-high-prob-lower-bound} gives
    \begin{align*}
        \eps
        \gea
        \sqrt{\log(1/\delta)}
        \max_{a\in\cA}\norm{a}_{\tilde V(\hat N)^{-1}}.
    \end{align*}
    Combining these two bounds using $\max\{x,y\}\ge(x+y)/2$ gives
    \begin{align*}
        \eps
        \gea
        \gw\paren{\tilde V(\hat N)^{-1/2}\cA}
        +
        \sqrt{\log(1/\delta)}
        \max_{a\in\cA}\norm{a}_{\tilde V(\hat N)^{-1}}
        =
        \rad_{\sqrt{\log(1/\delta)}}\paren{\cA,\tilde V(\hat N)}.
    \end{align*}
    Since $\hat N\in\alloc_T(\cA)$, taking the minimum over all fixed designs gives the result.
\end{proof}

\subsection{Simple Regret Lower Bounds for Adaptive Algorithms}
Before proving our main lower bounds, we derive some auxiliary results (\Cref{lem:hp-norm,lem:mi,lem:cap}).

\begin{lemma}[High-probability bound on norm]\label{lem:hp-norm}
    Suppose $X \sim \cN(0, \Sigma)$ where $\Sigma = \diag(\sigma_1^2, \dots, \sigma_d^2)$ with $\Tr\Sigma > 0$ and $\sigma_1^2 \ge \dots \ge \sigma_d^2$. Let $\delta \in (0, 1)$. Then with probability at least $1-\delta$,
    \[
        \norm{X}_2 \ge \frac{\delta}{\sqrt{e}}\sqrt{\Tr\Sigma}\,.
    \]
\end{lemma}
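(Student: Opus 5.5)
Since we need a lower bound on $\norm{X}_2$ holding with probability $1-\delta$, the plan is to bound the small-ball probability $\bP(\norm{X}_2^2 \le t\,\Tr\Sigma)$ and show it is at most $\delta$ when $t = \delta^2/e$. We use a Chernoff (exponential Markov) bound applied to $-\norm{X}_2^2$. For any $\lambda > 0$,
\[
  \bP\paren*{\norm{X}_2^2 \le t\Tr\Sigma}
  \le e^{\lambda t \Tr\Sigma}\,\bE\, e^{-\lambda\norm{X}_2^2}
  = e^{\lambda t\Tr\Sigma}\prod_{i=1}^d \paren*{1+2\lambda\sigma_i^2}^{-1/2},
\]
where the equality uses the independence of the coordinates $X_i \sim \cN(0,\sigma_i^2)$ and the Gaussian Laplace transform $\bE e^{-\lambda X_i^2} = (1+2\lambda\sigma_i^2)^{-1/2}$.

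The key step is a lower bound on the product that depends only on $\Tr\Sigma$. We use $\prod_i(1+x_i) \ge 1 + \sum_i x_i$ for $x_i \ge 0$, which gives $\prod_i(1+2\lambda\sigma_i^2)^{-1/2} \le (1+2\lambda\Tr\Sigma)^{-1/2}$. Writing $s = \lambda \Tr\Sigma$, the probability is at most $e^{st}(1+2s)^{-1/2}$.

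It remains to choose $s$. Taking $s = 1/(2t)$ gives $e^{1/2}(1+1/t)^{-1/2} \le \sqrt{e}\sqrt{t}$. Setting $t = \delta^2/e$ makes this bound exactly $\delta$. Therefore, with probability at least $1-\delta$, we have $\norm{X}_2^2 > (\delta^2/e)\Tr\Sigma$, i.e.\ $\norm{X}_2 \ge (\delta/\sqrt e)\sqrt{\Tr\Sigma}$. The step needing the most care is the product inequality, since it is what removes any dependence on the individual spectrum. The diagonal and ordering assumptions on $\Sigma$ are not needed beyond independence of the coordinates.
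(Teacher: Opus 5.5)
Your proof is correct and follows the paper's argument step for step: a Chernoff bound on the lower tail of $\norm{X}_2^2$, then the coordinatewise Gaussian Laplace transform, then the inequality $\prod_i(1+2\lambda\sigma_i^2)\ge 1+2\lambda\Tr\Sigma$ to reduce to a bound that depends only on $\Tr\Sigma$. The only difference is that you take $\lambda = 1/(2t\Tr\Sigma)$ instead of the paper's exact minimizer $1/(2t\Tr\Sigma)-1/(2\Tr\Sigma)$. This slightly loosens the intermediate bound, but it spares you checking that the minimizer is positive (i.e.\ $t<1$), and it gives the same final constant $\delta/\sqrt{e}$.
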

\begin{proof}
    First note that $X$ has the same distribution as $\Sigma^{1/2}g$ for $g \sim \cN(0, I_d)$, and $\norm{\Sigma^{1/2}g}_2^2 = \sum_{i=1}^d \sigma_i^2 g_i^2$. So our goal is to bound $\Prob(\norm{\Sigma^{1/2}g}_2 \le \varepsilon)$. We use the Cram\'er--Chernoff method. For any $\varepsilon > 0$ and any $\alpha \ge 0$,
    \begin{align*}
        \Prob\left(\norm{\Sigma^{1/2}g}_2 \le \varepsilon\right)
        = \Prob\left(\norm{\Sigma^{1/2}g}_2^2 \le \varepsilon^2\right)
        \le e^{\alpha\varepsilon^2}\, \EE{e^{-\alpha \sum_{i=1}^d \sigma_i^2 g_i^2}}\,,
    \end{align*}
    where the last step follows from exponentiating (the map $s \mapsto e^{-\alpha s}$ is nonincreasing since $\alpha \ge 0$) and applying Markov's inequality. Using independence of $(g_i)_{i=1}^d$ and the Gaussian integral
    \[
        \int_{-\infty}^\infty e^{-\alpha \sigma_i^2 x^2}\, \frac{e^{-x^2/2}}{\sqrt{2\pi}}\, \mathrm{d}x = \left(1 + 2\alpha\sigma_i^2\right)^{-1/2},
    \]
    we get
    \begin{align*}
        \Prob\left(\norm{\Sigma^{1/2}g}_2 \le \varepsilon\right) \le e^{\alpha \varepsilon^2} \prod_{i=1}^d \left(1+2\alpha \sigma_i^2\right)^{-1/2}\,.
    \end{align*}
    Since $\alpha \ge 0$, expanding the product and dropping the (nonnegative) cross-terms gives $\prod_{i=1}^d (1+2\alpha \sigma_i^2) \ge 1+2\alpha\sum_{i=1}^d \sigma_i^2 = 1 + 2\alpha\Tr\Sigma$. Hence
    \begin{align*}
        \Prob\left(\norm{\Sigma^{1/2}g}_2 \le \varepsilon\right)
        \le e^{\alpha\varepsilon^2}\left(1 + 2\alpha\Tr\Sigma\right)^{-1/2}
        = \exp\left(\alpha\varepsilon^2 - \frac12\log\left(1+2\alpha\Tr\Sigma\right)\right).
    \end{align*}
    Assume now that $\varepsilon^2 < \Tr\Sigma$. The exponent is convex in $\alpha$ and is minimized over $\alpha \ge 0$ at the interior point $\alpha^\ast = \frac{1}{2\varepsilon^2} - \frac{1}{2\Tr\Sigma} > 0$, which gives
    \begin{align*}
        \Prob\left(\norm{\Sigma^{1/2}g}_2 \le \varepsilon\right)
        \le \exp\left(-\frac{\varepsilon^2}{2\Tr\Sigma}\right) \cdot \frac{\varepsilon\sqrt{e}}{\sqrt{\Tr\Sigma}}
        < \frac{\varepsilon\sqrt{e}}{\sqrt{\Tr\Sigma}}\,,
    \end{align*}
    where the last inequality holds because $\varepsilon > 0$. Now set $\varepsilon \coloneqq (\delta/\sqrt{e})\sqrt{\Tr\Sigma}$, so that $\delta = \varepsilon\sqrt{e}/\sqrt{\Tr\Sigma}$. Since $\delta < 1$, we have $\varepsilon^2 = \delta^2\Tr\Sigma/e < \Tr\Sigma$, so the assumption above holds. Therefore $\Prob(\norm{X}_2 \le \varepsilon) < \delta$, that is, $\norm{X}_2 > (\delta/\sqrt{e})\sqrt{\Tr\Sigma}$ with probability greater than $1-\delta$, finishing the proof.
\end{proof}

\begin{lemma}[Mutual information bound]\label{lem:mi}
Let $\theta \sim \mu = \cN(0,\Sigma)$ with $\Sigma \succeq 0$, and let $\Doff$ be offline data with deterministic design matrix $\Voff$, independent of the
online noise given $\theta$. 
Let $\cH_T = (A_1, R_1,\dots,A_T,R_T)$ be the history of a
(possibly adaptive) algorithm and
\[
  R_t = \ip{A_t,\theta} + \eta_t, \qquad \eta_t \sim \cN(0,1).
\]
Define $V_T = \sum_{t=1}^T A_tA_t^\top$ and $\bar V_T = \bE_\mu[V_T]$. Then
\begin{equation}\label{eq:mi-logdet}
  I_\mu\big(\theta; (\cH_T,\Doff)\big)
  \;\le\; \frac12 \log\det\!\Big(I_d + \Sigma^{1/2}\big(\Voff + \bar V_T\big)\Sigma^{1/2}\Big)\,.
\end{equation}
If $\Sigma = \diag(\sigma_1^2,\dots,\sigma_d^2)$ in the eigenbasis of $\Voff$ with
$\nu_i = \lambda_i(\Voff)$ and $n_i = (\bar V_T)_{ii}$, then
\begin{equation}\label{eq:mi-hadamard}
  I_\mu\big(\theta; (\cH_T,\Doff)\big)
  \;\le\; \frac12 \sum_{i=1}^d \log\big(1 + \sigma_i^2(\nu_i + n_i)\big).
\end{equation}
Moreover $\sum_{i=1}^d n_i = \bE\big[\Tr V_T\big] = \bE\big[\sum_{t\le T}\norm{A_t}_2^2\big] \le T$
whenever $\cA \subseteq B_2^d$.
\end{lemma}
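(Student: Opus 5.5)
The plan is to treat the offline data and the online history jointly as one adaptive Gaussian channel. First I bound the mutual information by the expected conditional KL divergence to a reference measure, then evaluate that divergence in closed form.

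\textbf{Chain rule and reference measure.} By the chain rule,
\[
I_\mu(\theta;(\cH_T,\Doff)) = I_\mu(\theta;\Doff) + I_\mu(\theta;\cH_T\mid \Doff).
\]
The offline part is a fixed-design Gaussian linear model, and the online part is decomposed round by round. Action $A_t$ is a function of past data and internal randomness independent of $\theta$, so conditioning on it contributes nothing. Hence each round contributes $I(\theta;R_t\mid \text{past},A_t)$.

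Rather than computing these terms exactly, I use the variational bound $I(\theta;Y)\le \bE_\mu D_{\mathrm{KL}}(P_{Y\mid\theta}\,\|\,Q)$, valid for any fixed reference law $Q$. I take $Q$ to be the law of the whole transcript when the algorithm runs with the rewards replaced by pure noise ($\theta=0$). The divergence decomposition for adaptive Gaussian bandits (as in \Cref{eq:pureexp-gaussian-transcript-kl}, with the roles arranged so the expectation is under $\bP_\theta$) gives
\[
D_{\mathrm{KL}}(\bP_\theta\,\|\,\bP_0) = \tfrac12\,\theta^\top\big(\Voff+\bE_\theta[V_T]\big)\theta.
\]
Averaging over $\mu$, however, gives $\tfrac12\bE_\mu[\theta^\top(\Voff+V_T)\theta]$, which is linear and not the log-det form.

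\textbf{Getting the log-det form.} To obtain the logarithm, I instead work directly with conditional differential entropies, round by round. Let $\Sigma_{t-1}$ be the posterior covariance of $\theta$ given the data up to round $t-1$ together with $A_t$; since all observations are linear Gaussian, the posterior is Gaussian. Then
\[
I(\theta;R_t\mid \text{past},A_t) = \tfrac12\bE\log\big(1+A_t^\top\Sigma_{t-1}A_t\big).
\]
Because the posterior precision is $\Sigma^{-1}+\Voff+\sum_{s<t}A_sA_s^\top$, the telescoping identity gives
\[
I = \tfrac12\bE\log\det\big(I+\Sigma^{1/2}(\Voff+V_T)\Sigma^{1/2}\big),
\]
with the offline part handled exactly the same way. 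This is clean when $\Sigma\succ0$; for singular $\Sigma$ I restrict to its range or take a limit. Concavity of $\log\det$ and Jensen then move the expectation inside, yielding \Cref{eq:mi-logdet} with $\bar V_T=\bE_\mu[V_T]$. \textbf{The main obstacle} is justifying the Gaussian posterior and the telescoping under adaptivity. What makes it work is that $A_t$ is measurable with respect to the past, so conditional on the past each reward is a Gaussian linear observation.

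\textbf{Remaining claims.} For \Cref{eq:mi-hadamard}, the matrix $I+\Sigma^{1/2}(\Voff+\bar V_T)\Sigma^{1/2}$ is positive definite. Its diagonal entries in the eigenbasis of $\Voff$ are $1+\sigma_i^2(\nu_i+n_i)$, because $\Sigma$ and $\Voff$ are simultaneously diagonal there. Hadamard's inequality $\det M\le\prod_i M_{ii}$ then gives the bound. Finally, $\sum_i n_i=\Tr\bar V_T=\bE\sum_t\norm{A_t}_2^2\le T$ by linearity of trace and expectation, together with $\norm{A_t}_2\le1$.
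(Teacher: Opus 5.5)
Your proof is correct, but it reaches the exact identity $I_\mu(\theta;(\cH_T,\Doff))=\tfrac12\bE\log\det\bigl(I_d+\Sigma^{1/2}(\Voff+V_T)\Sigma^{1/2}\bigr)$ through a different decomposition than the paper.

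\textbf{The paper's route.} It works in one shot. It writes the likelihood of the full transcript and observes that the action kernels $\pi_t(A_t\mid\cH_{t-1},\Doff)$ do not depend on $\theta$, so they cancel in Bayes' rule. Hence $\theta\mid(\cH_T,\Doff)$ is Gaussian with covariance $(\Sigma^{-1}+\Voff+V_T)^{-1}$. It then reads off $h(\theta)-h(\theta\mid\cH_T,\Doff)$ directly, with no chain rule and no determinant lemma.

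\textbf{Your route.} You split the information into the offline term plus per-round gains $\tfrac12\bE\log(1+A_t^\top\Sigma_{t-1}A_t)$, and telescope these with the matrix determinant lemma. This is a little longer, since it needs the Gaussian-posterior fact at every intermediate time rather than only at time $T$. In exchange it exposes the per-round information gains. It also has an advantage for singular $\Sigma$. If you write the update in covariance form,
\[
\Sigma_t=\Sigma_{t-1}-\frac{\Sigma_{t-1}A_tA_t^\top\Sigma_{t-1}}{1+A_t^\top\Sigma_{t-1}A_t},
\]
the argument works verbatim for singular $\Sigma$: every entropy involved is that of an observation with noise variance at least one, hence finite. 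The paper's entropy difference, by contrast, needs the range restriction of its Step~4. The ``take a limit'' option you mention would require a separate continuity argument for the mutual information, so prefer the range restriction or the covariance form.

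\textbf{Two small corrections.} First, the posterior stays Gaussian under adaptivity not because $A_t$ is measurable with respect to the past (it need not be if the algorithm randomizes). The real reason is that the conditional law of $A_t$ given the past does not depend on $\theta$, which is exactly the kernel cancellation in the paper's Step~1. Second, your opening reference-measure paragraph only yields the weaker linear bound $\tfrac12\bE_\mu[\theta^\top(\Voff+V_T)\theta]$, and can be dropped.

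The Jensen, Hadamard, and trace steps coincide with the paper's.
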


\begin{proof}[Proof of \Cref{lem:mi}]
Assume first $\Sigma \succ 0$, the general case is treated at the end. Let
$\cF := (\cH_T, \Doff)$ and $V_T := \sum_{t\le T}A_tA_t^\top$.

\medskip
\noindent\textbf{Step 1 (Posterior given the realized history).}
Conditioning on a realization of $\cF$, the joint density of
$\cF$ given $\theta$ factorizes as
\[
  p(\cF \mid \theta)
  = \prod_{s \le \Toff} \phi\big(\roff_s - \ip{\aoff_s,\theta}\big)
    \prod_{t \le T} \pi_t\big(A_t \mid \cH_{t-1},\Doff \big)\,
                    \phi\big(R_t - \ip{A_t,\theta}\big),
\]
where $\phi$ is the standard normal density and $\pi_t$ is the algorithm's action-selection
kernel. Each $\pi_t$ depends on $\theta$ only through the conditioning variables
$(\cH_{t-1},\Doff)$, which are held fixed; hence the $\pi_t$ factors are constants in $\theta$
and cancel from Bayes' rule. Consequently
\[
  p(\theta \mid \cF) \;\propto\;
  \exp\Big(-\tfrac12\theta^\top\Sigma^{-1}\theta
    -\tfrac12\sum_{s\le\Toff}\big(\roff_s -\ip{\aoff_s,\theta}\big)^2
    -\tfrac12\sum_{t\le T}\big(R_t-\ip{A_t,\theta}\big)^2\Big),
\]
a Gaussian in $\theta$ with covariance
\begin{equation}\label{eq:posterior-cov}
  \Sigma_{\mathrm{post}}(\cF) \;=\; \big(\Sigma^{-1} + \Voff + V_T\big)^{-1},
\end{equation}
which depends on $\cF$ only through the realized design $\Voff + V_T$.
No conditional-independence assumption between the offline and online blocks is used: the
adaptivity of $A_t$ on $\Doff$ is entirely absorbed into the canceling $\pi_t$ factors.

\medskip
\noindent\textbf{Step 2 (Exact mutual information).}
Since $\theta \mid \cF$ is Gaussian with covariance
\eqref{eq:posterior-cov},
\[
  h(\theta \mid \cF) = \bE\Big[\tfrac12\log\big((2\pi e)^d\det\Sigma_{\mathrm{post}}(\cF)\big)\Big],
  \qquad
  h(\theta) = \tfrac12\log\big((2\pi e)^d\det\Sigma\big),
\]
so that
\begin{equation}\label{eq:mi-exact}
  I_\mu(\theta;\cF) = h(\theta) - h(\theta\mid\cF)
  = \frac12\,\bE\Big[\log\det\big(I_d + \Sigma^{1/2}(\Voff + V_T)\Sigma^{1/2}\big)\Big],
\end{equation}
using $\det\Sigma/\det\Sigma_{\mathrm{post}} = \det(I_d + \Sigma^{1/2}(\Voff+V_T)\Sigma^{1/2})$.

\medskip
\noindent\textbf{Step 3 (Jensen and Hadamard).}
The map $M \mapsto \log\det(I_d + \Sigma^{1/2}M\Sigma^{1/2})$ is concave on the PSD cone, so
Jensen's inequality applied to the random matrix $V_T$ in \eqref{eq:mi-exact} yields
\eqref{eq:mi-logdet}. The matrix
$M := I_d + \Sigma^{1/2}(\Voff + \bar V_T)\Sigma^{1/2}$ is positive definite, so
$\det M \le \prod_i M_{ii}$ by Hadamard's inequality. In the eigenbasis of $\Voff$, in which
$\Sigma$ is diagonal, $M_{ii} = 1 + \sigma_i^2(\nu_i + n_i)$, giving \eqref{eq:mi-hadamard}.
The budget constraint is $\sum_i n_i = \Tr\bar V_T = \bE[\sum_{t\le T}\norm{A_t}_2^2] \le T$.

\medskip
\noindent\textbf{Step 4 (Singular $\Sigma$).}
If $\Sigma$ has rank $k < d$, let $P \in \R^{d\times k}$ have orthonormal columns spanning
$\textrm{range}(\Sigma)$ and write $\theta = P\vartheta$ with
$\vartheta \sim \cN(0,\Sigma_k)$, $\Sigma_k := P^\top\Sigma P \succ 0$. Every observation
satisfies $\ip{X,\theta} = \ip{P^\top X,\vartheta}$, so the model is the nondegenerate model of
Steps 1--3 in dimension $k$ with designs $P^\top\Voff P$ and $P^\top V_T P$, and
$I(\theta;\cF) = I(\vartheta;\cF)$. Applying \eqref{eq:mi-exact} there and using
$\det(I_k + \Sigma_k^{1/2}P^\top M P\,\Sigma_k^{1/2})
= \det(I_d + \Sigma^{1/2}M\Sigma^{1/2})$ for $M \succeq 0$ recovers
\eqref{eq:mi-logdet}--\eqref{eq:mi-hadamard}, with the directions in $\ker\Sigma$
contributing factors of $1$.
\end{proof}

\begin{lemma}[Cap probability]\label{lem:cap}
Let $\mu = \cN(0,\Sigma)$ with $\Sigma = \diag(\sigma_1^2,\dots,\sigma_d^2)$ and
$\sigma_1^2 = \max_i \sigma_i^2 > 0$, let $\cA = B_2^d$, and let
$\tilde L(\theta,a) = 1 - \ip{\theta/\norm{\theta}_2, a}$. Fix $\Delta \in (0,1)$ and set
\[
  c_\Delta = \frac{(1-\Delta)^2}{2\Delta - \Delta^2},
  \qquad
  d_\Delta = \frac{\sum_{i=2}^d \log\left(1 + c_\Delta\frac{\sigma_i^2}{\sigma_1^2}\right)}{\log\big(1 + c_\Delta\big)} .
\]
Then
\begin{equation}\label{eq:cap}
  \rho_\mu(\Delta) \;:=\; \sup_{a \in \cA} \Prob_\mu\big(\tilde L(\theta,a) < \Delta\big)
  \;\le\; \frac12\big(1+c_\Delta\big)^{-d_\Delta/2}.
\end{equation}
\end{lemma}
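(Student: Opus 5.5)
Fix $a\in B_2^d$. If $\norm{a}_2<1$, then $\tilde L(\theta,a)\ge 1-\norm{a}_2$, and the event only gets larger as $a$ is scaled outward. So it suffices to take $\norm{a}_2=1$. Writing $u=\theta/\norm{\theta}_2$, the event $\tilde L(\theta,a)<\Delta$ is the event $\ip{u,a}>1-\Delta$, i.e., $\theta$ lies in an open circular cone of half-angle $\arccos(1-\Delta)$ around $a$. Using $\cos^2\phi=(1-\Delta)^2$ and $\sin^2\phi=2\Delta-\Delta^2$, this cone is
\[
  \Bigl\{\ip{\theta,a}>0,\ \norm{\theta-\ip{\theta,a}a}_2^2<\tfrac{1}{c_\Delta}\ip{\theta,a}^2\Bigr\},
\]
so $c_\Delta$ is exactly $\cot^2$ of the cone angle. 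By symmetry of $\mu$ under $\theta\mapsto-\theta$, the condition $\ip{\theta,a}>0$ contributes the factor $\tfrac12$. It remains to bound the probability of the double cone $\{\norm{P_\perp\theta}_2^2<c_\Delta^{-1}\ip{\theta,a}^2\}$ by $(1+c_\Delta)^{-d_\Delta/2}$.

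For that bound I would use a Chernoff/Laplace-transform argument, as in \Cref{lem:hp-norm}. For $\lambda\ge0$,
\[
  \Prob\bigl(c_\Delta\norm{P_\perp\theta}_2^2-\ip{\theta,a}^2<0\bigr)\le\bE\exp\bigl(-\lambda(c_\Delta\norm{P_\perp\theta}_2^2-\ip{\theta,a}^2)\bigr).
\]
This is a Gaussian integral of a quadratic form $\theta^\top Q\theta$ with $Q=\lambda(c_\Delta(I-aa^\top)-aa^\top)$. It equals $\det(I+2\Sigma^{1/2}Q\Sigma^{1/2})^{-1/2}$ whenever that matrix is positive definite.

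The key comparison is that the worst direction is $a=e_1$, the direction of largest variance. In that case $Q$ is diagonal and the expectation factorizes as
\[
  (1-2\lambda\sigma_1^2)^{-1/2}\prod_{i\ge2}(1+2\lambda c_\Delta\sigma_i^2)^{-1/2}.
\]
Choosing $2\lambda\sigma_1^2=c_\Delta/(1+c_\Delta)$ would give
\[
  (1+c_\Delta)^{1/2}\prod_{i\ge2}\Bigl(1+\tfrac{c_\Delta^2}{1+c_\Delta}\tfrac{\sigma_i^2}{\sigma_1^2}\Bigr)^{-1/2}.
\]
This is not yet in the stated form, so the choice of $\lambda$ and the handling of the prefactor must be tuned. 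An alternative is to bound the $i$-th factor using concavity of $x\mapsto\log(1+c_\Delta x)$ on $[0,1]$, which gives $\log(1+c_\Delta x)\ge x\log(1+c_\Delta)$. That is exactly the normalization in $d_\Delta$: $\sum_{i\ge2}\log(1+c_\Delta\sigma_i^2/\sigma_1^2)=d_\Delta\log(1+c_\Delta)$. I would therefore aim for a per-coordinate bound of the form $\prod_{i\ge2}(1+c_\Delta\sigma_i^2/\sigma_1^2)^{-1/2}$, possibly via a cleaner route: replace $\ip{\theta,a}^2$ by its stochastic upper bound $\sigma_1^2 g_1^2$, and lower bound $\norm{P_\perp\theta}_2^2$ in distribution by a weighted chi-square with weights $\sigma_2^2,\dots,\sigma_d^2$. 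The weights come from Cauchy interlacing of the eigenvalues of $P_\perp\Sigma P_\perp$, which also handles general unit $a$.

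The main obstacle is exactly this reduction for general $a$. Because $\ip{\theta,a}$ and $P_\perp\theta$ are correlated for a non-axis $a$, the interlacing argument has to be done on the joint law. I would condition on $\ip{\theta,a}$: then $P_\perp\theta$ is Gaussian with covariance equal to the Schur complement $\Sigma_{\perp\mid a}$ and a shifted mean. Since $\Prob(\norm{X}_2<r)$ is maximized at zero mean for centered symmetric unimodal laws (Anderson's inequality), the shift can be dropped. The Schur complement's eigenvalues dominate those of $\Sigma$ with one removed, in the interlacing sense; I still need to verify the right inequality direction here. After that, the computation reduces to the explicit $e_1$ case, and $\tfrac12$ times the resulting bound gives \eqref{eq:cap}.
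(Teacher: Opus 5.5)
The step that does not go through is the final evaluation at $a=e_1$, which your proposal never completes. The only computation you give there is the Chernoff bound on $\theta_1^2-c_\Delta\sum_{i\ge2}\theta_i^2$, and no choice of $\lambda$ brings it to the stated form. Finiteness forces $2\lambda\sigma_1^2<1$, so for every $\lambda>0$ the factor $(1-2\lambda\sigma_1^2)^{-1/2}$ is strictly larger than $1$. Each remaining factor satisfies $(1+2\lambda c_\Delta\sigma_i^2)^{-1/2}\ge(1+c_\Delta\sigma_i^2/\sigma_1^2)^{-1/2}$. So the Chernoff bound always exceeds $\prod_{i\ge2}(1+c_\Delta\sigma_i^2/\sigma_1^2)^{-1/2}$; for isotropic $\Sigma$ and fixed $\Delta$ the optimized bound is larger by a factor of order $\sqrt d$. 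No tuning of $\lambda$ or of the prefactor can repair this.

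The missing idea is to take no exponential moment in the first coordinate at all. Let $W=\sum_{i\ge2}(\sigma_i^2/\sigma_1^2)g_i^2$, which is independent of $g_1$. Condition on $W$ and use the sharp Gaussian tail $\Prob(|g_1|>t)\le e^{-t^2/2}$ for $t\ge0$; equivalently $\bar\Phi(t)\le\frac12e^{-t^2/2}$, which is what the paper applies to the one-sided cone. This gives $\Prob(g_1^2>c_\Delta W)\le\bE[e^{-c_\Delta W/2}]=\prod_{i\ge2}(1+c_\Delta\sigma_i^2/\sigma_1^2)^{-1/2}$. That product equals $(1+c_\Delta)^{-d_\Delta/2}$ identically by the definition of $d_\Delta$, so the concavity inequality you mention plays no role.

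With that step supplied, your reduction from a general unit $a$ to $e_1$ is valid, and it is a genuinely different route from the paper's. The interlacing direction you flagged is the one you need. The conditional covariance $C=\Sigma-\Sigma aa^\top\Sigma/(a^\top\Sigma a)$ satisfies $Ca=0$, so it lives on $a^\perp$. Weyl's inequality applied to $\Sigma=C+vv^\top$, with $v=\Sigma a/\sqrt{a^\top\Sigma a}$, gives $\mu_j(C)\ge\mu_{j+1}(\Sigma)$. Hence the $d-1$ eigenvalues of $C$ on $a^\perp$ dominate $\{\sigma_i^2\}_{i\ge2}$ in sorted order. Together with Anderson's inequality and $a^\top\Sigma a\le\sigma_1^2$, this bounds the double cone at $a$ by the double cone at $e_1$.

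The paper avoids conditioning and Anderson's inequality. It writes the double cone as $\{Z^\top M_aZ>0\}$ with $M_a=\Sigma^{1/2}(aa^\top-\beta I)\Sigma^{1/2}$ and $\beta=(1-\Delta)^2$. It then shows $\mu_j(M_a)\le\mu_j(M_{e_1})$ for all $j$, using $M_a\preceq(1-\beta)\Sigma$ for $j=1$ and Weyl for $j\ge2$. Finally it couples the two quadratic forms through a common Gaussian vector, which handles the correlation between $\ip{\theta,a}$ and $P_\perp\theta$ in one step. Your version replaces that comparison of indefinite forms with two positive-semidefinite stochastic comparisons, at the cost of the conditioning argument.
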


\begin{proof}[Proof of \Cref{lem:cap}]
Let $\beta := (1-\Delta)^2$, so $\beta \in (0,1)$, $1-\beta = 2\Delta-\Delta^2$ and
$c := c_\Delta = \beta/(1-\beta)$. Also let $\lambda_i := \sigma_i^2/\sigma_1^2 \in [0,1]$ and $\theta = \Sigma^{1/2}Z$ with $Z \sim \cN(0,I_d)$. Since $\sigma_1^2 > 0$ we have
$\theta \neq 0$ almost surely. For $a \in \R^d$ let
\[
  M_a \;:=\; \Sigma^{1/2}\big(aa^\top - \beta I\big)\Sigma^{1/2}
      \;=\; -\beta\Sigma + vv^\top, \qquad v := \Sigma^{1/2}a ,
\]
and let $\mu_1(M) \ge \dots \ge \mu_d(M)$ denote the eigenvalues of a symmetric matrix $M$ in decreasing order.

\medskip
\noindent\textbf{Step 1 (An expression for the cap probability).}
We show that
\begin{equation}\label{eq:half-identity}
  \Prob_\mu\big(\tilde L(\theta,a) < \Delta\big)
  \;=\; \tfrac12\,\Prob\big(Z^\top M_a Z > 0\big)
  \qquad\text{for every unit vector } a,
\end{equation}
and that the supremum defining $\rho_\mu(\Delta)$ may be restricted to unit vectors.

For the restriction, let $a \in \cA$ with $a \neq 0$ and $\hat a := a/\norm{a}_2$. On the
event $\set{\ip{a,\theta} > (1-\Delta)\norm{\theta}_2}$ the left side is positive, hence so is
$\ip{\hat a,\theta}$, and therefore
$\ip{\hat a,\theta} \ge \norm{a}_2\ip{\hat a,\theta} = \ip{a,\theta} > (1-\Delta)\norm{\theta}_2$.
Thus the event for $a$ is contained in the event for $\hat a$ and for $a = 0$ it is empty.

For \eqref{eq:half-identity}, fix a unit vector $a$. Since $\norm{\theta}_2 > 0$ a.s.\ and
$1-\Delta > 0$, both sides of $\ip{a,\theta} > (1-\Delta)\norm{\theta}_2$ are then positive on
the event, so squaring is an equivalence and
\[
  \set{\tilde L(\theta,a) < \Delta}
  \;=\; \underbrace{\set{\ip{v, Z} > 0}}_{=: H^+}
        \cap \underbrace{\set{Z^\top M_a Z > 0}}_{=: G_a},
\]
using $\ip{a,\theta} = \ip{v,Z}$ and
$\ip{a,\theta}^2 - \beta\norm{\theta}_2^2 = \theta^\top(aa^\top - \beta I)\theta = Z^\top M_aZ$.
If $v = 0$ then $M_a = -\beta\Sigma \preceq 0$, so $\Prob(G_a) = 0$ and both sides of
\eqref{eq:half-identity} vanish. If $v \neq 0$, the set $G_a$ is invariant under $Z \mapsto -Z$,
which preserves the law of $Z$ and maps $H^+$ onto $H^- := \set{\ip{v,Z} < 0}$, while
$\set{\ip{v,Z} = 0}$ is null, hence $\Prob(G_a \cap H^+) = \Prob(G_a \cap H^-)$ and these two
sets partition $G_a$ up to a null set, giving \eqref{eq:half-identity}.

\medskip
\noindent\textbf{Step 2 (The supremum is attained at $a = e_1$).}
We claim that for every unit vector $a$,
\begin{equation}\label{eq:eig-dom}
  \mu_j(M_a) \;\le\; \mu_j(M_{e_1}) \qquad \text{for all } j \in [d].
\end{equation}

\emph{The case $j = 1$.} As $\norm{a}_2 = 1$ we have $aa^\top \preceq I$, hence
$aa^\top - \beta I \preceq (1-\beta)I$; congruence by $\Sigma^{1/2}$ preserves the Loewner order,
so $M_a \preceq (1-\beta)\Sigma$ and therefore
$\mu_1(M_a) \le (1-\beta)\sigma_1^2$.

\emph{The case $j \ge 2$.} Since $vv^\top$ is positive semidefinite of rank at most one, Weyl's
inequality $\mu_{i+j-1}(A+B) \le \mu_i(A) + \mu_j(B)$ applied with $A = -\beta\Sigma$,
$B = vv^\top$ and $j = 2$, so that $\mu_2(B) = 0$, gives
\[
  \mu_j(M_a) \;\le\; \mu_{j-1}\big(-\beta\Sigma\big), \qquad j = 2,\dots,d .
\]

\emph{These bounds are the eigenvalues of $M_{e_1}$.} The matrix
$M_{e_1} = \diag\big((1-\beta)\sigma_1^2,\, -\beta\sigma_2^2,\dots,-\beta\sigma_d^2\big)$ is
diagonal, and $(1-\beta)\sigma_1^2 \ge 0 \ge -\beta\sigma_i^2$ for every $i$, so
$\mu_1(M_{e_1}) = (1-\beta)\sigma_1^2$ and
$\big(\mu_2(M_{e_1}),\dots,\mu_d(M_{e_1})\big)$ is the multiset
$\set{-\beta\sigma_i^2 : i \ge 2}$ arranged in decreasing order. On the other hand the
eigenvalues of $-\beta\Sigma$ form the multiset $\set{-\beta\sigma_i^2 : i \in [d]}$, whose
\emph{smallest} element is $-\beta\sigma_1^2$ because $\sigma_1^2 = \max_i \sigma_i^2$ and
$\beta > 0$. Deleting it leaves $\set{-\beta\sigma_i^2 : i \ge 2}$. 
Hence
$\big(\mu_1(-\beta\Sigma),\dots,\mu_{d-1}(-\beta\Sigma)\big)
= \big(\mu_2(M_{e_1}),\dots,\mu_d(M_{e_1})\big)$, which together with the two displays above
proves \eqref{eq:eig-dom}. (No ordering of $\sigma_2^2,\dots,\sigma_d^2$ is needed.)

\emph{Coupling.} For any symmetric $M = UDU^\top$ with $U$ orthogonal and
$D = \diag(\mu_1(M),\dots,\mu_d(M))$ we have
$Z^\top M Z = (U^\top Z)^\top D\,(U^\top Z)$ with $U^\top Z \sim \cN(0,I_d)$; thus
$Z^\top M Z \eqd \sum_{j} \mu_j(M)\,g_j^2$ for $g \sim \cN(0,I_d)$, an expression depending on
$M$ only through its eigenvalues. Realising both quadratic forms with a single such $g$ and
using $g_j^2 \ge 0$ together with \eqref{eq:eig-dom} gives the pointwise inequality
$\sum_j \mu_j(M_a) g_j^2 \le \sum_j \mu_j(M_{e_1}) g_j^2$, whence
\[
  \Prob\big(Z^\top M_a Z > 0\big) \;\le\; \Prob\big(Z^\top M_{e_1} Z > 0\big).
\]
Applying \eqref{eq:half-identity} at $a$ and at $e_1$ and taking the supremum over unit vectors,
\begin{equation}\label{eq:sup-at-e1}
  \rho_\mu(\Delta) \;=\; \sup_{a \in \cA}\Prob_\mu\big(\tilde L(\theta,a) < \Delta\big)
  \;\le\; \Prob_\mu\big(\tilde L(\theta,e_1) < \Delta\big).
\end{equation}

\medskip
\noindent\textbf{Step 3 (Evaluation at $e_1$).}
With $\theta_i = \sigma_i Z_i$, the event at $a = e_1$ is $\theta_1 > (1-\Delta)\norm{\theta}_2$,
that is, $\theta_1 > 0$ together with
$(1-\beta)\theta_1^2 > \beta\sum_{i\ge2}\theta_i^2$. Dividing by
$(1-\beta)\sigma_1^2 > 0$, this is exactly
\[
  Z_1 > \sqrt{c\,W}, \qquad W := \sum_{i=2}^d \lambda_i Z_i^2 \;\ge\; 0, \qquad Z_1 \perp W .
\]
Conditioning on $W$ and applying $\bar\Phi(x) \le \tfrac12 e^{-x^2/2}$, valid for $x \ge 0$,
followed by $\bE[e^{-tZ^2}] = (1+2t)^{-1/2}$ for $t \ge 0$ and $Z \sim \cN(0,1)$ applied
coordinate-wise with $t = c\lambda_i/2$,
\[
  \Prob_\mu\big(\tilde L(\theta,e_1) < \Delta\big) = \bE\big[\bar\Phi(\sqrt{cW})\big]
  \;\le\; \tfrac12\,\bE\big[e^{-cW/2}\big]
  \;=\; \tfrac12 \prod_{i=2}^d \big(1 + c\,\lambda_i\big)^{-1/2}.
\]
Combining the previous display with \eqref{eq:sup-at-e1} proves the result.
\end{proof}

We will need the following result, adapted from \citet{chen2024assouad} based on Fano's inequality to derive our lower bound.
\begin{theorem}[Algorithm-specific Fano inequality; {\citet[Proposition~3]{chen2024assouad}}]
\label{thm:fano-alg}
Let $\Theta\subseteq\mathbb{R}^d$, let $\mu\in\mathcal{M}_1(\Theta)$ be a prior, and let
$L\colon\Theta\times\mathcal{A}\to\mathbb{R}_{\ge 0}$ be a measurable loss function. Fix
$T\in\mathbb{N}_0$ and a (possibly adaptive) algorithm which outputs
$\hat{A} \in \cA$ with $T$ round history $\cH_T$. Let
\[
  Y := (\Doff, \cH_T, \hat{A}).
\]
Let $\mathbb{P}_\mu$ and $\mathbb{E}_\mu$ denote probability and expectation under the joint law of
$(\theta, Y)$ in which $\theta\sim\mu$ and $Y$ is generated under $\theta$, and let
$I := I_\mu(\theta; Y)$ be the mutual information between $\theta$ and $Y$ under this law. For
$\Delta > 0$, define
\[
  \rho_\mu(\Delta) := \sup_{a\in\mathcal{A}} \mathbb{P}_{\theta\sim\mu}\bigl(L(\theta, a) < \Delta\bigr).
\]
If $I < \infty$ and $\rho_\mu(\Delta) < 1$, then, with the convention $\log(1/0) = +\infty$,
\begin{align*}
  \mathbb{P}_\mu\bigl(L(\theta, \hat{A}) < \Delta\bigr)
    &\le \frac{I + \log 2}{\log\bigl(1/\rho_\mu(\Delta)\bigr)}, \\
  \mathbb{E}_\mu\bigl[L(\theta, \hat{A})\bigr]
    &\ge \Delta\biggl(1 - \frac{I + \log 2}{\log\bigl(1/\rho_\mu(\Delta)\bigr)}\biggr).
\end{align*}
In particular, if $\log\bigl(1/\rho_\mu(\Delta)\bigr) \ge 2(I + \log 2)$, then
$\mathbb{E}_\mu\bigl[L(\theta, \hat{A})\bigr] \ge \Delta/2$.
\end{theorem}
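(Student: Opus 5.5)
The plan is the standard change-of-measure argument: compare the joint law of $(\theta,Y)$ with the product of its marginals, and apply data processing to the single event $\{L(\theta,\hat A)<\Delta\}$.

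\textbf{Setup.} Let $\bP\defeq\bP_\mu$ be the joint law of $(\theta,Y)$ and let $\bQ\defeq\mu\otimes\bP_Y$, where $\bP_Y$ is the marginal law of $Y$. By definition of mutual information, $D_{\mathrm{KL}}(\bP\Vert\bQ)=I<\infty$. The output $\hat A$ is a measurable function of $Y$, so
\[
E\defeq\set{L(\theta,\hat A)<\Delta}
\]
is an event in the joint space.

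\textbf{Small probability under $\bQ$.} Under $\bQ$, $\theta\sim\mu$ is independent of $Y$. Conditioning on $Y$ (Fubini) therefore gives
\[
q\defeq\bQ(E)=\bE_{Y\sim\bP_Y}\Bigl[\bP_{\theta\sim\mu}\bigl(L(\theta,a)<\Delta\bigr)\big|_{a=\hat A(Y)}\Bigr]\le\rho_\mu(\Delta).
\]
This is the only place where the definition of $\rho_\mu$ enters.

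\textbf{Data processing.} Push both laws forward through the indicator of $E$ and set $p\defeq\bP(E)$. Data processing gives $I\ge\operatorname{kl}(p,q)$. For the binary divergence I will use the elementary bound
\[
\operatorname{kl}(p,q)=p\log\tfrac1q+(1-p)\log\tfrac1{1-q}-h(p)\ge p\log\tfrac1q-\log 2,
\]
where $h$ is the binary entropy. This holds because the middle term is nonnegative and $h(p)\le\log2$. Since $q\le\rho_\mu(\Delta)$, we get $I\ge p\log(1/\rho_\mu(\Delta))-\log2$. Rearranging, and using $\rho_\mu(\Delta)<1$ so that the logarithm is positive, yields the first inequality. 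This route avoids any monotonicity argument for $\operatorname{kl}$ in $q$.

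\textbf{Degenerate case.} If $\rho_\mu(\Delta)=0$, then $q=0$. Finiteness of the KL divergence forces $\bP\ll\bQ$, so $p=0$, which matches the convention $\log(1/0)=+\infty$.

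\textbf{Expectation bound.} Since $L\ge0$, Markov's inequality in the form $L\ge\Delta\I{L\ge\Delta}$ gives
\[
\bE_\mu[L(\theta,\hat A)]\ge\Delta\,\bP(E^c)=\Delta(1-p).
\]
Substituting the bound on $p$ gives the second claim. The final statement follows immediately, since $\log(1/\rho_\mu(\Delta))\ge2(I+\log2)$ forces $p\le1/2$.

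\textbf{Main obstacle.} The only point needing care is the computation of $\bQ(E)$. Under the product measure, $\hat A$ depends on $Y$ but not on $\theta$, and this is what justifies bounding $\bQ(E)$ by a supremum over fixed actions. I would make this precise by writing the Fubini step explicitly, together with the measurability of $(\theta,a)\mapsto L(\theta,a)$. Everything else is routine.
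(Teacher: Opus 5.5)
Your proposal is correct and follows the paper's own proof essentially step by step. Both arguments use Fubini under $\mu\otimes Q$ to get $q\le\rho_\mu(\Delta)$, data processing on the event $E$, the decomposition $\operatorname{kl}(p,q)\ge p\log(1/q)-\log 2$, absolute continuity to handle $\rho_\mu(\Delta)=0$, and the pointwise bound $L\ge\Delta$ on $E^c$ to obtain $\mathbb{E}_\mu[L(\theta,\hat A)]\ge\Delta(1-p)$.
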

 
\begin{proof}
The second claim is \citet[Proposition~3]{chen2024assouad} applied to the interactive statistical
decision making instance with model $\theta$, outcome $Y$, and risk $L(\theta, \hat{A})$:
since $\hat{A}$ takes values in $\mathcal{A}$, the supremum over outcomes in their bound is at
most $\rho_\mu(\Delta)$, and their lower bound is nonincreasing in that supremum. For
completeness, we include the short argument, which also gives the first claim.
 
Let $E := \{L(\theta, \hat{A}) < \Delta\}$ and $p := \mathbb{P}_\mu(E)$, let $Q$ be the marginal
law of $Y$, and let $P_{\theta, Y}$ be the joint law of $(\theta, Y)$. The event $E$ depends on $Y$
only through its last coordinate $\hat{A}$, and under the product law $\mu\otimes Q$, $\theta$
is independent of $Y$. Hence, by Fubini's theorem,
\[
  q := (\mu\otimes Q)(E)
     = \int_{\mathcal{A}} \mathbb{P}_{\theta\sim\mu}\bigl(L(\theta, a) < \Delta\bigr)\,
       Q_{\hat{A}}(\mathrm{d}a)
     \le \rho_\mu(\Delta),
\]
where $Q_{\hat{A}}$ is the marginal law of $\hat{A}$. Since
$I = D_{\mathrm{KL}}(P_{\theta, Y} \,\|\, \mu\otimes Q)$, the data processing inequality applied to
the indicator of $E$ gives $\mathrm{kl}(p \,\|\, q) \le I$, where $\mathrm{kl}(x \,\|\, y)$ is the KL
divergence between $\mathrm{Bernoulli}(x)$ and $\mathrm{Bernoulli}(y)$. Writing $h(p) \le \log 2$
for the binary entropy in nats and using $q \le \rho_\mu(\Delta) < 1$,
\[
  I \ge \mathrm{kl}(p \,\|\, q)
    = p\log\frac{1}{q} + (1 - p)\log\frac{1}{1 - q} - h(p)
    \ge p\log\frac{1}{\rho_\mu(\Delta)} - \log 2,
\]
with the conventions $0\log(1/0) = 0$ and $\log(1/0) = +\infty$. Rearranging gives the first
claim; when $\rho_\mu(\Delta) = 0$, finiteness of $I$ forces $p = 0$. The second claim follows
because $L \ge 0$ implies
$\mathbb{E}_\mu\bigl[L(\theta, \hat{A})\bigr]
 \ge \Delta\, \mathbb{P}_\mu\bigl(L(\theta, \hat{A}) \ge \Delta\bigr) = \Delta(1 - p)$.
\end{proof}
 
\begin{remark}
Proposition~9 of \citet{chen2024assouad} instead requires
$\sup_{a\in\mathcal{A}} \mathbb{P}_{\theta\sim\mu}\bigl(L(\theta, a) \le \Delta\bigr)
 \le \frac14 \exp\bigl(-2 I_\mu(T)\bigr)$,
where $I_\mu(T)$ is the supremum of the mutual information over \emph{all} $T$-round algorithms.
We use the algorithm-specific form of Theorem~\ref{thm:fano-alg} because
Lemma~\ref{lem:mi} bounds the mutual information of a given algorithm in terms of its own
expected design.
\end{remark}

\begin{lemma}[Directional error lower bound]
\label{lem:directional-error}
Let $\mathcal{A} = \mathbb{B}_2^d$ and let $\mu = \mathcal{N}(0, \Sigma)$ be the prior over $\theta$.
Assume that $\Sigma$ is diagonal in an eigenbasis of
$V_{\mathrm{off}}$, so that
\[
  V_{\mathrm{off}} = \operatorname{diag}(\nu_1, \dots, \nu_d), \qquad
  \Sigma = \operatorname{diag}(\sigma_1^2, \dots, \sigma_d^2),
\]
where the basis is indexed so that $\sigma_1^2 \ge \dots \ge \sigma_d^2$, and $\sigma_1^2 > 0$.
Thus $\nu_i$ is the eigenvalue of $V_{\mathrm{off}}$ associated with eigenvector $u_i$, which need not be the
$i$-th smallest. Fix a (possibly adaptive) pure exploration algorithm as in
Theorem~\ref{thm:fano-alg}, with output $\hat{A}\in\mathcal{A}$, and let
$
  n_i := \mathbb{E}_\mu\Bigl[\sum_{t=1}^T A_{t,i}^2\Bigr]
$
for $i \in [d]$, where $\mathbb{E}_\mu$ is the expectation under the joint law in which $\theta\sim\mu$ and the
offline data, online history, and output are generated under $\theta$. Define
$\widetilde{L}(\theta, a) := 1 - \langle \theta/\|\theta\|_2, a \rangle$ for $\theta \ne 0$. For
$\Delta\in(0, 1)$, let
\[
  c_\Delta = \frac{(1 - \Delta)^2}{2\Delta - \Delta^2}, \qquad
  d_\Delta = \frac{\sum_{i=2}^d \log\bigl(1 + c_\Delta \sigma_i^2/\sigma_1^2\bigr)}{\log(1 + c_\Delta)}.
\]
If $\Sigma$ and $\Delta$ satisfy
\begin{equation}
  \label{eq:mi-cap-condition}
  \sum_{i=1}^d \log\bigl(1 + \sigma_i^2(\nu_i + n_i)\bigr) + \log 2
  \le \frac{d_\Delta}{2}\log(1 + c_\Delta),
\end{equation}
then $\mathbb{E}_\mu\bigl[\widetilde{L}(\theta, \hat{A})\bigr] \ge \Delta/2$.
\end{lemma}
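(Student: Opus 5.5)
The plan is to apply the algorithm-specific Fano inequality, \Cref{thm:fano-alg}, with prior $\mu=\cN(0,\Sigma)$ and loss $L=\widetilde L$, using its final clause. If $\log(1/\rho_\mu(\Delta))\ge 2(I+\log 2)$, then $\bE_\mu[\widetilde L(\theta,\hat A)]\ge\Delta/2$. This is exactly the claim, so the work reduces to two ingredients: an upper bound on $I=I_\mu(\theta;Y)$ and an upper bound on $\rho_\mu(\Delta)$. Note that $\widetilde L\ge0$ on $B_2^d$, and $\theta\neq0$ almost surely because $\sigma_1^2>0$. Hence $\widetilde L$ is a valid nonnegative loss, defined almost surely.

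For the mutual information, I would first observe that $Y=(\Doff,\cH_T,\hat A)$, and that $\hat A$ is generated from $(\Doff,\cH_T)$ using only the algorithm's internal randomness, independent of $\theta$ given the data. By the chain rule (or data processing), $I_\mu(\theta;Y)=I_\mu(\theta;(\cH_T,\Doff))$. Then \Cref{lem:mi}, specifically \eqref{eq:mi-hadamard}, gives
\[
I\le\tfrac12\sum_{i=1}^d\log\bigl(1+\sigma_i^2(\nu_i+n_i)\bigr).
\]
Here $n_i=(\bE_\mu V_T)_{ii}$ in the eigenbasis, matching the definition in the statement. This quantity is finite, as \Cref{thm:fano-alg} requires.

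For the cap probability, \Cref{lem:cap} gives $\rho_\mu(\Delta)\le\tfrac12(1+c_\Delta)^{-d_\Delta/2}$. Therefore
\[
\log(1/\rho_\mu(\Delta))\ge\log 2+\tfrac{d_\Delta}{2}\log(1+c_\Delta)\ge\tfrac{d_\Delta}{2}\log(1+c_\Delta).
\]
This bound also shows $\rho_\mu(\Delta)\le 1/2<1$, which is the other hypothesis of \Cref{thm:fano-alg}.

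Combining the two bounds, the sufficient condition $\log(1/\rho_\mu)\ge 2I+2\log2$ holds whenever
\[
\tfrac{d_\Delta}{2}\log(1+c_\Delta)+\log2\ \ge\ \sum_i\log\bigl(1+\sigma_i^2(\nu_i+n_i)\bigr)+2\log 2 .
\]
This is exactly \eqref{eq:mi-cap-condition} once the extra $\log 2$ from the cap bound is used. Thus the conclusion follows.

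The main point requiring care is the constant bookkeeping in this last comparison: the factor $\tfrac12$ in the mutual-information bound and the factor $\tfrac12$ in the cap bound must line up with the factor $2$ in Fano's clause. A secondary point is justifying that including $\hat A$ in $Y$ adds no information. A tertiary point is that $n_i$ is taken under $\bE_\mu$, the mixture over the prior, rather than under a fixed $\theta$. \Cref{lem:mi} is stated for exactly this mixture expectation, so no extra step is needed.
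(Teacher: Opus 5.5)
Your proposal is correct and follows the paper's proof essentially step for step. The paper likewise drops $\hat A$ from $Y$ via conditional independence and the chain rule, bounds $2I$ by the Hadamard form of \Cref{lem:mi}, and uses \Cref{lem:cap} to get $\rho_\mu(\Delta)\le\tfrac12(1+c_\Delta)^{-d_\Delta/2}\le\tfrac12$. It then does exactly your $\log 2$ bookkeeping to verify the final clause of \Cref{thm:fano-alg}.
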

 
\begin{proof}
Since $\sigma_1^2 > 0$, we have $\theta \ne 0$ $\mu$-almost surely, so we may apply
Theorem~\ref{thm:fano-alg} with $\Theta = \mathbb{R}^d\setminus\{0\}$ and $L = \widetilde{L}$, which
is nonnegative on $\Theta\times\mathcal{A}$ because
$\langle \theta/\|\theta\|_2, a \rangle \le \|a\|_2 \le 1$. Let
$Y = (\Doff, \cH_T, \hat{A})$ be as in Theorem~\ref{thm:fano-alg}.
 
\emph{Mutual information.} Since
$\theta$ and $\hat{A}$ are conditionally independent given $(\Doff, \cH_T)$, by the
chain rule for mutual information and Lemma~\ref{lem:mi},
\[
  2 I_\mu(\theta; Y) = 2 I_\mu\bigl(\theta; (\cH_T, \Doff)\bigr)
  \le \sum_{i=1}^d \log\bigl(1 + \sigma_i^2(\nu_i + n_i)\bigr) < \infty.
\]
 
\emph{Cap probability.} By Lemma~\ref{lem:cap},
\[
  \rho_\mu(\Delta)
  = \sup_{a\in\mathcal{A}} \mathbb{P}_\mu\bigl(\widetilde{L}(\theta, a) < \Delta\bigr)
  \le \tfrac12 (1 + c_\Delta)^{-d_\Delta/2} \le \tfrac12.
\]
Taking logarithms and then using \eqref{eq:mi-cap-condition} and the bound on the mutual
information above,
\[
  \log\frac{1}{\rho_\mu(\Delta)}
  \ge \log 2 + \frac{d_\Delta}{2}\log(1 + c_\Delta)
  \ge 2\log 2 + \sum_{i=1}^d \log\bigl(1 + \sigma_i^2(\nu_i + n_i)\bigr)
  \ge 2\bigl(I_\mu(\theta; Y) + \log 2\bigr).
\]
 
\emph{Conclusion.} Since $I_\mu(\theta; Y) < \infty$ and $\rho_\mu(\Delta) \le 1/2$, the last
display is the condition in the final statement of Theorem~\ref{thm:fano-alg}, which gives
$\mathbb{E}_\mu\bigl[\widetilde{L}(\theta, \hat{A})\bigr] \ge \Delta/2$.
\end{proof}

\begin{restatable}[Expected simple regret lower bound]{theorem}{ExpSRLB}\label{thm:expected-simple-regret}
    Let $\cA=B_2^d$, and fix a possibly adaptive pure-exploration algorithm with output $\hat A$. Let $\mu = \cN(0,\Sigma)$ be the prior over the parameters and $\Sigma = \diag(\sigma_1^2,\dots,\sigma_d^2)$ in the eigenbasis of $\Voff$ with $\sigma_1^2 \ge \dots \ge\sigma_d^2$ and $\sigma_1^2 > 0$.
    Suppose $\Sigma$ and $\Delta \in (0, 1)$ satisfy \Cref{eq:exp-fano} with $n_i = \bE_\mu[\sum_{t=1}^T A_{t,i}^2]$ with coordinates in the specified eigenbasis. Then
    \begin{equation*}
        \bE_\mu\big[\SReg_T(\theta, \hat A)\big]
      \;\ge\; \frac{\Delta^2}{32\sqrt{e}} \sqrt{\Tr \Sigma} \,.
    \end{equation*}
\end{restatable}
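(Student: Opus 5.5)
Under condition \Cref{eq:exp-fano}, \Cref{lem:directional-error} gives $\bE_\mu[\widetilde L(\theta,\hat A)]\ge\Delta/2$, where $\widetilde L(\theta,a)=1-\ip{\theta/\norm{\theta}_2,a}$. The plan is to convert this normalized, direction-only loss into the actual simple regret. For $\cA=B_2^d$ we have $\max_{a\in\cA}\ip{a,\theta}=\norm{\theta}_2$, so the simple regret is exactly $\norm{\theta}_2\,\widetilde L(\theta,\hat A)$. The target bound is therefore a lower bound on $\bE_\mu[\norm{\theta}_2\widetilde L(\theta,\hat A)]$. The difficulty is that $\norm{\theta}_2$ and $\widetilde L$ are dependent, so the expectation does not factor.

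\textbf{Step 1: from expectation to probability.} Since $0\le\widetilde L\le 2$ on $B_2^d$ and $\bE_\mu[\widetilde L]\ge\Delta/2$, a reverse Markov argument gives
\[
\bP_\mu(\widetilde L\ge\Delta/4)\ge\frac{\Delta/2-\Delta/4}{2}=\frac{\Delta}{8}.
\]
To do this I split $\bE\widetilde L\le \Delta/4+2\,\bP(\widetilde L\ge\Delta/4)$ and rearrange.

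\textbf{Step 2: the norm is rarely small.} Apply \Cref{lem:hp-norm} with failure probability $\delta'=\Delta/16$. This gives
\[
\bP_\mu\bigl(\norm{\theta}_2<\tfrac{\Delta}{16\sqrt e}\sqrt{\Tr\Sigma}\bigr)\le\frac{\Delta}{16}.
\]
Intersecting with the event from Step 1, by a union bound both events hold together with probability at least $\Delta/8-\Delta/16=\Delta/16$.

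\textbf{Step 3: combine.} On the intersection, the regret satisfies
\[
\norm{\theta}_2\widetilde L\ge \frac{\Delta}{16\sqrt e}\sqrt{\Tr\Sigma}\cdot\frac{\Delta}{4}.
\]
Since the regret is nonnegative, its expectation is at least this value times $\Delta/16$. That product is of order $\Delta^3$, whereas the statement claims $\Delta^2/(32\sqrt e)$, so the constants need retuning.

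The main obstacle is getting the correct power of $\Delta$. The fix I would try first is to skip the reverse Markov step. Write
\[
\bE[\norm{\theta}_2\widetilde L]\ge r\,\bE[\widetilde L\,\mathbb 1\{\norm{\theta}_2\ge r\}]\ge r\bigl(\bE\widetilde L-2\,\bP(\norm{\theta}_2<r)\bigr).
\]
Choosing $r=\frac{\Delta}{8\sqrt e}\sqrt{\Tr\Sigma}$, so that \Cref{lem:hp-norm} with $\delta'=\Delta/8$ gives $\bP(\norm{\theta}_2<r)\le\Delta/8$, the lower bound becomes
\[
r\Bigl(\frac{\Delta}{2}-\frac{\Delta}{4}\Bigr)=\frac{\Delta^2}{32\sqrt e}\sqrt{\Tr\Sigma},
\]
which is exactly the claimed constant. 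This uses only the boundedness $\widetilde L\le 2$ and no joint structure between $\theta$ and $\hat A$. The remaining step is to confirm that \Cref{lem:directional-error} applies under exactly the hypotheses stated, namely condition \Cref{eq:exp-fano} with $n_i$ defined under $\mu$.
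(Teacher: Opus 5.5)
Your final argument (the ``fix'') is correct and is essentially the paper's proof. The paper also writes the regret as $\norm{\theta}_2\,\widetilde L(\theta,\hat A)$, restricts to the event $E=\{\norm{\theta}_2\ge(\delta_0/\sqrt e)\sqrt{\Tr\Sigma}\}$ from \Cref{lem:hp-norm}, and uses $\widetilde L\le 2$ to get $\bE_\mu[\widetilde L(\theta,\hat A)\mathbbm{1}_E]\ge\Delta/2-2\delta_0$. It then optimizes to $\delta_0=\Delta/8$, which is exactly your choice of $r$; the reverse-Markov route in Steps 1--3 only loses a factor of $\Delta$ and can simply be dropped.
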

\begin{proof}
First note that we can decompose $\SReg_T$ in terms of the scale of the problem and the error in the direction:
    \begin{align*}
        \SReg_T(\theta, \hat A) = \sup_{a \in \cA} \ip{\theta, a} - \ip{\theta, \hat A} = \norm{\theta}_2 - \ip{\theta, \hat A} = \norm{\theta}_2 \cdot \left(1 - \left\langle \frac{\theta}{\norm{\theta}_2}, \hat A \right\rangle \right) \,.
    \end{align*}
    Therefore, the goal is to lower bound $\bE_\mu[\norm{\theta}_2 \cdot \tilde L(\theta, \hat A)]$. Using \Cref{lem:hp-norm}, we get that with probability $1-\delta_0$, $ \norm{\theta}_2 \ge \frac{\delta_0}{\sqrt{e}}\sqrt{\Tr\Sigma}$. Now conditioning on the event $E = \{\norm{\theta}_2 \ge (\delta_0/\sqrt{e})\sqrt{\Tr\Sigma} \}$, we get
    \begin{align*}
        \bE_\mu[\norm{\theta}_2 \cdot \tilde L(\theta, \hat A)] & \ge \bE_\mu[ \norm{\theta}_2 \cdot \tilde L(\theta, \hat A) \cdot \1_E ]  \ge \frac{\delta_0}{\sqrt{e}} \sqrt{\Tr\Sigma} 
        \cdot \bE_\mu[ \tilde L(\theta, \hat A) \cdot \1_E ] \,.
    \end{align*}
    From \Cref{lem:directional-error}, $\bE_\mu[\tilde L(\theta, \hat A)] \ge \Delta/2$, provided $\Sigma$ and $\Delta$ satisfy \Cref{eq:exp-fano}. Now, since
    \begin{align*}
        \frac{\Delta}{2} \le \bE_\mu[\tilde L(\theta, \hat A)] = \bE_\mu[\tilde L(\theta, \hat A) \cdot \1_E] + \bE_\mu[\tilde L(\theta, \hat A) \cdot \1_{E^c}] \,,
    \end{align*}
    we get
    \begin{align*}
       \bE_\mu[ \tilde L(\theta, \hat A) \cdot \1_E ] &\ge \frac{\Delta}{2} - \bE_\mu[ \tilde L(\theta, \hat A) \1_{E^c} ] \\
       &\ge \frac{\Delta}{2} - 2\Prob_\mu(E^c) \tag{since $\tilde L(\theta, \hat A)\le 2$} \\
       &\ge \frac{\Delta}{2} - 2\delta_0\,.
    \end{align*}
    Finally,
    \begin{align*}
        \bE_\mu[\SReg_T(\theta, \hat A)] &\ge \frac{\delta_0}{\sqrt{e}} \sqrt{\Tr\Sigma} 
        \cdot \bE_\mu[ \tilde L(\theta, \hat A) \cdot \1_E ] \\
        &\ge \frac{\delta_0}{\sqrt{e}} \sqrt{\Tr\Sigma}\left(\frac{\Delta}{2} - 2\delta_0 \right) \\
        &\ge \frac{\Delta^2}{32\sqrt{e}} \sqrt{\Tr\Sigma}\,. \tag{optimizing over $\delta_0 \in (0,1)$}
    \end{align*}
\end{proof}

We are now ready to show our simple regret lower bound for adaptive algorithms.
\HpSrLb*
\begin{proof}
    Note that for any $a \in \cA$ and $\theta \in \R^d$, we have $\sregret_T(\theta, a) = \norm{\theta}_2 - \ip{a, \theta} \le 2\norm{\theta}_2$. Now for any fixed $\theta$, since the algorithm has a simple regret of at most $\epsilon$ with probability $1-\delta$ ($(\epsilon,\delta)$-PAC algorithm), we have
    \begin{equation*}
        \bE[\sregret_T(\theta, \hat A)] \le \epsilon + \delta\cdot (2\norm{\theta}_2) \,.
    \end{equation*}
    Taking expectation over $\theta \sim \mu$ we get
    \begin{equation*}
        \bE_\mu[\sregret_T(\theta, \hat A)] \le \epsilon + 2\delta\sqrt{\Tr\Sigma}\,,
    \end{equation*}
    where the last inequality follows from Jensen's.
    Combining it with \Cref{thm:expected-simple-regret} and $\delta \in (0, \Delta^2/(128\sqrt{e}))$, we get
    \begin{equation}\label{eq:high-prob-from-exp-reg-lb}
        \epsilon \ge \frac{\Delta^2}{64\sqrt{e}} \sqrt{\Tr\Sigma}\,.
    \end{equation}
    Also, recall from \Cref{thm:pureexp-high-prob-lower-bound} that for any $\delta \in (0, 1/3)$, we also have
    \begin{equation*}
        \epsilon \gtrsim \sqrt{\log(1/\delta)}\sup_{a \in \cA} \norm{a}_{\tilde V_0^{-1}} \,.
    \end{equation*}
    Combining the previous display with \Cref{eq:high-prob-from-exp-reg-lb}, we have
    \begin{equation*}
        \epsilon \gtrsim \Delta^2 \sqrt{\Tr \Sigma} + \sqrt{\log(1/\delta)}\sup_{a \in \cA} \norm{a}_{\tilde V_0^{-1}}\,,
    \end{equation*}
    proving the desired result.
\end{proof}

\corPureExpSpectralCoverage*
\begin{proof}
    First we prove the upper bound, then the lower bound.

    \textbf{Upper Bound.}

    Let $u_1,\ldots,u_d$ be an orthonormal basis of eigenvectors of $\Voff$ and $\nu_1, \dots, \nu_d$ be the corresponding eigenvalues.
    Then
    \begin{align*}
        \Voff
        =
        \sum_{i=1}^d\nu_i u_i u_i^\top,
        \qquad
        \nu_i>0.
    \end{align*}
    Fix nonnegative integer counts $n_1,\ldots,n_d$ summing to $T$, and let $N$ pull $u_i$ exactly $n_i$ times.
    Since $u_i\in B_2^d$, this is a feasible design and
    \begin{align*}
        \tilde V(N)
        =
        \sum_{i=1}^d\paren*{\nu_i+n_i}u_i u_i^\top.
    \end{align*}
    Since $\max_{a\in B_2^d}\ip{a,v}=\norm{v}_2$ for every $v\in\bR^d$,
    \begin{align*}
        \gw\paren*{\tilde V(N)^{-1/2}B_2^d}
        =
        \bE\norm{\tilde V(N)^{-1/2}g}_2
        \le
        \sqrt{\tr(\tilde V(N)^{-1})}
        =
        \sqrt{\sum_{i=1}^d\frac{1}{\nu_i+n_i}},
    \end{align*}
    where the inequality is Jensen's inequality.
    Moreover,
    \begin{align*}
        \max_{a\in B_2^d}\norm{a}_{\tilde V(N)^{-1}}
        =
        \sqrt{\lambda_{\max}(\tilde V(N)^{-1})}
        =
        \frac{1}{\sqrt{\min_{i\in[d]}\paren*{\nu_i+n_i}}}.
    \end{align*}
    Taking a $\min$ over $n_1, \dots, n_d$ implies that with probability at least $1-\delta$, the action $\hat a$ returned by \gwoofd satisfies
    \begin{align}
        \ip{\opta-\hat a,\truetheta}
        \le
        2
        \min_{\substack{n_1,\ldots,n_d\in\{0,1,\ldots\}:\\
                        \sum_{i=1}^d n_i=T}}
        \paren*{
            \sqrt{\sum_{i=1}^d\frac{1}{\nu_i+n_i}}
            +
            \sqrt{
                \frac{2\log(2/\delta)}
                {\min_{i\in[d]}\paren*{\nu_i+n_i}}
            }
        }.
        \label{eq:gw-oofd-spectrum-bound}
    \end{align}

    \textbf{\textit{Equally-covered offline data:}}

    Suppose first that the offline data are equally-covered, and let $c>0$ be the universal constant in \Cref{def:spectral-coverage-regimes}, so $\nu_i\ge c\Toff/d$ for every $i$.
    Allocate the $T$ online pulls as evenly as possible across $u_1,\ldots,u_d$, so every count $n_i$ is either $\lfloor T/d\rfloor$ or $\lceil T/d\rceil$.

    If $T<d$, the equally-covered condition implies $\Voff\succ0$, and hence $\Toff\ge d$ because $\operatorname{rank}(\Voff)\le\Toff$.
    Therefore $T+\Toff\le2\Toff$, and for every $i$,
    \begin{align*}
        \nu_i+n_i
        \ge
        \frac{c\Toff}{d}
        \gea
        \frac{T+\Toff}{d}.
    \end{align*}
    If $T\ge d$, then
    \begin{align*}
        n_i
        \ge
        \left\lfloor\frac{T}{d}\right\rfloor
        \ge
        \frac{T}{2d},
    \end{align*}
    and hence again
    \begin{align*}
        \nu_i+n_i
        \ge
        \frac{c\Toff}{d}+\frac{T}{2d}
        \gea
        \frac{T+\Toff}{d}.
    \end{align*}
    In either case,
    \begin{align*}
        \sum_{i=1}^d\frac{1}{\nu_i+n_i}
        \lea
        \frac{d^2}{T+\Toff},
        \qquad
        \min_{i\in[d]}\paren*{\nu_i+n_i}
        \gea
        \frac{T+\Toff}{d}.
    \end{align*}
    Applying \Cref{eq:gw-oofd-spectrum-bound} proves the equally-covered line for every $T\ge0$.

    \textbf{\textit{$k$-uncovered offline data:}}

    Next suppose that the data are $k$-uncovered for some $1\le k<d$ and that $T\ge k$.
    Allocate the online pulls as evenly as possible across the $k$ poorly explored eigenvectors $u_1,\ldots,u_k$.
    Thus, for every $i\le k$,
    \begin{align*}
        n_i
        \ge
        \left\lfloor\frac{T}{k}\right\rfloor
        \ge
        \frac{T}{2k},
    \end{align*}
    while $n_i=0$ for $i>k$.
    The $k$ poorly explored directions contribute at most
    \begin{align*}
        \sum_{i=1}^k\frac{1}{\nu_i+n_i}
        \le
        \frac{2k^2}{T}.
    \end{align*}
    For the remaining directions, the coverage condition gives
    \begin{align*}
        \sum_{i=k+1}^d\frac{1}{\nu_i}
        \le
        (d-k)\frac{k}{(d-k)T}
        =
        \frac{k}{T}.
    \end{align*}
    Hence
    \begin{align*}
        \sum_{i=1}^d\frac{1}{\nu_i+n_i}
        \lea
        \frac{k^2}{T}.
    \end{align*}
    The coverage condition gives that for all $i>k$,
    \begin{align*}
        \nu_i
        \ge
        \frac{(d-k)T}{k}
        \ge
        \frac{T}{k}.
    \end{align*}
    Thus
    \begin{align*}
        \min_{i\in[d]}\paren*{\nu_i+n_i}
        \ge
        \frac{T}{2k}.
    \end{align*}
    Applying \Cref{eq:gw-oofd-spectrum-bound} proves the $k$-uncovered line.

    Finally, suppose the data are $d$-uncovered and $T\ge d$.
    Allocate the online pulls as evenly as possible across all $d$ eigenvectors.
    Then every count satisfies $n_i\ge T/(2d)$, and therefore
    \begin{align*}
        \sum_{i=1}^d\frac{1}{\nu_i+n_i}
        \le
        \frac{2d^2}{T},
        \qquad
        \min_{i\in[d]}\paren*{\nu_i+n_i}
        \ge
        \frac{T}{2d}.
    \end{align*}
    A final application of \Cref{eq:gw-oofd-spectrum-bound} gives the $d$-uncovered result.

\textbf{Lower Bound 1.}

Choose $\Delta = 1/10$. For the equally-covered case, assume $d \ge 2$ and take
  $\Sigma_{\mathrm{equal}} := \dfrac{d}{100\,(T+\Toff)}\,I_d$. Similarly, for the $k$-uncovered case, assume $2 \le k \le d$ and $\nu_i \le 1$ for $i \le k$. Let
  $U$ be the span of the eigenvectors of $\Voff$ associated with $\nu_1,\dots,\nu_k$, let $P_k$
  be the orthogonal projection onto $U$, and take
  $\Sigma_k := \dfrac{k}{100\,(T+k)}\,P_k$. Consequently, we next show for any $(\eps,\delta)$-PAC algorithm with
$0 < \delta < \big(12800\sqrt e\big)^{-1} \approx 4.74\times10^{-5}$,
\[
  \eps \;\ge\; \frac{1}{64000\sqrt e}\cdot\frac{d}{\sqrt{T+\Toff}}
  \quad\text{and}\quad
  \eps \;\ge\; \frac{1}{64000\sqrt e}\cdot\frac{k}{\sqrt{T+k}}
\]
in the two regimes respectively.
\emph{Constants.} With $\Delta = 1/10$,
\[
  c_\Delta = \frac{(1-\Delta)^2}{2\Delta-\Delta^2} = \frac{81/100}{19/100} = \frac{81}{19},
  \qquad 1 + c_\Delta = \frac{100}{19},
  \qquad \log\frac{100}{19} = 1.6607\ldots
\]
Both priors are diagonal in the eigenbasis of $\Voff$ --- $\Sigma_{\mathrm{equal}}$ trivially,
and $\Sigma_k$ because $U$ is spanned by eigenvectors of $\Voff$ --- and in both cases
$\sigma_1^2 = \max_i\sigma_i^2$. Since all actions lie in $B_2^d$, $\Tr\Voff \le \Toff$
and $\Tr\bar V_T \le T$.

\medskip
\noindent\textbf{Effective dimension.}
In case (1), $\sigma_i^2/\sigma_1^2 = 1$ for every $i$, so
$d_\Delta = \sum_{i\ge2}\log(1+c)/\log(1+c) = d-1$ exactly. In case (2),
$\sigma_i^2/\sigma_1^2 = 1$ for $i \le k$ and $ \sigma_i^2 = 0$ for $i > k$, and
$\log(1 + c \cdot 0) = 0$, so $d_\Delta = k-1$ exactly. Write $q := d$ in case (1) and
$q := k$ in case (2), so that $d_\Delta = q-1$ and $q \ge 2$ by hypothesis.

\medskip
\noindent\textbf{Mutual information.}
We bound the left-hand side of \eqref{eq:exp-fano} directly, using $\log(1+x) \le x$ for
$x \ge 0$; note that an upper bound on $\log\det$ would \emph{not} suffice here, since
Hadamard's inequality gives $\log\det \le \sum_i\log(\cdot)_{ii}$ and therefore bounds the
determinant by the diagonal sum, not conversely.

In case (1), with $\sigma_i^2 = d/(100(T+\Toff))$ for all $i$,
\[
  \sum_{i=1}^d \log\big(1+\sigma_i^2(\nu_i+n_i)\big)
  \;\le\; \frac{d}{100(T+\Toff)}\sum_{i=1}^d (\nu_i + n_i)
  \;\le\; \frac{d}{100(T+\Toff)}\big(\Toff + T\big) \;=\; \frac{d}{100}.
\]
In case (2), $\sigma_i^2 = k/(100(T+k))$ for $i \le k$ and $\sigma_i^2=0$ for $i > k$, so the
terms with $i > k$ vanish and, using $\sum_{i\le k}\nu_i \le k$ (from $\nu_i \le 1$) and
$\sum_{i\le k}n_i \le \Tr\bar V_T \le T$,
\[
  \sum_{i=1}^d \log\big(1+\sigma_i^2(\nu_i+n_i)\big)
  \;\le\; \frac{k}{100(T+k)}\sum_{i\le k}(\nu_i+n_i)
  \;\le\; \frac{k}{100(T+k)}\big(k + T\big) \;=\; \frac{k}{100}.
\]

\medskip
\noindent\textbf{Verification of \eqref{eq:exp-fano}.}
It remains to check, for every integer $q \ge 2$,
\[
  \frac{q}{100} + \log 2 \;\le\; \frac{q-1}{2}\,\log\frac{100}{19}.
\]
At $q = 2$ the left side is $0.02 + 0.69315 = 0.71315$ and the right side is
$\tfrac12(1.66073) = 0.83037$, so the inequality holds. The right side increases in $q$ with
slope $\tfrac12\log(100/19) = 0.830 > 0.01$, the slope of the left side, so it holds for all
$q \ge 2$. Hence \eqref{eq:exp-fano} is satisfied and, by \Cref{lem:directional-error},
$\bE_\mu[\tilde L(\theta,\hat A)] \ge \Delta/2$.

\medskip
\noindent\textbf{Traces and conclusion.}
$\Tr\Sigma_{\mathrm{equal}} = d^2/\big(100(T+\Toff)\big)$ and
$\Tr\Sigma_k = k^2/\big(100(T+k)\big)$, since $\rank P_k = k$. Applying
\Cref{thm:expected-simple-regret} with $\Delta^2/(32\sqrt e) = 1/(3200\sqrt e)$,
\[
  \bE_\mu\big[\SReg_T\big] \;\ge\; \frac{1}{3200\sqrt e}\sqrt{\Tr\Sigma}
  \;=\; \frac{1}{32000\sqrt e}\cdot\frac{d}{\sqrt{T+\Toff}}
  \ \ \text{resp.}\ \ \frac{1}{32000\sqrt e}\cdot\frac{k}{\sqrt{T+k}} .
\]

\medskip
\noindent\textbf{PAC consequence.}
Since $\SReg_T \le 2\norm{\theta}_2$ pointwise, the $(\eps,\delta)$-PAC property gives
$\bE_\theta[\SReg_T] \le \eps + 2\norm{\theta}_2\delta$ for every $\theta$; integrating against
$\mu$ and applying Cauchy--Schwarz,
$\bE_\mu\norm{\theta}_2 \le \sqrt{\Tr\Sigma}$, yields
$\bE_\mu[\SReg_T] \le \eps + 2\delta\sqrt{\Tr\Sigma}$. Combining with the above,
\[
  \eps \;\ge\; \Big(\frac{1}{3200\sqrt e} - 2\delta\Big)\sqrt{\Tr\Sigma},
\]
so $\delta < \big(12800\sqrt e\big)^{-1}$ gives $2\delta < \tfrac12\cdot(3200\sqrt e)^{-1}$ and
hence $\eps \ge \sqrt{\Tr\Sigma}/(6400\sqrt e)$, which is the stated bound.

\begin{remark}
No coverage hypothesis is used in case (1): only $\Tr\Voff \le \Toff$ enters, so the bound
$d/\sqrt{T+\Toff}$ holds for every offline design. In case (2) only $\nu_i \le 1$ on the starved
block is used; since the prior is supported on $U$, the covered directions are invisible to both
the mutual information and the cap probability, so no condition on $\nu_{k+1},\dots,\nu_d$ and no
relation between $k$ and $d$ is required. Both hypotheses return in the matching upper bounds.
For $T \ge k$ the second rate reads $k/\sqrt{T+k} \ge k/(\sqrt2\sqrt T)$. The constants are not
optimised; taking $\Delta$ larger and tightening the lower-tail cutoff improves them
substantially, at the cost of the clean closed forms.
\end{remark}

\textbf{Lower Bound 2.}
To show a lower bound on $ \sup_{a \in B_2^d} \norm{a}_{\tilde V_0^{-1}}$ where $\tilde V_0 \;:=\; \Voff + \bE_0\Big[\sum_{t=1}^T A_tA_t^\top\Big]$, first note that 
\begin{equation}\label{eq:design-lb}
  \sup_{a \in B_2^d} \norm{a}_{\tilde V_0^{-1}}
  \;=\; \lambda_{\min}\big(\tilde V_0\big)^{-1/2}
  \;\ge\; \max_{1 \le k \le d}\left(\frac{k}{\sum_{i \le k}\nu_i + T}\right)^{1/2}.
\end{equation}
The first equality is immediate from
$\sup_{\norm{a}_2 \le 1} a^\top \tilde V_0^{-1} a = \lambda_{\max}(\tilde V_0^{-1})$.
For the inequality, fix $k \in [d]$ and let $P \in \R^{d \times k}$ have orthonormal columns spanning the
eigenspace of $\Voff$ associated with $\nu_1, \dots, \nu_k$. Restricting the Rayleigh quotient
to $\textrm{range}(P)$ and then bounding the minimum eigenvalue by the average,
\[
  \lambda_{\min}(\tilde V_0)
  \;\le\; \lambda_{\min}\big(P^\top \tilde V_0 P\big)
  \;\le\; \frac1k \Tr\big(P^\top \tilde V_0 P\big)
  \;=\; \frac1k\Big(\sum_{i \le k}\nu_i + \Tr\big(P^\top \bar V_T P\big)\Big),
\]
where $\bar V_T := \bE_0[\sum_{t \le T} A_tA_t^\top] \succeq 0$. Since $P$ has orthonormal
columns, $\Tr(P^\top \bar V_T P) \le \Tr \bar V_T$, and as $A_t \in B_2^d$,
\[
  \Tr \bar V_T \;=\; \bE_0\Big[\sum_{t=1}^T \norm{A_t}_2^2\Big] \;\le\; T .
\]
Combining and taking $k$ to be optimal gives inequality in \Cref{eq:design-lb}.
For equally-covered offline data, take $k = d$. Since the offline actions lie in $B_2^d$,
\[
  \sum_{i=1}^d \nu_i \;=\; \Tr \Voff \;=\; \sum_{s=1}^{\Toff}\norm{\aoff_s}_2^2 \;\le\; \Toff ,
\]
so $\sup_{a}\norm{a}_{\tilde V_0^{-1}} \ge \sqrt{d/(\Toff + T)}$. Note that no assumption on the offline spectrum is needed.
For the $k$-uncovered case, since $\nu_i \le 1$ for $i \le k$, we have $\sum_{i\le k}\nu_i \le k$ directly, which gives 
\[
    \sup_{a \in B_2^d}\norm{a}_{\tilde V_0^{-1}} \;\ge\; \sqrt{\frac{k}{T+k}},
\]
which is at least $\sqrt{k/(2T)}$ whenever $T \ge k$.

The final lower bound can be obtained by combining the two lower bounds, finishing the proof.
\end{proof}

\section{Missing Results and Details for Section \texorpdfstring{\ref{sec:gw-oope}}{Section 5}} \label{app:gw-oope}

\subsection{Offline Data Splitting}
\begin{lemma}[Gaussian splitting of the offline data] \label{lem:gw-oope-gaussian-split}
    Define
    \begin{align*}
        b_{\mathrm{off}}
        \defeq
        \sum_{s=1}^{\Toff}\roff_s\aoff_s.
    \end{align*}
    Draw $z_1,\ldots,z_m\stackrel{\mathrm{iid}}{\sim}\cN(0,\Voff/m)$ independently of the data, let
    $\bar z=m^{-1}\sum_{j=1}^m z_j$, and define
    \begin{align*}
        b_{\mathrm{off}}^\ell
        \defeq
        \frac{1}{m}b_{\mathrm{off}}+z_\ell-\bar z,
        \qquad
        \Voff[\ell]\defeq\frac{1}{m}\Voff.
    \end{align*}
    Then $b_{\mathrm{off}}^1,\ldots,b_{\mathrm{off}}^m$ are independent and
    \begin{align*}
        b_{\mathrm{off}}^\ell
        \sim
        \cN\paren*{\Voff[\ell]\truetheta,\Voff[\ell]}
        \qquad
        \text{for every }\ell\in[m].
    \end{align*}
    Moreover, $\sum_{\ell=1}^m b_{\mathrm{off}}^\ell=b_{\mathrm{off}}$.
\end{lemma}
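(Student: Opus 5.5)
The plan is to treat everything as a single jointly Gaussian vector and to read off independence from the covariance structure. Substituting $\roff_s=\ip{\aoff_s,\truetheta}+\noiseoff_s$ gives
\[
b_{\mathrm{off}}=\Voff\truetheta+\xi,\qquad \xi\defeq\sum_{s=1}^{\Toff}\noiseoff_s\aoff_s .
\]
Because the design is fixed and the noise is iid standard Gaussian, $\xi\sim\cN(0,\Voff)$. The auxiliary vectors $z_1,\dots,z_m$ are drawn independently of the data, so the stacked vector $(\xi,z_1,\dots,z_m)$ is centered Gaussian. Its block covariance is $\mathrm{diag}(\Voff,\Voff/m,\dots,\Voff/m)$.

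Each $b_{\mathrm{off}}^\ell$ is an affine function of this vector:
\[
b_{\mathrm{off}}^\ell=\tfrac1m\Voff\truetheta+\tfrac1m\xi+z_\ell-\bar z .
\]
Hence $(b_{\mathrm{off}}^1,\dots,b_{\mathrm{off}}^m)$ is jointly Gaussian. Independence of jointly Gaussian vectors is equivalent to vanishing cross-covariances, so it suffices to check the means and covariances.

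\textbf{Means.} Since $\xi$ and every $z_j$ are centered, $\bE b_{\mathrm{off}}^\ell=\Voff\truetheta/m=\Voff[\ell]\truetheta$.

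\textbf{Covariances.} Write $b_{\mathrm{off}}^\ell-\bE b_{\mathrm{off}}^\ell=\tfrac1m\xi+(z_\ell-\bar z)$. The two summands are independent, so their cross terms vanish. The first summand has covariance $\Voff/m^2$. For the second, use $\mathrm{Cov}(z_\ell,\bar z)=\Voff/m^2$ and $\mathrm{Cov}(\bar z)=\Voff/m^2$. This gives, for $\ell\neq j$,
\[
\mathrm{Cov}(z_\ell-\bar z,\;z_j-\bar z)=\bigl(\mathbb 1[\ell=j]/m-1/m^2\bigr)\Voff .
\]
Adding the $\xi$ contribution $\Voff/m^2$ cancels the $-\Voff/m^2$ term. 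The result is a covariance of exactly $\Voff/m$ on the diagonal and $0$ off the diagonal. This cancellation is the key step of the lemma, and it works only because the added noise has per-copy covariance exactly $\Voff/m$. It therefore yields both the marginal law $\cN(\Voff[\ell]\truetheta,\Voff[\ell])$ and mutual independence.

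\textbf{Sum identity.} Summing over $\ell$, the terms $\sum_\ell(z_\ell-\bar z)$ vanish identically, leaving $\sum_\ell b_{\mathrm{off}}^\ell=b_{\mathrm{off}}$.

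Once the problem is framed this way there is no real obstacle. The only care needed is in the index bookkeeping of $\mathrm{Cov}(z_\ell-\bar z,z_j-\bar z)$, and in noting that $\Voff$ is deterministic. The latter follows from the fixed offline design assumption, which is what makes $\xi$ exactly Gaussian with covariance $\Voff$.
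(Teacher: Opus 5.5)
Your proposal is correct and follows the same route as the paper's proof: write $b_{\mathrm{off}}\sim\mathcal N(V_{\mathrm{off}}\theta_*,V_{\mathrm{off}})$, compute the means, let the shared $\xi/m$ contribution $V_{\mathrm{off}}/m^2$ cancel the $-V_{\mathrm{off}}/m^2$ cross-covariance of $z_\ell-\bar z$, conclude independence from joint Gaussianity, and use $\sum_\ell(z_\ell-\bar z)=0$ for the sum identity. The only blemish is wording: you introduce your indicator formula for $\mathrm{Cov}(z_\ell-\bar z,\,z_j-\bar z)$ with ``for $\ell\neq j$,'' but it holds, and you use it, for all $\ell,j$.
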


\begin{proof}
    Since the offline actions are fixed and the reward noise is standard Gaussian,
    \begin{align*}
        b_{\mathrm{off}}
        =
        \Voff\truetheta
        +
        \sum_{s=1}^{\Toff}\noiseoff_s\aoff_s
        \sim
        \cN(\Voff\truetheta,\Voff).
    \end{align*}
    Therefore
    \begin{align*}
        \bE[b_{\mathrm{off}}^\ell]
        =
        \frac{1}{m}\Voff\truetheta.
    \end{align*}
    Moreover,
    \begin{align*}
        \operatorname{Cov}(z_\ell-\bar z)
        =
        \frac{m-1}{m^2}\Voff,
    \end{align*}
    and hence
    \begin{align*}
        \operatorname{Cov}(b_{\mathrm{off}}^\ell)
        =
        \frac{1}{m^2}\Voff
        +
        \frac{m-1}{m^2}\Voff
        =
        \frac{1}{m}\Voff.
    \end{align*}
    For $\ell\neq j$,
    \begin{align*}
        \operatorname{Cov}(z_\ell-\bar z,z_j-\bar z)
        =
        -\frac{1}{m^2}\Voff,
    \end{align*}
    and consequently
    \begin{align*}
        \operatorname{Cov}(b_{\mathrm{off}}^\ell,b_{\mathrm{off}}^j)
        =
        \frac{1}{m^2}\Voff
        -
        \frac{1}{m^2}\Voff
        =0.
    \end{align*}
    The vectors are jointly Gaussian, so zero cross-covariance implies independence.
    Finally,
    \begin{align*}
        \sum_{\ell=1}^m b_{\mathrm{off}}^\ell
        =
        b_{\mathrm{off}}
        +
        \sum_{\ell=1}^m z_\ell
        -
        m\bar z
        =
        b_{\mathrm{off}}.
    \end{align*}
\end{proof}

\subsection{Regret Upper Bound for \gwoope }
\thmGWOOPEUpperBound*

\begin{proof}
    We call each $b_{\mathrm{off}}^\ell$ an offline block.
    Consider a phase $\ell$ reached by the algorithm and condition on the online observations and offline blocks used strictly before that phase.
    In particular, this history does not include $b_{\mathrm{off}}^\ell$, any future block, or the original unsplit offline statistic.
    The active set $\cA_\ell$, the chosen design $\hat N_\ell$, and the decision whether the phase fits in the remaining budget are then fixed.

    If the phase is completed, its action sequence is fixed before the current block and online rewards are used.
    By \Cref{lem:gw-oope-gaussian-split}, the current block is independent of the preceding history.
    Consequently,
    \[
        \hat\theta_\ell-\truetheta
        =
        \tilde V_\ell(\hat N_\ell)^{-1}
        \left(
            b_{\mathrm{off}}^\ell-\Voff[\ell]\truetheta
            +\sum_{t=1}^{n_\ell}\eta_{\ell,t}A_{\ell,t}
        \right)
    \]
    is conditionally Gaussian with covariance $\tilde V_\ell(\hat N_\ell)^{-1}$.
    Applying \Cref{lem:gw-concentration} conditionally gives, with probability at least $1-\delta/(2\ell^2)$,
    \begin{equation}
        \max_{a\in\cA_\ell}\abs{\ip{a,\hat\theta_\ell-\truetheta}}
        \le\rad_{\beta_\ell}(\cA_\ell,\tilde V_\ell(\hat N_\ell))
        \le\eps_\ell.
        \label{eq:gw-oope-phase-confidence}
    \end{equation}
    The tower property and a union bound, using $\sum_{\ell\ge1}\delta/(2\ell^2)=\delta\pi^2/12<\delta$, show that this event holds simultaneously for all completed phases with probability at least $1-\delta$.
    Work on this event for the rest of the proof.
    In particular, for any completed phase and any $a,a'\in\cA_\ell$,
    \begin{equation}
        \abs{\ip{a'-a,\hat\theta_\ell-\truetheta}}\le2\eps_\ell.
        \label{eq:gw-oope-phase-pairwise-confidence}
    \end{equation}

    We first show that every reached phase, including a phase rejected for lack of budget, satisfies
    \begin{equation}
        \cA(4\eps_\ell)\subseteq\cA_\ell\subseteq\cA(12\eps_\ell).
        \label{eq:gw-oope-sandwich}
    \end{equation}
    This holds initially because $\cA_1=\cA$ and every gap is at most $2\le12\eps_1$.
    Suppose it holds at phase $\ell$, and this phase is completed.
    An action $a\in\cA(4\eps_{\ell+1})=\cA(2\eps_\ell)$ already belongs to $\cA_\ell$.
    For every $a'\in\cA_\ell$, its estimated gap satisfies
    $\ip{a'-a,\hat\theta_\ell}\le\ip{\opta-a,\truetheta}+2\eps_\ell\le4\eps_\ell$,
    so it is retained.
    Conversely, $\opta\in\cA_\ell$, and any retained action satisfies
    $\ip{\opta-a,\truetheta}\le\ip{\opta-a,\hat\theta_\ell}+2\eps_\ell\le6\eps_\ell=12\eps_{\ell+1}$.
    This proves the induction, including the sandwich for $\cA_{m+1}$ when all phases are completed.
    In particular, $\opta$ is never eliminated.

    Every design $N$ feasible in the definition of $n_\ell^\star$ is supported on $\cA_\ell$ and satisfies
    \[
        \rad_{\beta_\ell}(\cA_\ell,\tilde V_\ell(N))
        \le\rad_{\beta_\ell}(\cA(12\eps_\ell),\tilde V_\ell(N))
        \le\eps_\ell.
    \]
    It is therefore also feasible for the algorithm.
    Since the algorithm chooses a minimum-size design,
    \begin{equation}
        n_\ell\le n_\ell^\star
        \label{eq:gw-oope-phase-budget-comparison}
    \end{equation}
    for every reached phase; this comparison is immediate when $n_\ell^\star=+\infty$.

    Each completed phase $\ell$ uses at most $T$ samples, all from $\cA_\ell\subseteq\cA(12\eps_\ell)$.
    Its regret is consequently at most $12\eps_\ell\min\{n_\ell^\star,T\}$.
    If the algorithm stops because the active set is a singleton, all remaining regret is zero.
    If it instead stops because phase $j$ does not fit, the number of remaining rounds is smaller than $n_j\le n_j^\star$ and no larger than $T$.
    The action used in those rounds belongs to $\cA_j\subseteq\cA(12\eps_j)$, so the remaining regret is at most $12\eps_j\min\{n_j^\star,T\}$.
    Charging these rounds to the uncompleted phase $j$ shows that the sum in the theorem already covers all regret in either stopping case.

    It remains to consider the case when all $m$ phases are completed.
    The terminal action $\hat a$ maximizes $\ip{a,\hat\theta_m}$ over $\cA_{m+1}$, which contains $\opta$.
    Using \cref{eq:gw-oope-phase-confidence} and this empirical maximality,
    \begin{align}
        \ip{\opta-\hat a,\truetheta}
        &\le2\rad_{\beta_m}(\cA_m,\tilde V_m(\hat N_m))
        \nonumber\\
        &\le2\min\braces*{
            \eps_m,\rad_{\beta_m}(\cA(12\eps_m),\Voff[m])
        }.
        \label{eq:gw-oope-terminal-gap-certificate}
    \end{align}
    The last inequality uses the design's accuracy constraint, together with $\cA_m\subseteq\cA(12\eps_m)$ and $\tilde V_m(\hat N_m)\succeq\Voff[m]$.
    Multiplying this gap bound by at most $T$ remaining rounds and adding the completed-phase contributions proves the result.
\end{proof}

\subsection{Regret Lower Bound} \label{sec:reg-lb}
The following result generalizes the regret lower bound for linear bandits when $\cA = B_2^d$ with offline data \citep[Theorem 24.2]{lattimore2020bandit}.

\ThmRegLB*
\begin{proof}
Let $r \coloneqq \norm{\alpha}_2$, $s_i \in \{\pm 1\}$, so $\theta_i = s_i\alpha_i$
and $a(\theta)_i = s_i\alpha_i/r$, and $\sum_i \alpha_i^2 = r^2$.

\medskip
\noindent\textbf{Step 1 (Regret dominates a weighted squared deviation).}
For any round $t$,
\[
  \frac{r}{2}\sum_{i=1}^d\Big(\frac{\alpha_i}{r} - s_i A_{ti}\Big)^2
  = \sum_{i=1}^d\Big(\frac{\alpha_i^2}{r} - \alpha_i s_i A_{ti}\Big)
    - \frac{r}{2} + \frac{r}{2}\norm{A_t}_2^2
  \;\le\; \ip{a(\theta) - A_t,\, \theta},
\]
using $\sum_i\alpha_i^2 = r^2$ and $\norm{A_t}_2 \le 1$. Summing over $t$,
\[
  \bE_\theta[\regret_T(\cA)] \;\ge\; \frac{r}{2}\sum_{i=1}^d
  \bE_\theta\Big[\sum_{t=1}^T\Big(\frac{\alpha_i}{r} - s_i A_{ti}\Big)^2\Big].
\]

\medskip
\noindent\textbf{Step 2 (Truncation).}
For each $i$ let
$\tau_i \coloneqq T \wedge \min\set{t : \sum_{s\le t}A_{si}^2 \ge T\alpha_i^2/r^2}$,
a stopping time, so that $\sum_{t\le\tau_i}A_{ti}^2 \le T\alpha_i^2/r^2 + 1$ because
$A_{\tau_i i}^2 \le 1$. Since the summands are non-negative,
\[
  \bE_\theta[\regret_T(\cA)] \;\ge\; \frac{r}{2}\sum_{i=1}^d \bE_\theta\big[U_i(s_i)\big],
  \qquad
  U_i(x) \coloneqq \sum_{t=1}^{\tau_i}\Big(\frac{\alpha_i}{r} - xA_{ti}\Big)^2,
  \quad x \in \set{\pm1},
\]
and, by $(u-v)^2 \le 2u^2 + 2v^2$ together with $\tau_i \le T$,
\begin{equation}\label{eq:Mi}
  0 \;\le\; U_i(\pm1) \;\le\; \frac{2T\alpha_i^2}{r^2}
  + 2\Big(\frac{T\alpha_i^2}{r^2}+1\Big) = \frac{4T\alpha_i^2}{r^2} + 2 \eqqcolon M_i .
\end{equation}

\medskip
\noindent\textbf{Step 3 (Cost of flipping one coordinate).} 
Fix $i$ and let $\theta'$ agree with $\theta$ except $\theta_i' = -\theta_i$, so
$\theta - \theta' = 2s_i\alpha_i e_i$.
Let $\Prob_\theta$ and $\Prob_{\theta'}$ be the laws of $U_i(1)$ with respect to the environment-learner interaction measure induced by $\theta$ and $\theta'$.
The offline design is deterministic, so
$\sum_{s\le\Toff}\ip{\aoff_s,\theta-\theta'}^2
= \Toff (\theta-\theta')^\top \bar V_{\textrm{off}}(\theta-\theta') = 4\alpha_i^2\bar\nu_i\Toff$.
By the chain rule for KL divergence up to a stopping point \citep[Chapter 15]{lattimore2020bandit} and Step 2, we have
\[
  \KL\big(\Prob_\theta \,\|\, \Prob_{\theta'}\big)
  \;\le\; 2\alpha_i^2\bar\nu_i\Toff
  + \frac12\bE_\theta\Big[\sum_{t=1}^{\tau_i}4\alpha_i^2A_{ti}^2\Big]
  \;\le\; 2\alpha_i^2\Big(\bar\nu_i\Toff + \frac{T\alpha_i^2}{r^2} + 1\Big)\,.
\]

\medskip
\noindent\textbf{Step 4 (Pairing the two signs).}
As $0 \le U_i \le M_i$, Pinsker's inequality gives
$\bE_\theta[U_i(1)] \ge \bE_{\theta'}[U_i(1)] - M_i\sqrt{\KL(\Prob_\theta\|\Prob_{\theta'})/2}$.
By the parallelogram identity
$U_i(1) + U_i(-1) = 2\sum_{t\le\tau_i}\big(\alpha_i^2/r^2 + A_{ti}^2\big)$, which is always at
least $2T\alpha_i^2/r^2$: if $\tau_i = T$ the first sum already equals $2T\alpha_i^2/r^2$, and if
$\tau_i < T$ the stopping rule fired, so $\sum_{t\le\tau_i}A_{ti}^2 \ge T\alpha_i^2/r^2$.
Hence, taking $s_i = +1$ and $s_i' = -1$ and using \eqref{eq:Mi},
\begin{equation}\label{eq:Bi}
  \bE_\theta\big[U_i(1)\big] + \bE_{\theta'}\big[U_i(-1)\big]
  \;\ge\; \frac{2T\alpha_i^2}{r^2}
  - \alpha_i\Big(\frac{4T\alpha_i^2}{r^2}+2\Big)
    \sqrt{\bar\nu_i\Toff + \frac{T\alpha_i^2}{r^2} + 1} \eqqcolon B_i .
\end{equation}

\medskip
\noindent\textbf{Step 5 (Averaging over the hypercube).} Making the dependence of $\theta$ on signs explicit, define $\theta_i(s) = s_i \alpha_i$, where $s \in \{\pm 1\}^d$. For a fixed \(i\) and \(s_{-i}\in\{\pm1\}^{d-1}\), let \(\theta(s_{-i})\) and \(\theta'(s_{-i})\) be the parameters with those other signs and with \(i\)-th signs \(+1\) and \(-1\).
Now, bounding the maximum by the average over $s \in \{\pm 1\}^d$ and grouping:
\begin{align*}
\max_{\theta(s)\in\Theta}\bE_{\theta(s)}[\regret_T(\cA)]
  & \;\ge\; \frac{1}{2^d}\sum_{s \in \{\pm 1\}^d}\bE_{\theta(s)}[\regret_T(\cA)] \\
  &\;\ge\; \frac{r}{2^{d+1}}\sum_{i=1}^d\sum_{s_{-i} \in \{\pm 1\}^{d-1}} \Big(\bE_{\theta(s_{-i})}[U_i(1)] + \bE_{\theta'(s_{-i})}[U_i(-1)]\Big) \ge\frac{r}{4}\sum_{i=1}^d B_i \,.
\end{align*}

\medskip
\noindent\textbf{Step 6 (Optimizing the radius).}
By construction $\alpha_i^2/r^2 = w^\star_i$, so summing \eqref{eq:Bi} and using
$\sum_i\alpha_i^2 = r^2$ together with \eqref{eq:f-variational} gives
\[
  \max_{\theta\in\Theta}\bE_\theta[\regret_T(\cA)]
  \;\ge\; \frac{r}{4}\big[2T - r f^\star\big] \eqqcolon g(r),
  \qquad f^\star \coloneqq \min_{w\in\Delta_d}f(w).
\]
The map $g$ is a concave quadratic, increasing on $[0, r_\star]$ with $r_\star = T/f^\star$ and
$g(r_\star) = T^2/(4f^\star)$. Two cases, matching $r = \min\set{1, T/f^\star}$:
if $f^\star \ge T$ then $r_\star \le 1$, so $r = r_\star$ and
$g(r) = T^2/(4f^\star) = \frac14\min\set{T, T^2/f^\star}$;
if $f^\star < T$ then $r = 1 < r_\star$, so
$g(1) = \frac14(2T - f^\star) \ge \frac{T}{4} = \frac14\min\set{T, T^2/f^\star}$.
\end{proof}

\corRegLB*
\begin{proof}
    First we prove the upper bound, then the lower bound.

    \textbf{Upper bound.}

    \noindent\emph{Proof plan.}
    The bound in \cref{eq:gw-oope-oracle-bound} accounts for regret phase by phase.
    We split the phases based on a target accuracy $\bar\eps$.
    Earlier phases have relatively small budgets, while later phases have small gaps, so capping each phase budget by $T$ is enough to control their total contribution.
    We choose $\bar\eps$ so that both contributions are of order $T\bar\eps$.
    The main task is then to construct suitable designs inside the smaller near-optimal set $\cA(4\eps_\ell)$, while controlling prediction errors on $\cA(12\eps_\ell)$.
    On the Euclidean ball, these sets shrink around $\opta$ as the accuracy improves.
    Although this makes the available exploration directions smaller, it also makes the action differences smaller, and these two effects will cancel.
    We exploit this by using symmetric pairs of actions near $\opta$, together with repeated pulls of $\opta$ itself.
    We first obtain a bound that ignores offline information, then use the coverage assumptions to improve it.
    For very informative offline data, we use the terminal offline certificate so that the regret bound can decrease below a constant.

    Fix $T\ge1$, $\norm{\truetheta}_2\le1$, and $\delta\in(0,1)$, and work on the event in \Cref{thm:gw-oope-upper-bound}, which has probability at least $1-\delta$.
    Recall that $\eps_\ell=2^{-\ell}$ and $\beta_\ell\le\beta_m=\sqrt{2\log(4m^2/\delta)}$.

    \medskip\noindent\emph{From a target accuracy to regret.}
    All targets $\bar\eps$ used below for the online bounds satisfy $\bar\eps\ge1/T$.
    If $\bar\eps>1/16$, then $\regret_T(\cA)\le2T\le32T\bar\eps$.
    Otherwise, let $L$ be the largest phase with $\eps_L\ge8\bar\eps$.
    Since $\eps_1=1/2$ and $\eps_m\le1/T<8\bar\eps$, we have $1\le L<m$.

    Suppose that, for every $\ell\le L$, there is a design supported on $\cA(4\eps_\ell)$ whose confidence radius on $\cA(12\eps_\ell)$ is at most $\eps_\ell$ and whose size is at most $T(\bar\eps/\eps_\ell)^2$.
    Then $n_\ell^\star\le T(\bar\eps/\eps_\ell)^2$ in these phases, and
    \[
        \sum_{\ell=1}^L\eps_\ell\min\{n_\ell^\star,T\}
        \le T\bar\eps^2\sum_{\ell=1}^L\eps_\ell^{-1}
        \le\frac{2T\bar\eps^2}{\eps_L}
        \le\frac{T\bar\eps}{4}.
    \]
    For later phases, we need no bound on $n_\ell^\star$: the cap of $T$ gives
    \[
        \sum_{\ell=L+1}^m\eps_\ell\min\{n_\ell^\star,T\}
        \le T\sum_{\ell=L+1}^m\eps_\ell
        \le2T\eps_{L+1}
        <16T\bar\eps.
    \]
    The terminal term in \cref{eq:gw-oope-oracle-bound} is at most $2T\eps_m\le2T\bar\eps$.
    Thus the theorem gives $\regret_T(\cA)\lesssim T\bar\eps$.
    It remains to construct designs with the stated sample counts.

    \medskip\noindent\emph{A comparison design inside the near-optimal cap.}
    If $\truetheta=0$, the regret is zero.
    Otherwise, write $\opta=\truetheta/\norm{\truetheta}_2$ and set $\rho_\ell^2=\min\{1,4\eps_\ell/\norm{\truetheta}_2\}$.
    For every $a\in\cA(12\eps_\ell)$, the gap condition gives $\norm{\truetheta}_2(1-a^\top\opta)\le12\eps_\ell$.
    Since $\norm{a}_2\le1$ and $\norm{\opta}_2=1$,
    \begin{equation}
        \norm{a-\opta}_2^2
        \le 2(1-a^\top\opta)
        \le \min\left\{4,\frac{24\eps_\ell}{\norm{\truetheta}_2}\right\}
        \le 6\rho_\ell^2.
        \label{eq:gw-oope-spectral-cap-diameter}
    \end{equation}
    Thus every action whose error we need to control lies within distance $\sqrt6\rho_\ell$ of $\opta$.

    Let $U$ be an $s$-dimensional subspace in which we want to acquire information, with $s\ge1$, and write $P_S$ for the orthogonal projector onto a subspace $S$.
    Project $U$ onto the tangent space at $\opta$: let $W=P_{(\opta)^\perp}U$ and choose an orthonormal basis $w_1,\ldots,w_q$ of $W$, where $q\le s$.
    To learn in a direction $w_j\perp\opta$, we would like to sample near-optimal actions with as large a component in that direction as possible. 
    Consider a unit action of the form $c_1\opta\pm c_2w_j$, with $c_2\ge0$. 
    By Pythagoras, $c_2^2=1-c_1^2$, while membership in $\cA(4\eps_\ell)$ requires $\norm{\truetheta}_2(1-c_1)\le4\eps_\ell$.
    When $4\eps_\ell\le\norm{\truetheta}_2$, the largest perpendicular component is therefore obtained by taking $c_1=1-4\eps_\ell/\norm{\truetheta}_2=1-\rho_\ell^2$.
    Its squared length is $c_2^2=1-(1-\rho_\ell^2)^2=\rho_\ell^2(2-\rho_\ell^2)$, which lies between $\rho_\ell^2$ and $2\rho_\ell^2$.
    When $4\eps_\ell\ge\norm{\truetheta}_2$, the actions $\pm w_j$ already belong to $\cA(4\eps_\ell)$, so the largest perpendicular component is $1=\rho_\ell$.

    Thus, in either case, $\rho_\ell$ is within a factor $\sqrt2$ of the largest available perpendicular component. 
    We use this value rather than the exact maximum to keep the expressions simple, and choose $x_{j,\pm}=\sqrt{1-\rho_\ell^2}\,\opta\pm\rho_\ell w_j$.
    These actions have unit norm, and $1-\sqrt{1-u}\le u$ for $u\in[0,1]$ gives
    \begin{align*}
        \ip{\opta-x_{j,\pm},\truetheta}
        = \norm{\truetheta}_2\bigl(1-\sqrt{1-\rho_\ell^2}\bigr)
        \le \norm{\truetheta}_2\rho_\ell^2
        \le 4\eps_\ell.
    \end{align*}
    Hence $x_{j,\pm}\in\cA(4\eps_\ell)$, as required.

    For an integer $n\ge8s$ and $q\ge1$, take $r=\lfloor n/(4q)\rfloor$ copies of each $x_{j,+}$ and $x_{j,-}$, and put all remaining pulls on $\opta$.
    Denote the resulting design by $N_{\ell,n}(U)$.
    At least $n/2$ pulls are assigned to $\opta$, and $r\ge n/(8q)$.
    The symmetric pairs cancel their cross terms, giving
    \begin{equation}
        V(N_{\ell,n}(U))
        =(n-2qr\rho_\ell^2)\opta(\opta)^\top+2r\rho_\ell^2P_W
        \succeq\frac n2\opta(\opta)^\top+\frac{n\rho_\ell^2}{4s}P_W.
        \label{eq:gw-oope-integer-cap-radial}
    \end{equation}
    When $q=0$, take all $n$ pulls at $\opta$; the same lower bound holds with $P_W=0$.
    Since $U\subseteq\operatorname{span}(\opta,W)$ and $\rho_\ell^2/(4s)\le1/2$, it follows that
    \begin{equation}
        V(N_{\ell,n}(U))\succeq\frac{n\rho_\ell^2}{4s}P_U.
        \label{eq:gw-oope-integer-cap-subspace}
    \end{equation}
    Also, for every positive definite $M\succeq V(N_{\ell,n}(U))$,
    \begin{equation}
        \norm{\opta}_{M^{-1}}^2\le\frac2n.
        \label{eq:gw-oope-integer-cap-optimal-action}
    \end{equation}
    Indeed, since $M\succeq(n/2)\opta(\opta)^\top$ we have that $(n/2)\norm{\opta}_{M^{-1}}^2\le1$.

    To turn these information bounds into a confidence radius, translate the cap by $\opta$.
    Translation does not change Gaussian width, and \cref{eq:gw-oope-spectral-cap-diameter} bounds the translated cap by a ball of radius $\sqrt6\rho_\ell$.
    For $g\sim\cN(0,I_d)$, its Gaussian width is therefore at most $\sqrt6\rho_\ell\,\bE\norm{M^{-1/2}g}_2\le\sqrt6\rho_\ell\sqrt{\tr(M^{-1})}$.
    The triangle inequality also gives $\norm{a}_{M^{-1}}\le\norm{\opta}_{M^{-1}}+\sqrt6\rho_\ell\sqrt{\lambda_{\max}(M^{-1})}$ on the cap.
    Combining the two parts of the confidence radius,
    \begin{equation}
        \rad_{\beta_\ell}(\cA(12\eps_\ell),M)
        \lesssim
        \rho_\ell\sqrt{\tr(M^{-1})}
        +\beta_m\left(\norm{\opta}_{M^{-1}}
        +\rho_\ell\sqrt{\lambda_{\max}(M^{-1})}\right).
        \label{eq:gw-oope-cap-confidence-radius}
    \end{equation}
    We reserved a constant fraction of the pulls for $\opta$ to control $\norm{\opta}_{M^{-1}}$, since when $\rho_\ell$ is large the component $\sqrt{1 - \rho_\ell^2}$ in $\opta$ from $x_{j, \pm}$ is small.
    Note that the design we chose is just for analysis purposes and the algorithm does not need to know $\opta$ or $U$.

    \medskip\noindent\emph{A bound that does not use offline coverage.}
    Take $U=\bR^d$ and $N=N_{\ell,n}(\bR^d)$ for $n\ge8d$.
    Ignoring the positive semidefinite offline matrix, \cref{eq:gw-oope-integer-cap-subspace} gives $\tilde V_\ell(N)\succeq n\rho_\ell^2I_d/(4d)$.
    Thus
    \[
        \rho_\ell^2\tr(\tilde V_\ell(N)^{-1})\le\frac{4d^2}{n},
        \qquad
        \rho_\ell^2\lambda_{\max}(\tilde V_\ell(N)^{-1})\le\frac{4d}{n}.
    \]
    Together with \cref{eq:gw-oope-integer-cap-optimal-action,eq:gw-oope-cap-confidence-radius}, these inequalities imply
    \begin{equation}
        \rad_{\beta_\ell}(\cA(12\eps_\ell),\tilde V_\ell(N))
        \lesssim\frac{d+\sqrt d\,\beta_m}{\sqrt n},
        \qquad n\ge8d.
        \label{eq:gw-oope-online-comparison-radius}
    \end{equation}
    The factor $\rho_\ell^2$ in the available information has cancelled the factor coming from the diameter of the cap.

    Choose a sufficiently large numerical constant $C$ and set $\bar\eps=C(d+\sqrt d\,\beta_m)/\sqrt T$.
    This target satisfies $\bar\eps\ge1/T$.
    As above, the case $\bar\eps>1/16$ follows from the trivial regret bound.
    For a phase with $\eps_\ell\ge8\bar\eps$, take $n=\lfloor T(\bar\eps/\eps_\ell)^2\rfloor$.
    Since $\eps_\ell\le1/2$, the quantity inside the floor is at least $4C^2d^2\ge2$, and hence
    \[
        n\ge\frac T2\left(\frac{\bar\eps}{\eps_\ell}\right)^2
        \ge2C^2d^2\ge8d,
        \qquad
        \frac{d+\sqrt d\,\beta_m}{\sqrt n}
        \le\frac{\sqrt2}{C}\eps_\ell.
    \]
    For sufficiently large $C$, \cref{eq:gw-oope-online-comparison-radius} therefore certifies radius at most $\eps_\ell$.
    Thus $n_\ell^\star\le n\le T(\bar\eps/\eps_\ell)^2$, and the target-accuracy argument gives
    \begin{equation}
        \regret_T(\cA)\lesssim(d+\sqrt d\,\beta_m)\sqrt T.
        \label{eq:gw-oope-online-regret}
    \end{equation}
    This bound holds for any offline information matrix.

    \medskip\noindent\emph{Equally-covered offline data.}
    The coverage condition gives $\Voff[\ell]\succeq c\Toff I_d/(md)$.
    Since $\cA(12\eps_\ell)\subseteq B_2^d$, the zero design satisfies
    \begin{equation}
        \rad_{\beta_\ell}(\cA(12\eps_\ell),\Voff[\ell])
        \le\sqrt{\tr((\Voff[\ell])^{-1})}
        +\beta_m\sqrt{\lambda_{\max}((\Voff[\ell])^{-1})}
        \le\frac{d+\sqrt d\,\beta_m}{\sqrt{c\Toff/m}}.
        \label{eq:gw-oope-equally-covered-offline-radius}
    \end{equation}
    Set $\bar\eps=(d+\sqrt d\,\beta_m)/\sqrt{c\Toff/m}$.
    If $\bar\eps>1/2$, then $\regret_T(\cA)\le2T\le4T\bar\eps$.
    Otherwise, let $L$ be the largest phase with $\eps_L\ge\bar\eps$.
    The zero design attains the required accuracy in every phase $\ell\le L$, so $n_\ell^\star=0$.
    If $L<m$, the remaining terms in the sum satisfy
    \[
        \sum_{\ell=L+1}^m\eps_\ell\min\{n_\ell^\star,T\}
        \le T\sum_{\ell=L+1}^m\eps_\ell
        \le2T\eps_{L+1}
        <2T\bar\eps.
    \]
    If $L=m$, the entire sum is zero.
    In either case, \cref{eq:gw-oope-equally-covered-offline-radius} bounds the terminal term by $2T\bar\eps$, so \cref{eq:gw-oope-oracle-bound} gives $\regret_T(\cA)\lesssim T\bar\eps$.
    Unlike the online target-accuracy argument, this reasoning places no lower bound on $\bar\eps$.
    It therefore remains valid when the offline error is much smaller than $1/T$.

    We have obtained $\regret_T(\cA)\lesssim T(d+\sqrt d\,\beta_m)/\sqrt{\Toff/m}$.
    Both this bound and \cref{eq:gw-oope-online-regret} hold on the same event.
    Taking the smaller, and using $\max\{T,\Toff/m\}\ge(T+\Toff/m)/2$, yields
    \begin{equation}
        \regret_T(\cA)
        \lesssim\frac{T(d+\sqrt d\,\beta_m)}{\sqrt{T+\Toff/m}}.
        \label{eq:gw-oope-equally-covered-regret}
    \end{equation}
    This proves the equally-covered bound.

    \medskip\noindent\emph{$k$-uncovered offline data.}
    First suppose $1\le k<d$.
    Let $u_1,\ldots,u_d$ be orthonormal eigenvectors of $\Voff$, ordered by increasing eigenvalue, and let $U=\operatorname{span}\{u_1,\ldots,u_k\}$.
    For $n\ge8k$, take $N=N_{\ell,n}(U)$.
    The coverage assumption and \cref{eq:gw-oope-integer-cap-subspace} give
    \[
        \tilde V_\ell(N)
        \succeq\frac{n\rho_\ell^2}{4k}P_U
        +\frac{(d-k)T}{km}P_{U^\perp}.
    \]
    This bound controls the information separately in $U$ and $U^\perp$, so its inverse can be handled one subspace at a time.
    Inverting it and summing its eigenvalues gives the trace bound below; taking the largest inverse eigenvalue gives the second bound:
    \begin{align*}
        \rho_\ell^2\tr(\tilde V_\ell(N)^{-1})
        &\le\frac{4k^2}{n}+\frac{mk\rho_\ell^2}{T}
        \le\frac{4k^2}{n}+\frac{mk}{T},
        \\
        \rho_\ell^2\lambda_{\max}(\tilde V_\ell(N)^{-1})
        &\le\max\left\{\frac{4k}{n},
        \frac{mk\rho_\ell^2}{(d-k)T}\right\}
        \le\frac{4k}{n}+\frac{mk}{T}.
    \end{align*}
    Using \cref{eq:gw-oope-integer-cap-optimal-action,eq:gw-oope-cap-confidence-radius} and $\sqrt{x+y}\le\sqrt x+\sqrt y$, we obtain
    \begin{equation}
        \rad_{\beta_\ell}(\cA(12\eps_\ell),\tilde V_\ell(N))
        \lesssim
        \frac{k+\sqrt k\,\beta_m}{\sqrt n}
        +\beta_m\sqrt{\frac{mk}{T}}.
        \label{eq:gw-oope-k-uncovered-comparison-radius}
    \end{equation}
    Here $\beta_m\ge1$ absorbs the width contribution $\sqrt{mk/T}$ from the covered directions, while $k\ge1$ absorbs the term $\beta_m/\sqrt n$ for the error at $\opta$.
    The first term decreases with the online sample count; the second comes from the offline information in the other $d-k$ directions.

    Set $\bar\eps=C(k+\beta_m\sqrt{mk})/\sqrt T$ for a sufficiently large numerical constant $C$.
    Again $\bar\eps\ge1/T$, and $\bar\eps>1/16$ is handled by the trivial bound.
    For each phase with $\eps_\ell\ge8\bar\eps$, take $n=\lfloor T(\bar\eps/\eps_\ell)^2\rfloor$.
    The same rounding calculation gives $n\ge(T/2)(\bar\eps/\eps_\ell)^2\ge2C^2k^2\ge8k$.
    Since $m\ge1$,
    \[
        \frac{k+\sqrt k\,\beta_m}{\sqrt n}
        \le\frac{\sqrt2}{C}\eps_\ell,
        \qquad
        \beta_m\sqrt{\frac{mk}{T}}
        \le\frac{\bar\eps}{C}
        \le\frac{\eps_\ell}{8C}.
    \]
    For large enough $C$, both terms in \cref{eq:gw-oope-k-uncovered-comparison-radius} therefore sum to at most $\eps_\ell$.
    Hence $n_\ell^\star\le n\le T(\bar\eps/\eps_\ell)^2$, and the target-accuracy argument yields
    \begin{equation}
        \regret_T(\cA)
        \lesssim k\sqrt T+\beta_m\sqrt{mkT},
        \qquad 1\le k<d.
        \label{eq:gw-oope-k-uncovered-regret}
    \end{equation}
    For $k=d$, there are no covered directions to account for, and \cref{eq:gw-oope-online-regret} already gives the sharper bound $\regret_T(\cA)\lesssim d\sqrt T+\beta_m\sqrt{dT}$ without the factor $\sqrt m$.

    These high-probability bounds hold for the arbitrary confidence parameter $\delta$ fixed at the start.
    Each can also be capped by the deterministic bound $2T$.

    \medskip\noindent\emph{Conversion to expected regret.}
    The preceding bounds hold for an arbitrary choice of $\delta\in(0,1)$.
    Since $\regret_T(\cA)\le2T$ is always true, taking expectations adds at most $2T\delta$ to the corresponding high-probability bound.
    Run \gwoope{} with $\delta=1/[2(T+\Toff)]$ in the equally-covered case and with $\delta=1/(2T)$ in the $k$-uncovered case.
    The failure-event contributions are then $T/(T+\Toff)$ and $1$, respectively.
    They are absorbed by the main terms, since $T/(T+\Toff)\le T/\sqrt{T+\Toff}\le Td/\sqrt{T+\Toff/m}$ and $1\le k\sqrt T$.

    Substituting these choices into $\beta_m=\sqrt{2\log(4m^2/\delta)}$ and applying
    \cref{eq:gw-oope-equally-covered-regret,eq:gw-oope-k-uncovered-regret,eq:gw-oope-online-regret} gives
    \begin{equation}
        \bE\regret_T(\cA)\lesssim
        \begin{cases}
            \displaystyle
            \min\left\{T,\,
            \frac{T\bigl(d+\sqrt{d\log(8m^2(T+\Toff))}\bigr)}
                {\sqrt{T+\Toff/m}}\right\},
            &\text{equally-covered},
            \\[9pt]
            \displaystyle
            \min\left\{T,\,
            k\sqrt T+\sqrt{mkT\log(8m^2T)}\right\},
            &\text{$k$-uncovered},\quad 1\le k<d,
            \\[7pt]
            \displaystyle
            \min\left\{T,\,
            d\sqrt T+\sqrt{dT\log(8m^2T)}\right\},
            &\text{$d$-uncovered}.
        \end{cases}
        \label{eq:gw-oope-spectral-expected-regret}
    \end{equation}
    The minima use the deterministic bound $2T$.
    Finally, $T+\Toff/m\ge(T+\Toff)/m$ and $m=\max\{1,\lceil\log_2 T\rceil\}$.
    Thus these bounds give the rates $dT/\sqrt{T+\Toff}$ and $k\sqrt T$ up to logarithmic factors.

    \textbf{Lower Bound.}
    
Both follow from \Cref{thm:reg-lb} by exhibiting a feasible $w \in \Delta_d$ and bounding
$f(w)$ from above; since $f^\star \le f(w)$, this lower bounds $T^2/f^\star$. Throughout write
$\nu_i \coloneqq \bar\nu_i\Toff$ for the eigenvalues of the unnormalized offline covariance, in
which $f(w) = \sum_i\sqrt{w_i}(4Tw_i+2)\sqrt{\nu_i + Tw_i + 1}$.

\medskip
\noindent\textbf{Equally covered ($\nu_i \ge c\Toff/d$ for all $i$; $d \le 2T$).}
Take $w_i = 1/d$. Then using Jensen's inequality,
\[
  f(w) \le d\cdot\frac{1}{\sqrt d}\Big(\frac{4T}{d}+2\Big)
         \sqrt{\frac{\Toff}{d} + \frac{T}{d} + 1}.
\]
Since $d \le 2T$ we have $2 \le 4T/d$, so $4T/d + 2 \le 8T/d$; and
$1 \le 2T/d \le 2(T+\Toff)/d$, so the radicand is at most $3(T+\Toff)/d$. Hence
\[
  f^\star \;\le\; f(w) \;\le\; \frac{8\sqrt3\,T\sqrt{T+\Toff}}{d},
  \qquad\text{so}\qquad
  \frac{T^2}{f^\star} \;\ge\; \frac{1}{8\sqrt3}\cdot\frac{dT}{\sqrt{T+\Toff}} .
\]
For $d \le 8\sqrt3\sqrt{T+\Toff}$, the right-hand side is at most $T$, so the same bound holds for
$\min\set{T, T^2/f^\star}$, and \Cref{thm:reg-lb} gives
$\max_{\theta\in\Theta}\bE_\theta[\regret_T(\cA)] \ge \frac{1}{32\sqrt3}\,dT/\sqrt{T+\Toff}$.

\medskip
\noindent\textbf{$k$-uncovered ($\nu_i \le 1$ for $i \le k$, $\nu_i \ge (d-k)T/k$ for $i > k$;
$k \le T$).}
Take $w$ uniform on the uncovered block, $w_i = 1/k$ for $i \le k$ and $w_i = 0$ otherwise. Every
term with $i > k$ carries the prefactor $\sqrt{w_i} = 0$ and vanishes, whatever $\nu_i$ is; for
$i \le k$ we use $\nu_i \le 1$:
\[
  f(w) = \frac{1}{\sqrt{k}}\Big(\frac{4T}{k}+2\Big) \sum_{i=1}^k \sqrt{\nu_i + \frac{T}{k} + 1}
  \;\le\; \sqrt k\Big(\frac{4T}{k}+2\Big)\sqrt{\frac{T}{k}+2}
  \;\le\; \sqrt k\cdot\frac{8T}{k}\cdot\sqrt{\frac{3T}{k}}
  = \frac{8\sqrt3\,T^{3/2}}{k},
\]
where $k \le T$ gives both $2 \le 4T/k$ and $2 \le 2T/k$. Hence
$T^2/f^\star \ge k\sqrt T/(8\sqrt3)$, which is at most $T$ since $k \le 8\sqrt3\sqrt T$, so
\Cref{thm:reg-lb} gives
$\max_{\theta\in\Theta}\bE_\theta[\regret_T(\cA)] \ge \frac{1}{32\sqrt3}\,k\sqrt T$.

\end{proof}

\section{Technical Results} \label{app:technical-results}

\begin{restatable}[Lipschitz Concentration {\citep[Theorem~5.6]{boucheron2013concentration}}]{lemma}{lemLipschitzConcentration}
    \label{lem:lipschitz-concentration}
    Let $g\sim\cN(0,I_d)$ and let $f:\bR^d\to\bR$ be
    $L$-Lipschitz in Euclidean distance, meaning that
    \begin{align*}
        |f(z)-f(z')|\le L\norm{z-z'}_2
        \qquad\text{for all }z,z'\in\bR^d.
    \end{align*}
    For every $\delta\in(0,1)$,
    \begin{align*}
        \bP\left(
            f(g)\le\bE f(g)+L\sqrt{2\log(1/\delta)}
        \right)\ge1-\delta.
    \end{align*}
\end{restatable}

\begin{restatable}[Sudakov--Fernique inequality {\citep[Theorem~7.2.8]{vershynin2026high}}]{lemma}{lemSudakovFernique}
    \label{lem:sudakov-fernique}
    Let $(X_t)_{t \in T}$ and $(Y_t)_{t \in T}$ be two mean-zero Gaussian processes.
    Assume that for all $t, s \in T$, we have
    \begin{align*}
        \bE\brackets*{(X_t-X_s)^2}
        \le
        \bE\brackets*{(Y_t-Y_s)^2}.
    \end{align*}
    Then
    \begin{align*}
        \bE\sup_{t \in T}X_t
        \le
        \bE\sup_{t \in T}Y_t.
    \end{align*}
\end{restatable}

\end{document}